\documentclass{article}

\usepackage[preprint]{neurips_2025}

\usepackage[utf8]{inputenc} 
\usepackage[T1]{fontenc}    
\usepackage{hyperref}       
\usepackage{url}            
\usepackage{booktabs}       
\usepackage{amsfonts}       
\usepackage{nicefrac}       
\usepackage{microtype}      
\usepackage{xcolor}         

\usepackage{kotex}
\usepackage{algorithm}
\usepackage{algorithmic}
\usepackage{multirow}
\usepackage{multicol}
\usepackage{graphicx}

\usepackage{amsmath}    
\usepackage{amssymb}    
\usepackage{mathtools}  
\usepackage{amsthm}     
\usepackage{bm}         

\newcommand{\bs}{\boldsymbol}

\renewcommand{\l}{\lambda}

\newtheorem{theorem}{Theorem}[section]
\newtheorem{lemma}[theorem]{Lemma}

\newtheorem{definition}[]{Definition}

\usepackage{booktabs}               
\usepackage{subcaption}             
\usepackage{caption}                
\usepackage[table]{xcolor}

\usepackage{graphicx}
\usepackage{booktabs}     
\usepackage{multirow}
\usepackage{adjustbox}    
\usepackage{subcaption}   
\usepackage{wrapfig}
\usepackage{capt-of}
\usepackage{makecell}  
\usepackage{enumitem}

\usepackage{array}   
\newcolumntype{M}[1]{>{\centering\arraybackslash}m{#1}}
\usepackage{array, booktabs}
\newcolumntype{M}[1]{>{\centering\arraybackslash}m{#1}}
\usepackage{booktabs, colortbl, tabularx, array}  
\newcolumntype{Y}{>{\centering\arraybackslash}X}  

\renewcommand{\arraystretch}{0.92}  
\usepackage{graphicx, adjustbox, booktabs, array}
\newcolumntype{C}{>{\centering\arraybackslash}m{0.32\textwidth}} 
\usepackage{color}
\usepackage{bm}
\usepackage{hyperref}
\usepackage{amsmath,amsfonts}
\usepackage{blindtext}
\usepackage{listings}
\usepackage{mathtools}
\usepackage[framemethod=TikZ]{mdframed}

\usepackage{booktabs}   
\usepackage{makecell}   
\usepackage{pifont}     

\DeclarePairedDelimiterX{\infdivx}[2]{(}{)}{%
	#1\;\delimsize\|\;#2%
}

\newcommand{\vect}[1]{\bm{#1}}

\newcommand{\argmin}{\operatornamewithlimits{argmin}}

\newcommand{\x}{\xv}

\newcommand{\xv}{\vect x}

\makeatletter
\newtheorem*{rep@theorem}{\rep@title}
\newcommand{\newreptheorem}[2]{%
\newenvironment{rep#1}[1]{%
 \def\rep@title{#2 \ref{##1}}%
 \begin{rep@theorem}}%
 {\end{rep@theorem}}}
\makeatother

\newreptheorem{proposition}{Proposition}
\newreptheorem{theorem}{Theorem}
\newreptheorem{lemma}{Lemma}

\newcommand{\cmark}{\ding{51}}  
\newcommand{\xmark}{\ding{55}}  

\title{RBF-Solver: A Multistep Sampler for Diffusion Probabilistic Models via Radial Basis Functions}

\author{%
  \textbf{Soochul Park}\textsuperscript{1,6} \quad
  \textbf{Yeon Ju Lee}\textsuperscript{2,6} \quad
  \textbf{SeongJin Yoon}\textsuperscript{3,6} \\ [4pt]
  \textbf{Jiyub Shin}\textsuperscript{5} \quad
  \textbf{Juhee Lee}\textsuperscript{4,6} \quad
  \textbf{Seongwoon Jo}\textsuperscript{6} \\ [6pt]
  \textsuperscript{1}SteAI \quad \textsuperscript{2}Korea University \quad \textsuperscript{3}SoftAI \\
  \textsuperscript{4}Bogonet \quad \textsuperscript{5}SBS \quad \textsuperscript{6}MODULABS \\ [4pt]
  \texttt{scpark20@gmail.com}
}

\begin{document}

\maketitle

\begin{abstract}
Diffusion probabilistic models (DPMs) are widely adopted for their outstanding generative fidelity, yet their sampling is computationally demanding. Polynomial-based multistep samplers mitigate this cost by accelerating inference; however, despite their theoretical accuracy guarantees, they generate the sampling trajectory according to a predefined scheme, providing no flexibility for further optimization. To address this limitation, we propose RBF-Solver, a multistep diffusion sampler that interpolates model evaluations with Gaussian radial basis functions (RBFs). By leveraging learnable shape parameters in Gaussian RBFs, RBF-Solver explicitly follows optimal sampling trajectories. At first order, it reduces to the Euler method (DDIM). At second order or higher, as the shape parameters approach infinity, RBF-Solver converges to the Adams method, ensuring its compatibility with existing samplers. Owing to the locality of Gaussian RBFs, RBF-Solver maintains high image fidelity even at fourth order or higher, where previous samplers deteriorate. For unconditional generation, RBF-Solver consistently outperforms polynomial-based samplers in the high-NFE regime ($\text{NFE}\ge15$). On CIFAR-10 with the Score-SDE model, it achieves an FID of 2.87 with 15 function evaluations and further improves to 2.48 with 40 function evaluations. For conditional ImageNet 256$\times$256 generation with the Guided Diffusion model at a guidance scale 8.0, substantial gains are achieved in the low-NFE range (5–10), yielding a 16.12–33.73\% reduction in FID relative to polynomial-based samplers.
\end{abstract}

\section{Introduction}
\label{sec:introduction}

\iftrue
Diffusion Probabilistic Models (DPMs)~\citep{sohl2015deep, ho2020denoising, song2021score} have demonstrated state-of-the-art performance in various generative modeling tasks, including image~\cite{ho2020denoising,dhariwal2021diffusion,rombach2022high,ho2022cascaded}, video~\cite{ho2022video,blattmann2023align,esser2023structure}, speech synthesis~\citep{chen2020wavegrad,liu2022diffsinger,chen2021wavegrad}, and text-to-image generation~\cite{rombach2022high, nichol2021glide, gu2022vector}. 
Despite their expressiveness, DPMs are hindered by the high computational cost of iterative denoising steps~\citep{ho2020denoising}.
To mitigate this, fast samplers based on exponential integrators have been proposed~\citep{Zhang2023fast, lu2022dpm}, offering accelerated convergence through accurate integral approximation. More recently, high-order multistep methods~\citep{lu2022dpmpp, zhao2023unipc} have gained attention for further improving sampling efficiency. By reusing model predictions, these methods reduce redundant evaluations and enable stable long-step integration. Such frameworks often rely on polynomial-based approximations, including Taylor expansion and Lagrange interpolation, which—despite their simplicity—may suffer from limited flexibility and numerical instability in high-order settings.
\fi

Radial Basis Function (RBF) interpolation is a well-established method for interpolation problems with accuracy analysis and local approximation properties~\citep{Lee2013rbf, Fornberg2002FlatRBF}.
Algebraic polynomial interpolation is commonly used in data approximation but 
its lack of flexibility
limits its ability to capture localized features in the data 
and causes it to suffer from the \textit{Runge phenomenon} unless nodes are specially distributed.
In contrast, RBF interpolation can represent data more effectively by selecting an appropriate shape parameter for each local region, which enables it to adapt to the underlying features of the data and produce smooth, non-oscillatory results. 
For this reason, RBFs are applied in various domains such as data approximation, signal processing, and the numerical solution of partial differential equations,
and they are known to be effective in high-dimensional and scattered data~\cite{lazzaro2002radial,fornberg2007runge,Dyn2007GaussianSubdivision,lee2010nonlinear,fornberg2011stable,Yoon2022WENO_RBF}.
Among various RBFs, the Gaussian is the most widely used due to its stability and robustness. In particular, Gaussian-based methods have been proposed in~\cite{Dyn2007GaussianSubdivision,lee2010nonlinear,Yoon2022WENO_RBF} for subdivision schemes, image upsampling, and hyperbolic conservation laws, respectively, along with reliable ranges of Gaussian parameters for each problem.

In this work, we propose RBF-Solver, a multistep sampler based on Gaussian RBF interpolation. The proposed method is theoretically grounded, satisfies the coefficient summation condition (Section~\ref{sec:Coefficients-Summation-Condition}), converges to the Adams method~\cite{butcher2016numerical} as the shape parameters tend to infinity (Section~\ref{sec:Lagrange Convergence}), and offers formal accuracy guarantees (Section~\ref{sec:Accuracy}). We also introduce an optimization algorithm for the shape parameters (Section~\ref{sec:Learning of Shape Parameter}).
RBF-Solver is evaluated on both unconditional and conditional generation tasks (Section~\ref{sec:Experiments}).
In unconditional generation, RBF-Solver improves FID scores on CIFAR-10~\cite{Krizhevsky09learningmultiple} using the Score-SDE model~\cite{song2021score}, achieving an FID of 2.87 with 15 function evaluations and 2.48 with 40 function evaluations. For conditional generation on ImageNet 128$\times$128 and 256$\times$256~\cite{deng2009imagenet} with the Guided-Diffusion model~\cite{dhariwal2021diffusion}, the proposed method yields substantial gains at guidance scales 6.0 and 8.0, effectively reducing FID in the low-NFE regime (5–10) compared to the baseline methods.
Finally, RBF-Solver maintains sample quality even at fourth order and above, where polynomial-based samplers deteriorate (Section~\ref{sec:high_order_stability}).

\section{Background}
\subsection{Diffusion Probabilistic Models}
Assume that we have a $D$-dimensional random variable $\bm{x}_0\in\mathbb{R}^{D}$ with an unknown distribution $q_0(\bm{x}_0)$. 
Diffusion Probabilistic Models (DPMs)~\citep{sohl2015deep, ho2020denoising, song2021score} define the process of gradually adding noise to $\bm{x}_0 \sim q_0(\bs x_0)$ at time $0$  to transform it into $q_T(\bs x_T) \approx \mathcal N(\bs x_T|\bm 0,\tilde \sigma^2 \bm I)$  at time $T$ for some $\tilde \sigma > 0$, such that for any $t\in [0,T]$, the distribution of $\bm{x}_t$ conditioned on $\bm{x}_0$ satisfies
\begin{align}
    q_{t|0}(\bs x_t|\bs x_0) = \mathcal N(\bm x_t|\alpha_t \bs x_0, \sigma_t^2 \bm I),\nonumber
\end{align}
where $\alpha_t^2/\sigma^2_t$ (the {\it signal-to-noise-ratio} (SNR)) is strictly decreasing w.r.t. $t$ \cite{kingma2021variational}.
There are two main parameterization strategies: (1) a \emph{noise prediction model} $\bs \epsilon_{\theta}(\bs x_t, t)$, which estimates the added noise $\bs \epsilon$, and whose parameter $\theta$ is optimized by minimizing the objective
    $\mathbb E_{\bs x_0, \bs \epsilon, t}[\omega(t) \| \bs \epsilon_{\theta}(\bs x_t,t) -\bs \epsilon\|^2_2] \nonumber$
where the weight function $\omega(t)>0$, and (2) a \emph{data prediction model} $\bm x_\theta(\bm x_t, t)$ that attempts to directly predicts $\bm x_0$. These two models are closely related via a deterministic transform, $\bm x_\theta(\bm x_t,t) \coloneqq (\bm x_t - \sigma_t\bm{\epsilon}_\theta(\bm x_t,t)) / \alpha_t$~\cite{kingma2021variational}.
Sampling in DPMs is performed by solving the diffusion ODE, the exact form of which depends on whether a $\bs \epsilon_{\theta}(\bs x_t, t)$~\cite{lu2022dpm} or a $\bm x_\theta(\bm x_t, t)$~\cite{lu2022dpmpp} is used. In this paper, we focus on the data prediction model $\bm x_\theta(\bm x_t, t)$ and explain the sampling process defined as follows
\begin{equation}
\label{eq:diffusion_ode_x0}
    \frac{\mathrm{d} \bm{x}_t}{\mathrm{d} t} = \left(f(t) + \frac{g^2(t)}{2\sigma_t^2} \right)\bm{x}_t - \frac{\alpha_t g^2(t)}{2\sigma^2_t}\bm{x}_\theta(\bm{x}_t,t), 
\end{equation}
where the coefficients $f(t)= \frac{\mathrm{d} \log \alpha_t}{\mathrm{d} t}$, $g^2(t)=\frac{\mathrm{d}\sigma_t^2}{\mathrm{d} t} - 2\frac{\mathrm{d} \log\alpha_t}{\mathrm{d} t}\sigma_t^2$~\citep{lu2022dpm,kingma2021variational}.

\subsection{High-Order Diffusion ODE Solvers}
Recently, high-order diffusion ODE solvers~\citep{lu2022dpm, lu2022dpmpp, zhao2023unipc, zhang2022fast} that leverage exponential integrators~\citep{hochbruck2010exponential} have shown significantly faster convergence than traditional methods that use either black-box ODE solvers~\citep{song2021score} or first-order and second-order diffusion ODE solvers~\citep{song2020denoising, karras2022elucidating}.
In particular, \citep{lu2022dpm, lu2022dpmpp}  propose the solution $\bm{x}_t$ of the diffusion ODE, where the domain is changed from the time $t$ to the half log-SNR $\lambda$ via a change-of-variables formula. 
Given an initial value $\bm{x}_s$ at time $s>0$, the solution $\bm{x}_t$ to the diffusion ODE (\ref{eq:diffusion_ode_x0}) as proposed by \citep{lu2022dpmpp} is given by:
\begin{equation}
\label{eq:exact_solution_x0}
    \bm{x}_t = \frac{\sigma_t}{\sigma_s}\bm{x}_s + 
                \sigma_t \int_{\lambda_s}^{\lambda_t} 
                e^{\lambda} 
                \bm{\hat x}_\theta(\bm{\hat x}_\lambda,\lambda)
                \mathrm{d}\lambda,
\end{equation}
where $ \bm{\hat x}_\theta(\bm{\hat x}_\lambda,\lambda)\coloneqq \bm{x}_\theta(\bm{x}_{t_\lambda(\lambda)},t_\lambda(\lambda))$ and 
$\bm{\hat x}_{\lambda}\coloneqq \bm{x}_{t_\lambda(\lambda)}$ 
represents the change-of-variables form in the $\lambda$-domain 
and the half log-SNR, \textit{i.e.},  $\lambda(t)\coloneqq\lambda_t\coloneqq\log(\alpha_t / \sigma_t)$
is a strictly decreasing function of $t$ with the inverse function $t_\lambda(\cdot)$ satisfying $t=t_\lambda(\lambda(t))$.

\subsection{Multistep Methods for Fast Sampling of DPMs}
\label{sec:Multistep Methods for Fast Sampling of DPMs}
To compute the transition from $\bm{x}_{t_i}$ to $\bm{x}_{t_{i+1}}$ in Eq.~\eqref{eq:exact_solution_x0} 
where the $M + 1$ time steps $\left\{t_i\right\}_{i=0}^M$ are strictly decreasing from $t_0 = T$ to $t_M = 0$, we need to approximate the exponentially weighted integral of $\bm{\hat x}_\theta$ from $\lambda_{t_i}$ to $\lambda_{t_{i+1}}$. 
Since integrating $\bm{\hat x}_\theta$ is intractable, 
this issue is addressed by approximating $\bm{\hat x}_\theta$
using methods such as Taylor expansion or Lagrange interpolation.
  
\paragraph{Taylor Approximation-based Approach} 
For $p\geq 1$, the Taylor expansion of $\bm{\hat x}_\theta(\bm{\hat x}_{\lambda},\lambda)$ w.r.t. $\lambda$ at time $t_i$ is a polynomial with degree $p-1$ given by:
\begin{equation}
    \label{eq:taylor-epsilonv}
    \bm{\hat x}_\theta(\bm{\hat x}_\lambda, \lambda) =
    \sum_{k=0}^{p-1} \frac{(\lambda-\lambda_{t_i})^k}{k!} 
    \bm{\hat x}_\theta^{(k)}(\bm{\hat x}_{\lambda_{t_i}},\lambda_{t_i})
    + {\mathcal O}(h^{p}),
\end{equation}
where the $k$-th order derivative $\bm{\hat x}_\theta^{(k)}(\bm{\hat x}_\lambda,\lambda)\coloneqq \frac{\mathrm{d}^k \bm{\hat x}_\theta(\bm{\hat x}_\lambda,\lambda)}{\mathrm{d}\lambda^k}$ and $h\coloneqq \lambda_{t_{i+1}}-\lambda_{t_i}$. 
To compute the derivatives $\bm{\hat x}_\theta^{(k)}$ in Eq.~\eqref{eq:taylor-epsilonv}, ~\citep{lu2022dpmpp} employs \textit{multistep methods}~\citep{butcher2016numerical}  which approximate the high-order derivatives via divided differences by using the previous values 
$\{\bm{\hat x}_\theta(\bm{\hat x}_{\lambda_{t_{i-j}}},\lambda_{t_{i-j}})\}_{j=0}^{p-1}$.

\paragraph{Lagrange Interpolation-based Approach}  For $p\geq 1$, the  
Lagrange interpolation of the previous values 
$\{\bm{x}_\theta(\bm{x}_{\lambda_{t_{i-j}}},\lambda_{t_{i-j}})\}_{j=0}^{p-1}$ at time $t_i$
is a polynomial $\mathcal{L}^{(p)}(\lambda)$ with degree $p-1$ w.r.t. $\lambda$:
\begin{equation}
\label{eq:lagrange-interpolation}
\begin{aligned}
    \bm{\hat x}_\theta(\bm{\hat x}_\lambda, \lambda) = \mathcal{L}^{(p)}(\lambda)+ {\mathcal O}(h^{p}), \quad\quad
    \mathcal{L}^{(p)}(\lambda) := 
    \sum_{j=0}^{p-1} \ell_{j}(\lambda) 
    \bm{\hat x}_\theta(\bm{\hat x}_{\lambda_{t_{i-j}}},\lambda_{t_{i-j}}), \\
\end{aligned}
\end{equation}
where $\ell_{j}(\lambda):\mathbb{R}\rightarrow\mathbb{R}$ denotes the Lagrange basis  and $h\coloneqq \lambda_{t_{i+1}}-\lambda_{t_i}$.  
To approximate the $\bm{\hat x}_\theta$,
the method proposed by \citep{Zhang2023fast} adopts the Adams–Bashforth~\citep{hochbruck2010exponential} method, 
which leverages the Lagrange interpolation. 
As a variance-controlled diffusion SDE solver, \citep{xue2024sa} employs stochastic Adams methods within a predictor-corrector scheme,
first approximating $\bm{\hat x}_\theta$ using Adams–Bashforth and then correcting it via Adams–Moulton.
The algorithm presented in \citep{zhao2023unipc} is essentially equivalent to that in \citep{xue2024sa} under the zero-variance setting, though it computes the Lagrange coefficients via coefficient matching (Appendix~\ref{sec:Lagrange Interpolation-based Approach}).
\section{RBF-Solver}

In this section, we present RBF-Solver. Section~\ref{sec:Derivation of Sampling Equations} derives the sampling equations via RBF interpolation, and Section~\ref{sec:Coefficients-Summation-Condition} explains the use of a constant basis. Sections~\ref{sec:Lagrange Convergence}–\ref{sec:Euler_Convergence} study its behavior as the shape parameter varies, and Section~\ref{sec:Accuracy}–\ref{sec:Learning of Shape Parameter} address its accuracy and shape parameter optimization.


\subsection{Derivation of Sampling Equations}
\label{sec:Derivation of Sampling Equations}

\begin{figure}[t]
    \centering
    \begin{minipage}[t]{0.60\textwidth}
        \centering
        \includegraphics[width=\linewidth]{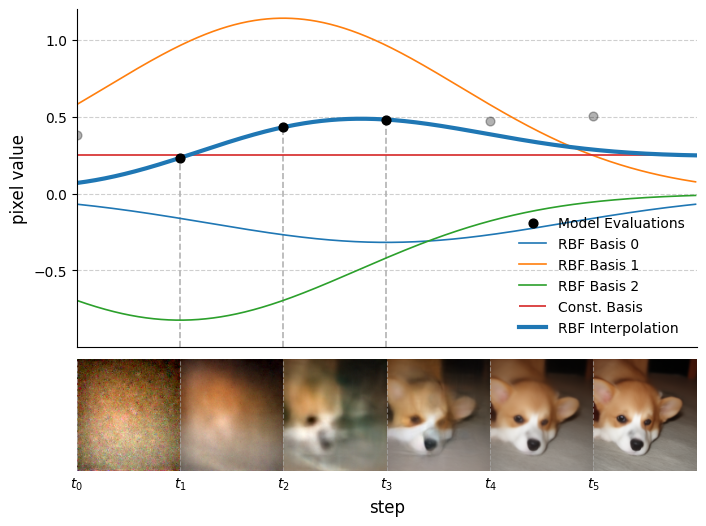}%
        \subcaption{RBF interpolation during sampling}\label{fig:left}
    \end{minipage}%
    \hfill
    \begin{minipage}[t]{0.38\textwidth}
        \centering
        \includegraphics[width=\linewidth]{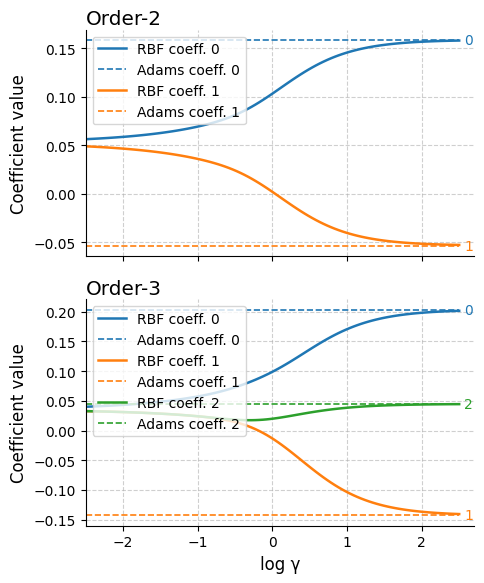}%
        \subcaption{Coefficient variation versus $\log\gamma$}\label{fig:coefficient_trends}
    \end{minipage}
    \caption{%
        \textbf{(a)} During sampling, the three model evaluations at time points $t_{3}$, $t_{2}$, and $t_{1}$ are interpolated using both radial basis functions (RBFs) and a constant basis, each basis being multiplied by its weight. Only the values of the central green pixels are plotted; the points used for interpolation are shown in black, while unused future and past points are shown in gray.\;
        \textbf{(b)} For second- and third-order schemes, the curves show how the interpolation coefficients vary with the shape parameter~$\gamma$. As $\log\gamma \to -\infty$, all coefficients converge to the same value, whereas as $\log\gamma \to \infty$, they approach the coefficients of the Adams method.}
    \label{fig:two-panel}
\end{figure}

\paragraph{RBF Interpolation}
Given the \(p\) most recent model evaluations 
$\{\bm{\hat x}_\theta(\bm{\hat x}_{\lambda_{t_{i-j}}},\lambda_{t_{i-j}})\}_{j=0}^{p-1}$,
we approximate \(\hat{\bm x}_\theta\in\mathbb{R}^D\) with the Gaussian-RBF interpolant
\begin{equation}
\label{eq:RBF_interpolation}
\begin{alignedat}{2}
\mathcal{R}(\lambda\,;\bm{\lambda}_{t_{i}}^{(p)})
&:=
\sum_{j=0}^{p-1}\bm{w}_{{t_{i}}, j}\,\phi_{{t_{i}}, j}(\lambda)
\;+\;
\bm{w}_{t_{i}}^\text{const} \cdot 1
,
\quad
&
\phi_{{t_{i}}, j}(\lambda)
&\coloneqq
\exp\Biggl[
-\biggl(\frac{\lambda - \lambda_{t_{i-j}}}{\,\gamma_{t_{i}}h_{t_i}\,}\biggr)^{2}
\Biggr],
\end{alignedat}
\end{equation}
where $\bm{\lambda}_{t_{i}}^{(p)}
\;:=\;
\bigl(\lambda_{t_i},\lambda_{t_{i-1}},\dots,\lambda_{t_{i-p+1}}\bigr)^{\!\top}\in\mathbb{R}^{p}$ and \(\phi_{{t_{i}}, j}\colon\mathbb{R}\to\mathbb{R}\) is a Gaussian radial basis function.
$\bm{w}_{{t_{i}}, j} \in \mathbb{R}^D$ denotes the weight for the $j$-th RBF, and $\bm{w}_{t_{i}}^{\text{const}} \in \mathbb{R}^D$ is the weight for the constant basis $1$.
The justification for introducing the constant basis is provided in Section \ref{sec:Coefficients-Summation-Condition}.
Define the interval length \(h_{t_{i}} \coloneqq \lambda_{t_{i+1}} - \lambda_{t_i}\). Each Gaussian’s width is given by \(\gamma_{t_{i}}h_{t_i}\), where \(\gamma_{{t_{i}}} \in \mathbb{R}^{+}\) is a learnable shape parameter.
Section~\ref{sec:Learning of Shape Parameter} describes how this parameter is optimized.


To obtain the weights in Eq.~\eqref{eq:RBF_interpolation},
we impose the constraint
$\sum_{j=0}^{p-1}\bm{w}_{t_{i}, j}= \bm{0}$ and solve the $(p+1)\times(p+1)$ linear system
$\mathbf{\Phi}_{t_{i}}^{(p)}\,\bm{W}_{t_{i}}^{(p)} = \bm{X}_{t_{i}}^{(p)}$
where \(\bm{W}_{t_{i}}^{(p)}\in\mathbb{R}^{(p+1)\times D}\) is the weight vector, \(\mathbf{\Phi}_{t_{i}}^{(p)}\in\mathbb{R}^{(p+1)\times(p+1)}\) is the kernel matrix, and \(\bm{X}_{t_{i}}^{(p)}\in\mathbb{R}^{(p+1)\times D}\) is the evaluation vector defined as follows:
\[
\bm{W}_{t_{i}}^{(p)}
  :=\begin{pmatrix}
      \bm{w}_{t_{i}, 0} & \bm{w}_{{t_{i}}, 1} & \cdots & \bm{w}_{{t_{i}}, p-1} & \bm{w}_{t_{i}}^{\text{const}}
    \end{pmatrix}^{\!\top},
\]
\[
\mathbf{\Phi}_{t_{i}}^{(p)}:=
\begin{pmatrix}
  \phi_{t_{i}, 0}(\lambda_{t_i})      & \cdots & \phi_{t_{i}, p-1}(\lambda_{t_i})      & 1 \\
  \phi_{t_{i}, 0}(\lambda_{t_{i-1}})  & \cdots & \phi_{t_{i}, p-1}(\lambda_{t_{i-1}})   & 1 \\
  \vdots                        & \ddots & \vdots                             & \vdots \\
  \phi_{t_{i}, 0}(\lambda_{t_{i-p+1}})& \cdots & \phi_{t_{i}, p-1}(\lambda_{t_{i-p+1}}) & 1 \\
  1 & \cdots & 1 & 0
\end{pmatrix},
\qquad
\bm{X}_{t_{i}}^{(p)}:=
\begin{pmatrix}
  \bm{\hat x}_{\theta}(\bm{\hat x}_{\lambda_{t_i}},        \lambda_{t_i})\\
  \bm{\hat x}_{\theta}(\bm{\hat x}_{\lambda_{t_{i-1}}},    \lambda_{t_{i-1}})\\
  \vdots\\
  \bm{\hat x}_{\theta}(\bm{\hat x}_{\lambda_{t_{i-p+1}}},  \lambda_{t_{i-p+1}})\\
  \bm{0}
\end{pmatrix}.
\]

\paragraph{RBF Sampling Algorithm}
By approximating the model evaluation
$\hat{\bm x}_\theta(\hat{\bm x}_\lambda,\lambda)$ in
Eq.~\eqref{eq:exact_solution_x0} with the RBF interpolant
$\mathcal{R}(\lambda;\bm{\lambda}_{t_i}^{(p)})$ from
Eq.~\eqref{eq:RBF_interpolation}, we derive the RBF sampling equation that
advances the sample from $\tilde{\bm x}_{t_i}$ to
$\tilde{\bm x}_{t_{i+1}}$:

\begin{equation}
\label{eq:RBF approximation}
\tilde{\bm x}_{t_{i+1}} =
\frac{\sigma_{t_{i+1}}}{\sigma_{t_{i}}}
\tilde{\bm x}_{t_{i}}
+
\sigma_{t_{i+1}}
\int_{\lambda_{t_i}}^{\lambda_{t_{i+1}}}
e^{\lambda}
\left[
\sum_{j=0}^{p-1}
\bm{w}_{{t_{i}}, j}
\phi_{{t_{i}}, j}(\lambda) + \bm{w}_{t_{i}}^\text{const} \cdot 1
\right]
\mathrm{d}\lambda.
\end{equation}

Define integrals \(l_{{t_{i}}, j}\in\mathbb{R}\) and \(l_{t_{i}}^{\text{const}}\in\mathbb{R}\) by
\begin{equation}
\label{eq:RBF-Solver Integral}
l_{{t_{i}}, j}
:=
\int_{\lambda_{t_i}}^{\lambda_{t_{i+1}}}
e^{\lambda}\,
\phi_{{t_{i}}, j}(\lambda)\,\mathrm{d}\lambda,
\qquad
l_{t_{i}}^{\text{const}}
:=
\int_{\lambda_{t_i}}^{\lambda_{t_{i+1}}}
e^{\lambda}\,\mathrm{d}\lambda.
\end{equation}
A numerically stable closed-form expression for \(l_{{t_{i}}, j}\) is derived in
Appendix~\ref{sec:how to calc integral}.
Let
\(
\bm{l}_{t_{i}}^{(p)}
\coloneqq
(l_{{t_{i}}, 0},l_{{t_{i}}, 1},\dots,l_{{t_{i}}, p-1},l_{t_{i}}^{\text{const}})^{\top}\in\mathbb{R}^{p+1}.
\)
Using this integral vector,
the RBF sampling equation becomes
\begin{equation*}
\tilde{\bm x}_{t_{i+1}}
=
\frac{\sigma_{t_{i+1}}}{\sigma_{t_i}}\,
\tilde{\bm x}_{t_i}
\;+\;
\sigma_{t_{i+1}}
(\bm{l}_{t_{i}}^{(p)})^{\top}\bm{W}_{t_{i}}^{(p)}.
\end{equation*}
Since \(\bm{W}_{t_{i}}^{(p)} = (\mathbf{\Phi}_{t_{i}}^{(p)})^{-1}\bm{X}_{t_{i}}^{(p)}\),
the RBF sampling equation can be written in terms of the evaluation vector $\bm X_{t_{i}}^{(p)}$
and its coefficient vector $\bm c_{t_{i}}^{(p)}\in\mathbb{R}^{p+1}$:
\begin{equation}
\label{eq:RBF-Solver Equation}
\tilde{\bm x}_{t_{i+1}}
=
\frac{\sigma_{t_{i+1}}}{\sigma_{t_i}}\,
\tilde{\bm x}_{t_i}
\;+\;
\sigma_{t_{i+1}}
(\bm c_{t_{i}}^{(p)})^{\top}\bm X_{t_{i}}^{(p)},
\qquad
(\bm c_{t_{i}}^{(p)})^{\top}
=
(\bm l_{t_{i}}^{(p)})^{\top}(\bm\Phi_{t_{i}}^{(p)})^{-1}.
\end{equation}
Based on the sampling equation, we construct a predictor–corrector pair via the linear multistep method~\citep{butcher2016numerical}.
The predictor uses the previous \(p\) evaluations up to \(t_i\),
whereas the corrector uses the previous \(p+1\) evaluations up to \(t_{i+1}\).
Algorithm \ref{alg:predictor-corrector} presents the complete RBF sampling procedure within the conventional multistep sampling framework~\cite{butcher2016numerical}.
\begin{equation}
\label{eq:RBF-Solver PC}
\left\{
\begin{aligned}
\tilde{\bm x}_{t_{i+1}}^{\text{pred}}
&=
\text{Predictor}\bigl(\tilde{\bm x}_{t_i}^{\text{corr}}\bigr)
=
\frac{\sigma_{t_{i+1}}}{\sigma_{t_i}}
\tilde{\bm x}_{t_i}^{\text{corr}}
+
\sigma_{t_{i+1}}
(\bm c_{t_{i}}^{(p)})^{\top}\bm X_{t_{i}}^{(p)},\\[6pt]
\tilde{\bm x}_{t_{i+1}}^{\text{corr}}
&=
\text{Corrector}\bigl(\tilde{\bm x}_{t_i}^{\text{corr}}\bigr)
=
\frac{\sigma_{t_{i+1}}}{\sigma_{t_i}}
\tilde{\bm x}_{t_i}^{\text{corr}}
+
\sigma_{t_{i+1}}
(\bm c_{t_{i+1}}^{(p+1)})^{\top}\bm X_{t_{i+1}}^{(p+1)}.
\end{aligned}
\right.
\end{equation}

\subsection{Coefficient Summation Condition}
\label{sec:Coefficients-Summation-Condition}


We analyze the case in which 
${\bm{\hat x}}_{\theta}$ is \emph{constant} on an interval $[\lambda_{t_i},\lambda_{t_{i+1}}]$
—for example, when  
(i)~the target data is already fixed or  
(ii)~a first-order Euler approximation is used.  
Under this assumption we have
$
\int_{\lambda_{t_i}}^{\lambda_{t_{i+1}}}
e^{\lambda}\,
{\bm{\hat x}}_{\theta}\!\bigl({\bm{\hat x}}_{\lambda},\lambda\bigr)\,
\mathrm{d}\lambda
=
{\bm{\hat x}}_{\theta}\!\bigl({\bm{\hat x}}_{\lambda},\lambda\bigr)
\int_{\lambda_{t_i}}^{\lambda_{t_{i+1}}}
e^{\lambda}\,\mathrm{d}\lambda$. To remain consistent with this special case, the coefficient vector
$\bm{c}_{t_{i}}^{(p)}$ must satisfy the following summation condition. ($[\,\cdot\,]_j$ denotes the \(j\)-th component.)
\begin{equation} \label{eq:Summation Condition}
\sum_{j=0}^{p-1}[\bm{c}_{t_{i}}^{(p)}]_{j}
=
\int_{\lambda_{t_i}}^{\lambda_{t_{i+1}}}
e^{\lambda}\,\mathrm{d}\lambda.
\end{equation}
The coefficient vector $\bm{c}_{t_{i}}^{(p)}$
of the proposed RBF sampling equation
in Eq.~\eqref{eq:RBF-Solver Equation} satisfies
$\bm{\Phi}_{t_{i}}^{(p)}\bm{c}_{t_{i}}^{(p)}=\bm{l}_{t_{i}}^{(p)}$,
and consequently,
\begin{equation}
\label{eq:summation-condition}
\sum_{j=0}^{p-1}[\bm{c}_{t_{i}}^{(p)}]_{j}
=
\bigl[\bm{\Phi}_{t_{i}}^{(p)}\bm{c}_{t_{i}}^{(p)}\bigr]_{p}
=
[\bm{l}_{t_{i}}^{(p)}]_{p}
=
l_{t_{i}}^{\mathrm{const}}
=
\int_{\lambda_{t_i}}^{\lambda_{t_{i+1}}}
e^{\lambda}\,\mathrm{d}\lambda .
\end{equation}
It is obvious that RBF-Solver in the case of $p=1$ coincides with the standard Euler method (DDIM~\cite{song2020denoising}).
The corrector step of RBF-Solver in Eq.~\eqref{eq:RBF-Solver PC} also satisfies the summation condition. The predictor–corrector pair from the Adams method with Lagrange interpolation also satisfies this summation condition; see Appendix~\ref{sec:Coefficients Summation Condition in Lagrange-Solver} for details.
\subsection{Adams Method Convergence}
\label{sec:Lagrange Convergence}

As established in previous studies on RBF interpolation  \cite{Lee2013rbf,Fornberg2002FlatRBF,lee2015flat}, the RBF interpolation converges to the Lagrange interpolation as the basis functions become increasingly flat.
\begin{lemma} {\rm(\cite{Lee2013rbf})} 
\label{lemma:RBF convergence}
Let the RBF be defined by Eq.~(\ref {eq:RBF_interpolation}) with a shape parameter $\gamma_{t_{i}}$.
Then the RBF interpolation in Eq.~(\ref{eq:RBF_interpolation}) converges to the Lagrange interpolation in Eq.~(\ref{eq:lagrange-interpolation})
as $\gamma_{t_{i}} \to \infty$:
\begin{equation*}
\lim_{\gamma_{t_{i}} \to \infty}
\mathcal{R}(\lambda\,;\bm{\lambda}_{t_{i}}^{(p)}) 
= \mathcal{L}^{(p)}(\lambda).   
\end{equation*}  
\end{lemma}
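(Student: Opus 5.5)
The plan is to write the interpolant in cardinal form and compare, for each cardinal function, its small-parameter expansion with the Lagrange basis. Put $\varepsilon := 1/(\gamma_{t_i} h_{t_i})$, so that $\gamma_{t_i}\to\infty$ is the same as $\varepsilon\to 0$. Write the nodes as $\lambda_j := \lambda_{t_{i-j}}$, $j=0,\dots,p-1$; they are distinct because $\lambda_t$ is strictly monotone. By linearity of the system $\mathbf{\Phi}_{t_i}^{(p)}\bm W_{t_i}^{(p)}=\bm X_{t_i}^{(p)}$ we have
\[
\mathcal{R}(\lambda;\bm\lambda_{t_i}^{(p)})=\sum_{k=0}^{p-1}\psi_k(\lambda;\varepsilon)\,\bm{\hat x}_\theta(\bm{\hat x}_{\lambda_k},\lambda_k),
\]
where $\psi_k$ is the augmented RBF interpolant of the data $\delta_{jk}$. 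It therefore suffices to show $\psi_k(\lambda;\varepsilon)\to\ell_k(\lambda)$ pointwise. By Cramer's rule,
\[
\psi_k(\lambda;\varepsilon)=\frac{\det \mathbf{\Phi}_k(\lambda;\varepsilon)}{\det\mathbf{\Phi}(\varepsilon)},
\]
where $\mathbf{\Phi}_k(\lambda;\varepsilon)$ is $\mathbf{\Phi}_{t_i}^{(p)}$ with its $k$-th row replaced by $(\phi_0(\lambda),\dots,\phi_{p-1}(\lambda),1)$. The invertibility of $\mathbf{\Phi}(\varepsilon)$ for every $\varepsilon>0$ follows from the strict positive definiteness of the Gaussian on the hyperplane $\sum_j w_j=0$, which is exactly the side condition imposed in the paper.

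Next I expand every entry in powers of $\varepsilon^2$:
\[
\phi_j(x)=\sum_{n\ge0}\frac{(-\varepsilon^2)^n}{n!}\,(x-\lambda_j)^{2n},\qquad (x-\lambda_j)^{2n}=\sum_{m=0}^{2n}\binom{2n}{m}x^m(-\lambda_j)^{2n-m}.
\]
This gives a factorization $[\phi_j(x_r)]_{r,j}=V_x\,B(\varepsilon)\,V_\lambda^{\top}$. Here $V_x$ and $V_\lambda$ are the (semi-infinite) Vandermonde matrices $[x_r^m]$ and $[\lambda_j^m]$, and $B(\varepsilon)$ is a symmetric matrix whose $(m,m')$ entry is a constant times $\varepsilon^{m+m'}$, nonzero only when $m+m'$ is even. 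The bordering row and column of ones are the $m=0$ column of a Vandermonde matrix.

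Both determinants are then expanded with the Cauchy–Binet formula over choices of $p$ monomial indices. Because of the $\varepsilon^{m+m'}$ scaling, the lowest power of $\varepsilon$ comes from the index set $\{0,1,\dots,p-1\}$. The bordering with the constant basis and the constraint $\sum_j w_j=0$ absorb the $m=0$ direction. As a result, the leading term of $\det\mathbf{\Phi}(\varepsilon)$ is
\[
C\,\varepsilon^{2N}\,\bigl(\det[\lambda_j^m]_{j,m<p}\bigr)^2,
\]
and that of $\det\mathbf{\Phi}_k(\lambda;\varepsilon)$ is the same expression with the $k$-th node $\lambda_k$ replaced by $\lambda$ in one of the two Vandermonde factors. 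Here $N$ is a common exponent and $C$ is a constant depending only on $p$, namely a principal minor of the Gaussian coefficient matrix $B$ restricted to the indices $1,\dots,p-1$. Dividing, the powers $\varepsilon^{2N}$ and the constant $C$ cancel. The ratio of Vandermonde determinants is exactly $\ell_k(\lambda)=\prod_{j\ne k}(\lambda-\lambda_j)/(\lambda_k-\lambda_j)$, and all higher-order terms vanish as $\varepsilon\to0$.

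The main obstacle is to justify that the leading Cauchy–Binet term is indeed the one indexed by $\{0,\dots,p-1\}$ and that $C\neq0$. For this I would first use the known fact that the Gaussian's Taylor coefficient matrix $B$ has nonsingular leading principal blocks. For the Gaussian this follows because $B$ factors as $D\,H\,D$, where $D$ is diagonal with nonzero entries and $H$ is a Hankel-type matrix of moments of a positive measure, hence positive definite. I would then track how the bordered constant row and column shift the block to the indices $1,\dots,p-1$. Rather than redoing this bookkeeping in full, I would invoke \cite{Lee2013rbf} (see also \cite{Fornberg2002FlatRBF,lee2015flat}). That work establishes precisely this flat-limit statement for Gaussian interpolation with polynomial augmentation on distinct one-dimensional nodes. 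It identifies the limit as the unique polynomial of degree $\le p-1$ interpolating the data, which by unisolvence is $\mathcal{L}^{(p)}(\lambda)$.
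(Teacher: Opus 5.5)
The paper does not prove this lemma: it is imported from \cite{Lee2013rbf} (with \cite{Fornberg2002FlatRBF,lee2015flat} as background). The only related material in the paper is the unproved expansion $u_j=\ell_j+a_{j2}(h/\beta)^2+\cdots$ in Appendix~\ref{sec:RBF coefficients}.

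Your proposal is correct but takes a different route: it reconstructs the standard flat-limit mechanism (cardinal form, Cramer's rule, Taylor factorization of the Gaussian, Cauchy--Binet). So it explains what the paper only asserts. Where you finally defer to the citation, you are at the same level of rigor as the paper.

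The bookkeeping you defer in fact closes in a few lines.
\begin{itemize}
\item Write $\mathbf{\Phi}=\tilde V\begin{pmatrix}B & e_0\\ e_0^{\top} & 0\end{pmatrix}\tilde V^{\top}$ with $\tilde V=\mathrm{diag}(V,1)$.
\item Cauchy--Binet forces the border index into both index sets. The border column $e_0$ then forces $0\in S',T'$.
\item The bordered minor equals $-\det B_{S'',T''}=-\varepsilon^{s}\det\hat B_{S'',T''}$, where $S''=S'\setminus\{0\}$, $T''=T'\setminus\{0\}$ and $s=\sum_{m\in S''}m+\sum_{m\in T''}m$.
\item This exponent is uniquely minimized at $S''=T''=\{1,\dots,p-1\}$. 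Hence $2N=p(p-1)$ and $C=-\det\hat B_{\{1,\dots,p-1\}}$.
\end{itemize}

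Your $DHD$ claim is also correct. Take $D=\mathrm{diag}(\pm 1/m!)$ and $H_{mm'}=\mu_{m+m'}$, the moments of $\mathcal N(0,2)$: odd moments vanish and $\mu_{2n}=(2n)!/n!$. That principal minor is therefore positive, so $C\neq0$. For $p=2$ one checks directly that $\det\mathbf{\Phi}=-2(1-e^{-\varepsilon^2 d^2})\approx-2\varepsilon^2 d^2$, with $d=\lambda_0-\lambda_1$.

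The semi-infinite Cauchy--Binet step is harmless. Each Taylor coefficient of these entire functions of $\varepsilon$ involves only finitely many monomials.

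Your route also gives more than the bare citation.
\begin{itemize}
\item Both determinants are power series in $\varepsilon^2$, so $\psi_k=\ell_k+\mathcal O(\varepsilon^2)$. This is exactly the expansion the paper postulates in Eq.~\eqref{eq:Taylor_of_coefficients}.
\item Joint analyticity in $(\lambda,\varepsilon)$ makes the convergence uniform on $[\lambda_{t_i},\lambda_{t_{i+1}}]$. That is what is needed to pass from the lemma to convergence of $\bm c_{t_i}^{(p)}$ to the Adams coefficients, a step the paper takes without comment.
\end{itemize}
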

Consequently, the RBF sampling equation Eq.~\eqref{eq:RBF approximation} defined by the RBF interpolation
converges to the Lagrange
interpolation-based sampling method; that is,
Adams method 
in Appendix \ref{sec:Lagrange-Solver}. Figure~\ref{fig:coefficient_trends} illustrates specific examples of the coefficients for cases $p=2$ and $p=3$, demonstrating their convergence.

\subsection{Equal-Coefficient Sampling Convergence}
\label{sec:Euler_Convergence}
In this section, we show that the RBF-Solver sampling equation (Eq.~\eqref{eq:RBF-Solver Equation}) converges to the equal-coefficient sampler in the limit \(\gamma_{t_i}\to0\). A detailed proof is provided in Appendix~\ref{sec:Euler_Converence_in_Appendix}.
For \(\phi_{{t_{i}}, j}(\lambda) \) in  Eq.~\eqref{eq:RBF_interpolation} and 
$l_{{t_{i}}, j}$ in  Eq.~\eqref{eq:RBF-Solver Integral},
the pointwise limit as \( \gamma_{t_{i}} \to 0 \) is given by, for each \( j=0, \dots, p-1\),
\[ \lim_{\gamma_{t_{i}} \to 0} \phi_{{t_{i}}, j}(\lambda) =
\begin{cases}
1, & \text{if } \lambda = \lambda_{t_{i-j}} \\
0, & \text{if } \lambda \ne \lambda_{t_{i-j}}
\end{cases}
,\qquad
\lim_{\gamma_{t_{i}} \to 0}l_{{t_{i}}, j} = 0.
\]
Since the coefficient vector \( \bm c_{t_{i}}^{(p)} \) satisfies the linear system
\( \bm {\Phi}^{(p)}_{\text{limit}} \bm c_{t_{i}}^{(p)} = \bm l^{(p)}_{\text{limit}} \) in Eq.~\eqref{eq:RBF-Solver Equation} at the limit, 
hence the first $p$ components of $\bm c_{t_{i}}^{(p)}$ are all equal to $\l_{t_{i}}^{\text{const}}/p$.
In that case,
the RBF sampling equation 
reduces to the sampling method with equal coefficients:
\begin{equation*}
\label{eq:RBF-Solver Equation of Euler case}
\tilde{\bm x}_{t_{i+1}} =
\frac{\sigma_{t_{i+1}}}{\sigma_{t_{i}}}
\tilde{\bm x}_{t_{i}}
+
\sigma_{t_{i+1}}
\frac{l_{t_{i}}^{\text{const}}}{p}
\sum_{j=0}^{p-1} \bm{\hat x}_\theta(\bm{\hat x}_{\lambda_{t_{i-j}}},\lambda_{t_{i-j}}).
\end{equation*}

\subsection{Accuracy of RBF-Solver}
\label{sec:Accuracy}
Given the approximation accuracy $p$ of the RBF interpolation (Appendix~\ref{eq:RBF-Accuracy}), we extend this result to demonstrate that the RBF sampling also achieves global accuracy of order $p$.
\begin{theorem} \label{theorem:accuracy}
Suppose that the data prediction model \( \bm{\hat x}_\theta(\bm{\hat x}_\lambda,\lambda) \) is Lipschitz continuous with respect to $\hat{x}_\lambda$ with Lipschitz constant $L$,
and let \( h_{t_{i}} \coloneqq \lambda_{t_{i+1}} - \lambda_{t_i} \) for each 
\( i=0, \dots, M-1\) and
\(  h \coloneqq \max_{t_{i}} h_{t_{i}}. \)
Then, for each $i=p-1,\dots,M-1$, the RBF sampling method in Eq.~(\ref{eq:RBF-Solver Equation})
has local accuracy of order $p+1$ with respect to the exact solution \(\bm{x}_{t_{i+1}}\) in Eq.~(\ref{eq:exact_solution_x0}) 
and RBF-Solver with $p$ evaluations is of \( p\)-th order of accuracy, that is,
\begin{equation*}
|\bm{x}_0 - \bm{\tilde x}_{t_{M}}| = \mathcal{O}(h^{p}).    
\end{equation*}
\end{theorem}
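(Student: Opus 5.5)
The plan is the standard "local truncation error plus stability" argument for linear multistep methods, adapted to the exponential-integrator form of Eq.~\eqref{eq:exact_solution_x0}.

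\textbf{Local error.} Assume the previous iterates are exact, $\tilde{\bm x}_{t_{i-j}}=\bm x_{t_{i-j}}$ for $j=0,\dots,p-1$. Subtracting Eq.~\eqref{eq:RBF-Solver Equation} from Eq.~\eqref{eq:exact_solution_x0} with $s=t_i$, $t=t_{i+1}$ gives
\begin{equation*}
\bm x_{t_{i+1}}-\tilde{\bm x}_{t_{i+1}}
=\sigma_{t_{i+1}}\int_{\lambda_{t_i}}^{\lambda_{t_{i+1}}} e^{\lambda}\bigl[\bm{\hat x}_\theta(\bm{\hat x}_\lambda,\lambda)-\mathcal R(\lambda;\bm\lambda^{(p)}_{t_i})\bigr]\,\mathrm d\lambda .
\end{equation*}
This identity holds because $(\bm c^{(p)}_{t_i})^\top\bm X^{(p)}_{t_i}$ is exactly the integral of $e^\lambda\mathcal R$. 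I will invoke the RBF approximation accuracy of order $p$ (Appendix~\ref{eq:RBF-Accuracy}). The interpolation nodes $\lambda_{t_{i-p+1}},\dots,\lambda_{t_i}$ lie within distance $(p-1)h$ of the integration interval, and the Gaussian width scales with $h_{t_i}$. This scaling is what makes the interpolant behave like a polynomial interpolant on a stencil of size $\mathcal O(h)$, so the estimate should give
\begin{equation*}
\sup_{\lambda\in[\lambda_{t_i},\lambda_{t_{i+1}}]}\bigl|\bm{\hat x}_\theta-\mathcal R\bigr|=\mathcal O(h^p),
\end{equation*}
with a constant depending on $\gamma_{t_i}$, on the derivative bounds of $\bm{\hat x}_\theta$ along the exact trajectory, and on the ratios of consecutive step sizes. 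Since the interval has length $h_{t_i}\le h$ and $\sigma_{t_{i+1}}e^{\lambda}$ is bounded on $[0,T]$, the local error is $\mathcal O(h^{p+1})$, which proves the first claim.

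\textbf{Global error.} Let $e_i:=|\bm x_{t_i}-\tilde{\bm x}_{t_i}|$. Write $\bm x_{t_{i+1}}-\tilde{\bm x}_{t_{i+1}}$ as the sum of three pieces:
\begin{itemize}
\item the propagated term $\frac{\sigma_{t_{i+1}}}{\sigma_{t_i}}(\bm x_{t_i}-\tilde{\bm x}_{t_i})$;
\item the perturbation $\sigma_{t_{i+1}}(\bm c^{(p)}_{t_i})^\top(\bm X^{\text{exact}}-\bm X^{\text{num}})$, where the model is evaluated on exact versus numerical states;
\item the local truncation error $\tau_i=\mathcal O(h^{p+1})$.
\end{itemize}
Lipschitz continuity bounds the perturbation by $\sigma_{t_{i+1}}L\sum_j|[\bm c^{(p)}_{t_i}]_j|\,e_{i-j}$. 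The key point is that $\sum_j|[\bm c_{t_i}^{(p)}]_j|\le C h_{t_i}$. The coefficients are $\bm l^\top\bm\Phi^{-1}$, and every entry of $\bm l$ is at most $l^{\text{const}}_{t_i}=\mathcal O(h_{t_i})$. The matrix $\bm\Phi$ depends only on the scaled node ratios $(\lambda_{t_{i-j}}-\lambda_{t_{i-k}})/(\gamma_{t_i}h_{t_i})$, so $\|\bm\Phi^{-1}\|$ is bounded independently of $h$ provided $\gamma_{t_i}$ and the step ratios stay in a compact set. Together with $\sigma_{t_{i+1}}/\sigma_{t_i}\le 1+Ch_{t_i}$, which holds since $\sigma_t$ is smooth in $\lambda$, this yields
\begin{equation*}
e_{i+1}\le (1+Ch)\max_{0\le j\le p-1}e_{i-j}+Ch^{p+1}.
\end{equation*}

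\textbf{Conclusion and main obstacle.} A discrete Gr\"onwall argument then gives
\begin{equation*}
e_M\le e^{C\,\Delta\lambda}\Bigl(\max_{j<p}e_j+\tfrac{C}{C'}h^{p}\Bigr),
\end{equation*}
where $\Delta\lambda=\sum_i h_{t_i}$ is the total $\lambda$-length. The starting values are produced by lower-order warm-up steps, as in multistep samplers: a step of order $k$ has local error $\mathcal O(h^{k+1})$, and only a bounded number of such steps are used, so the starting errors are acceptable at the order $\mathcal O(h^p)$ used here. Hence $|\bm x_0-\tilde{\bm x}_{t_M}|=\mathcal O(h^p)$.

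The step I expect to be hardest is showing that the constants in the two key bounds, the interpolation error $\mathcal O(h^p)$ and $\|\bm\Phi^{-1}\|$, are uniform in $h$. This requires assuming that the $\gamma_{t_i}$ are bounded away from $0$ and $\infty$, since ill-conditioning grows as $\gamma\to\infty$, and that the step-size ratios are bounded. Given those assumptions, the scale invariance under $\lambda\mapsto\lambda/h_{t_i}$ reduces both bounds to fixed-size stencils, and I would argue uniformity that way.
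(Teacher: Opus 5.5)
There is a genuine gap, and it sits in the step you take as given. With $\gamma_{t_i}$ bounded, which is the regime you explicitly assume, the RBF interpolant is \emph{not} $\mathcal O(h^p)$-accurate, and carrying out your own rescaling shows why.

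In $s=(\lambda-\lambda_{t_i})/h_{t_i}$, the cardinal functions $u_j$ of $\mathcal R=\sum_j u_j\,\bm{\hat x}_\theta(\bm{\hat x}_{\lambda_{t_{i-j}}},\lambda_{t_{i-j}})$ depend only on $\gamma_{t_i}$ and the step ratios. With $s_j=(\lambda_{t_{i-j}}-\lambda_{t_i})/h_{t_i}$ and $\sum_j u_j=1$, this gives
\[
\bm{\hat x}_\theta-\mathcal R=\sum_{k\ge 1}\frac{h_{t_i}^k\,\bm{\hat x}_\theta^{(k)}(\bm{\hat x}_{\lambda_{t_i}},\lambda_{t_i})}{k!}\Bigl(s^k-\sum_{j=0}^{p-1}u_j(s)\,s_j^k\Bigr).
\]
So an $\mathcal O(h^p)$ bound requires the scaled interpolant to reproduce $s,\dots,s^{p-1}$. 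It reproduces constants but not $s$: a constant plus Gaussians is entire and bounded on $\mathbb R$, so it cannot coincide with $s$ on an interval.

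Concretely, for $p=2$ with equal steps,
\[
u_1(s)=\frac12-\frac{e^{-s^2/\gamma^2}-e^{-(s+1)^2/\gamma^2}}{2\bigl(1-e^{-1/\gamma^2}\bigr)}=-s+\frac{s(s+1)(2s+1)}{2\gamma^2}+\mathcal O(\gamma^{-4}).
\]
The local error then equals $\sigma_{t_{i+1}}e^{\lambda_{t_i}}h^2\bigl(\tfrac12+\int_0^1u_1\bigr)\bm{\hat x}_\theta^{(1)}+\mathcal O(h^3)$. Here $\tfrac12+\int_0^1u_1>0$ for every finite $\gamma$, and it behaves like $\gamma^{-2}+\mathcal O(\gamma^{-4})$ as $\gamma\to\infty$. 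So for fixed $\gamma$ the scheme is first order.

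Polynomial-like behaviour is the flat limit, where the width is much larger than the node spacing. A width proportional to $h$ rules that out; it appears only as $\gamma\to\infty$.

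The paper's proof rests on the same claim, namely the RBF-interpolation lemma of Appendix~A.1, and that lemma's argument has the matching defect. Its auxiliary $g$ is built from Gaussians of width $\gamma h$ and matches $p$ derivatives of $\bm{\hat x}_\theta$, so $g^{(p)}$ is of size $h^{-p}$ and its remainder is $\mathcal O(1)$, not $\mathcal O(h^p)$.

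Order $p$ needs an extra hypothesis tying $\gamma$ to $h$, for example $\gamma_{t_i}^{-2}=\mathcal O(h^{p-1})$. Since $u_j-\ell_j=\mathcal O(\gamma^{-2})$ and $\sum_j(u_j-\ell_j)=0$, one then gets $|\mathcal R-\mathcal L^{(p)}|=\mathcal O(\gamma^{-2}h)$. In that regime $\bm\Phi$ is ill-conditioned, so obtain $\sum_j|[\bm c^{(p)}_{t_i}]_j|\le Ch$ from the convergence of $\bm c^{(p)}_{t_i}$ to the Adams coefficients rather than from $\|\bm\Phi^{-1}\|$. Your Lipschitz--Gr\"onwall stability step then works. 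It also goes beyond the paper, which simply asserts that local order $p+1$ implies global order $p$.

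The starting values are a second gap. With lower-order warm-up, the first (Euler) step has local error $\Theta(h^2)$, so $\max_{j<p}e_j=\mathcal O(h^2)$ and your Gr\"onwall bound yields only $\mathcal O(h^2)$ when $p\ge 3$. That the number of warm-up steps is bounded does not help: a single step already injects an $h^2$ error, which is then carried forward with an $\mathcal O(1)$ factor. You must assume $\mathcal O(h^p)$-accurate starting values, as the paper does in its Assumption~A.3 (which asks for $\mathcal O(h^{p+1})$).
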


\subsection{Gaussian Shape Parameter Optimization}
\label{sec:Learning of Shape Parameter}

\begin{figure}[t]
  \centering
  \begin{minipage}[t]{0.49\linewidth}
    \centering
    \begin{algorithm}[H]
      \caption{Predictor-Corrector Sampling}
      \label{alg:predictor-corrector}
      \begin{algorithmic}[1]
        \REQUIRE Data prediction model $\xv_{\theta}$, timesteps $\{t_i\}_{i=0}^{M}$, list $L$, initial noise ${\xv}_{t_0}$\vspace{4.0pt}
        \STATE $\tilde{\bm x}_{t_{0}}^{\mathrm{corr}} \leftarrow {\bm{x}}_{t_0}$
        \STATE Evaluate $\xv_{\theta}\!\bigl(\tilde{\bm x}_{t_0}^{\mathrm{corr}}, t_0\bigr)$ and add to $L$
        \FOR{$i \leftarrow 0$ to $M-1$}
          \STATE $\tilde{\bm x}_{t_{i+1}}^{\mathrm{pred}} \leftarrow
                 \text{Predictor}\!\bigl(\tilde{\bm x}_{t_i}^{\mathrm{corr}}\bigr)$
          \STATE \textbf{if} $i \ge M-1$ \textbf{ then break}
          \STATE Evaluate $\xv_{\theta}\!\bigl(\tilde{\bm x}_{t_{i+1}}^{\mathrm{pred}}, t_{i+1}\bigr)$ and add to $L$
          \rule{0pt}{0pt}\vspace{28.2pt}
        
          \STATE $\tilde{\bm x}_{t_{i+1}}^{\mathrm{corr}} \leftarrow
                 \text{Corrector}\!\bigl(\tilde{\bm x}_{t_i}^{\mathrm{corr}}\bigr)$
        \ENDFOR
        \RETURN $\tilde{\bm{x}}_{t_M}^{\mathrm{pred}}$
      \end{algorithmic}
    \end{algorithm}
  \end{minipage}
  \hfill
  \begin{minipage}[t]{0.49\linewidth}
    \centering
    \begin{algorithm}[H]
      \caption{Shape Parameter Optimization}
      \label{alg:pc-shape-parameter-learning}
      \begin{algorithmic}[1]
        \REQUIRE Data prediction model $\bm x_{\theta}$, timesteps $\{t_i\}_{i=0}^{M}$, list $L$, target $({\bm x}_{0}^{\mathrm{target}}, {\bm x}_{T}^{\mathrm{target}})$
        \STATE $\tilde{\bm x}_{t_{0}}^{\mathrm{corr}} \leftarrow {\bm x}_{T}^{\mathrm{target}}$
        \STATE Evaluate $\bm{x}_{\theta}\!\bigl(\tilde{\bm x}_{t_0}^{\mathrm{corr}}, t_0\bigr)$ and add to $L$
        \FOR{$i \leftarrow 0$ \textbf{to} $M-1$}
          \STATE $\tilde{\bm x}_{t_{i+1}}^{\mathrm{pred}} \leftarrow
                 \text{Predictor}\!\bigl(\tilde{\bm x}_{t_i}^{\mathrm{corr}};\,\gamma_{t_{i}}^{\mathrm{pred}}\bigr)$
          \STATE \textbf{if} $i \ge M-1$ \textbf{ then break}
          \STATE Evaluate $\bm{x}_{\theta}\!\bigl(\tilde{\bm x}_{t_{i+1}}^{\mathrm{pred}}, t_{i+1}\bigr)$ and add to $L$
          \STATE \parbox[t]{\dimexpr\linewidth-\algorithmicindent}{
        $\gamma_{t_{i+1}}^{\mathrm{pred}},\,\gamma_{t_{i}}^{\mathrm{corr}}
          \;\leftarrow$\\[-2pt]  
        \hspace*{3.5em}  
        $\displaystyle
          \argmin\,\bigl\|\tilde{\bm x}_{t_{i+2}}^{\mathrm{pred}}
          -\bm{x}_{t_{i+2}}^{\mathrm{target}}\bigr\|^{2}$}
          \STATE $\tilde{\bm x}_{t_{i+1}}^{\mathrm{corr}} \leftarrow
                 \text{Corrector}\!\bigl(\tilde{\bm x}_{t_i}^{\mathrm{corr}};\,\gamma_{t_{i}}^{\mathrm{corr}}\bigr)$
        \ENDFOR
        \RETURN $\tilde{\bm x}_{t_M}^{\mathrm{pred}}$
      \end{algorithmic}
    \end{algorithm}
  \end{minipage}
\end{figure}

This section describes how to optimize the Gaussian shape parameters.
We first construct a set of target image–noise pairs, $\bigl(\bm x_{0}^{\mathrm{target}}, \bm x_{T}^{\mathrm{target}}\bigr)$, by running an existing sampler with a large NFE (e.g., UniPC~\cite{zhao2023unipc} with 200 steps).
We use 128 target pairs in each experiment. At runtime, the intermediate target at step \(i\) is computed via the forward diffusion process
$
\bm x_{t_i}^{\mathrm{target}}
=
\alpha_{t_i}\,\bm x_{0}^{\mathrm{target}}
+
\sigma_{t_i}\,\bm x_{T}^{\mathrm{target}}
$ (see Appendix~\ref{app:intermediate_target_ablation}).

At step~0, the predictor reduces to the first-order Euler method, so $\gamma_{0}^{\mathrm{pred}}$ is not optimized.  
Similarly, because no corrector is applied at step $M\!-\!1$, $\gamma_{t_{M-1}}^{\mathrm{corr}}$ is also excluded from learning.
Following Eq.~\eqref{eq:RBF-Solver PC} and
Algorithm~\ref{alg:pc-shape-parameter-learning},
$\tilde{\bm x}_{t_{i+2}}^{\mathrm{pred}}$ is obtained from
$\tilde{\bm x}_{t_i}^{\mathrm{corr}}$ by applying the corrector at
step $i$ ($i>0$) and then the predictor at step $i+1$.
During this procedure, the shape parameters $\gamma_{t_i}^{\mathrm{corr}}$ and $\gamma_{t_{i+1}}^{\mathrm{pred}}$ are used, respectively, to determine the coefficient vectors $\bm c_{t_{i+1}}^{(p+1)}$ and $\bm c_{t_{i+1}}^{(p)}$, allowing the predicted value $\tilde{\bm x}_{t_{i+2}}^{\mathrm{pred}}$ to be expressed as
\begin{equation}
\label{eq:(i+1)-step predictor}
\begin{aligned}
\tilde{\bm x}_{t_{i+2}}^{\mathrm{pred}}
(\tilde{\bm x}_{t_{i}}^{\mathrm{corr}};\,\gamma_{t_{i}}^{\mathrm{corr}},\,\gamma_{t_{i+1}}^{\mathrm{pred}})
&=
\text{Predictor}
\bigl(
\text{Corrector}
\left(
\tilde{\bm x}_{t_{i}}^{\mathrm{corr}};\,\gamma_{t_{i}}^{\mathrm{corr}}
\right)
;\, \gamma_{t_{i+1}}^{\mathrm{pred}}
\bigr)
\\[6pt]
&=
\frac{\sigma_{t_{i+2}}}{\sigma_{t_{i}}}
\tilde{\bm x}_{t_{i}}^{\mathrm{corr}}
\;+\;
\sigma_{t_{i+2}}\,
\left(
\big({\bm c}_{t_{i+1}}^{(p+1)}\big)^{\!\top}\,{\bm X}_{t_{i+1}}^{(p+1)}
\;+\;
\big({\bm c}_{t_{i+1}}^{(p)}\big)^{\!\top}\,{\bm X}_{t_{i+1}}^{(p)}
\right).
\end{aligned}
\end{equation}
The shape parameters $\gamma_{t_{i+1}}^{\mathrm{pred}}$ and $\gamma_{t_{i}}^{\mathrm{corr}}$ are optimized by minimizing the mean-squared error between the prediction $\tilde{\bm x}_{t_{i+2}}^{\mathrm{pred}}$, computed by Eq.~\eqref{eq:(i+1)-step predictor}, and the target $\bm x_{t_{i+2}}^{\mathrm{target}}$.
Algorithm~\ref{alg:pc-shape-parameter-learning} explains how this loss is incorporated into the sampling procedure.
\begin{equation}
\label{eq:shape-learning}
\gamma_{t_{i+1}}^{\mathrm{pred}},\,\gamma_{t_{i}}^{\mathrm{corr}}
=
\argmin_{\gamma^{\mathrm{pred}},\,\gamma^{\mathrm{corr}}}\,
   \Bigl\|
     \tilde{\bm x}_{t_{i+2}}^{\mathrm{pred}}
       (\tilde{\bm x}_{t_{i}}^{\mathrm{corr}};\,\gamma^{\mathrm{pred}},\,\gamma^{\mathrm{corr}})
     - \bm x_{t_{i+2}}^{\mathrm{target}}
   \Bigr\|^2,
\quad 0 \le i < M-1.
\end{equation}
Because computing this loss requires no additional network evaluations, the parameter space can be explored efficiently via grid search. The runtime of this optimization is detailed in Appendix~\ref{app:elapsed_time}.
Empirically, when $\log \gamma_{t_{i}} \ge 2$ (see Appendix~\ref{app:shape_parameter_range}), the optimum tends to diverge toward infinity; in that case we switch to the Adams method (see Appendix \ref{sec:Lagrange-Solver}).
The influence of the shape parameter on the sampling process is examined in Appendix \ref{app:Analysis of Shape Parameter}.

\section{Experiments}
\label{sec:Experiments}
In this section, we benchmark the proposed RBF-Solver against two state-of-the-art training-free samplers, DPM-Solver++~\cite{lu2022dpmpp} and UniPC~\cite{zhao2023unipc}.  
As discussed in Section~\ref{sec:Multistep Methods for Fast Sampling of DPMs}, DPM-Solver++ relies on a Taylor series approximation, whereas UniPC can be viewed as a variant of the Adams method that employs Lagrange interpolation. In addition, \cite{amed_solver, dc_solver, dpm_solver_v3} also introduce samplers that, like RBF-Solver, leverage auxiliary parameters to enhance sampling quality. We refer to this class of methods as \emph{auxiliary-tuning samplers}.  
Experimental comparisons with these samplers are presented in Appendix~\ref{app:Experiment Results for Auxiliary-Tuning Samplers}.

\subsection{Unconditional Sampling Results}
\label{sec:uncond_results}

\begin{figure*}[t]
  \centering
  \begin{subfigure}[t]{0.31\textwidth}
    \centering
    \includegraphics[width=\linewidth]{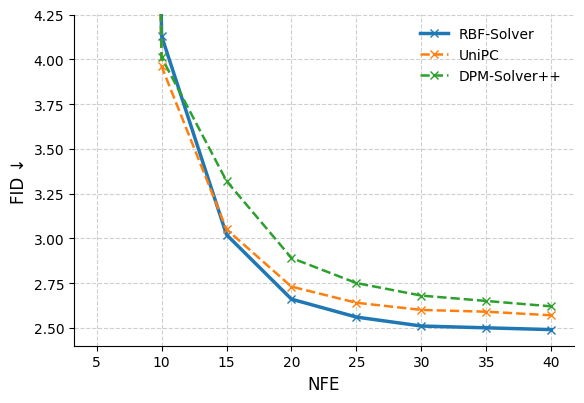}%
    \caption{CIFAR-10, Score-SDE (see Table~\ref{tab:fid_cifar10_score_sde})}
    \label{fig:uncond_cifar10_scoresde}
  \end{subfigure}
  \hfill
  \begin{subfigure}[t]{0.31\textwidth}
    \centering
    \includegraphics[width=\linewidth]{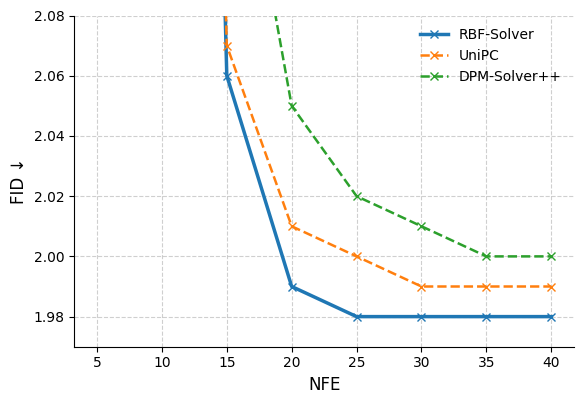}%
    \caption{CIFAR-10, EDM (see Table~\ref{tab:fid_cifar10_edm})}
    \label{fig:uncond_cifar10_edm}
  \end{subfigure}
  \hfill
  \begin{subfigure}[t]{0.31\textwidth}
    \centering
    \includegraphics[width=\linewidth]{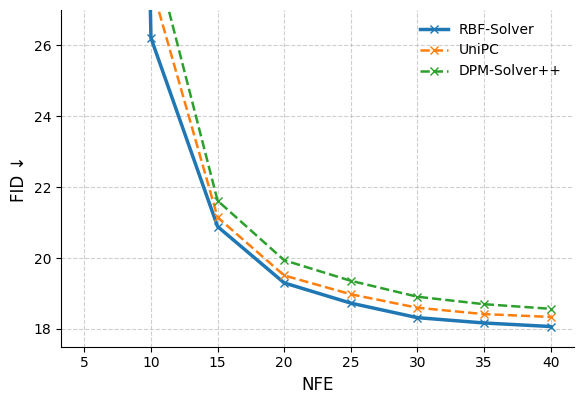}%
    \caption{Imagenet 64$\times$64, Improved-Diffusion (see Table~\ref{tab:fid_imagenet64_improved})}
    \label{fig:uncond_imagenet64}
  \end{subfigure}

  \caption{Unconditional sampling results. FID$\downarrow$ on 50k samples.}
  \label{fig:unconditional_results}
\end{figure*}

We evaluate the samplers on unconditional diffusion models.
For each setting, the training dataset, backbone model, and number of generated samples are as follows:
\begin{itemize}\setlength{\itemsep}{2pt}
  \item CIFAR-10~\cite{Krizhevsky09learningmultiple}, Score-SDE~\cite{song2021score}, 50k samples
  \item CIFAR-10~\cite{Krizhevsky09learningmultiple}, EDM~\cite{karras2022elucidating}, 50k samples
  \item ImageNet $64\times64$~\cite{chrabaszcz2017downsampled}, Improved Diffusion~\cite{nichol2021improved}, 50k samples
\end{itemize}

Performance is measured with the Fréchet Inception Distance (FID)~\cite{heusel2017gans}.  
Figure~\ref{fig:unconditional_results} summarizes the results. Although RBF-Solver shows better performance with order~4 than with order~3  
(see Appendix~\ref{app:Additional Experiment Results - Unconditional Models}),  
we set order~3 for all three samplers to ensure a fair comparison.
On the Score-SDE and EDM backbones, RBF-Solver produces higher (worse) FID scores than the competing samplers for every tested setting with $\text{NFE}\le 10$, 
but for every tested value of $\text{NFE}$ from 15 to 40, it consistently achieves lower (better) FID scores.  
For ImageNet $64\times64$ (Figure~\ref{fig:uncond_imagenet64}), RBF-Solver surpasses the other samplers at every evaluated setting with $\text{NFE}\ge 10$.  
Complete numerical results, including per-NFE FID tables, are provided in Appendix~\ref{app:Additional Experiment Results - Unconditional Models}.

\subsection{Conditional Sampling Results}
\label{sec:cond_results}
We next compare conditional sampling performance under both classifier guidance and classifier-free guidance (CFG).  
For each setting, the dataset, backbone model, sample count, guidance type, and guidance scale are as follows:
\begin{itemize}\setlength{\itemsep}{2pt}
  \item ImageNet $128\times128$~\cite{deng2009imagenet}, Guided-Diffusion~\cite{dhariwal2021diffusion}, 50k samples, Classifier guidance, Guidance scale = 6.0
  \item ImageNet $256\times256$~\cite{deng2009imagenet}, Guided-Diffusion~\cite{dhariwal2021diffusion}, 10k samples, CFG, Guidance scale = 8.0
  \item LAION-5B~\cite{schuhmann2022laion}, Stable-Diffusion v1.4~\cite{rombach2022high}, 10k samples, CFG, Guidance scale = 7.5
\end{itemize}
For the ImageNet tasks we report FID, while for Stable-Diffusion we measure image-to-image cosine similarity against samples generated by DPM-Solver++ with $\text{NFE}=200$; image embeddings are obtained with CLIP ViT-B/32~\cite{radford2021learning}.  
Results are summarized in Figure \ref{fig:conditional_results}. Although RBF-Solver performs better with order~3 than with order~2 (see Appendix~\ref{app:Additional Experiment Results - Conditional Models}), we fix order 3 for all samplers to ensure a fair comparison.
Across ImageNet 128$\times$128, ImageNet 256$\times$256, and Stable-Diffusion, RBF-Solver consistently outperforms the other samplers.  
The performance gap is more pronounced at higher guidance scales than at lower ones. Experimental details and exact numerical results are provided in Appendix~\ref{app:Additional Experiment Results - Conditional Models}.

\begin{figure*}[t]
  \centering
  \begin{subfigure}[t]{0.31\textwidth}
    \centering
    \includegraphics[width=\linewidth]{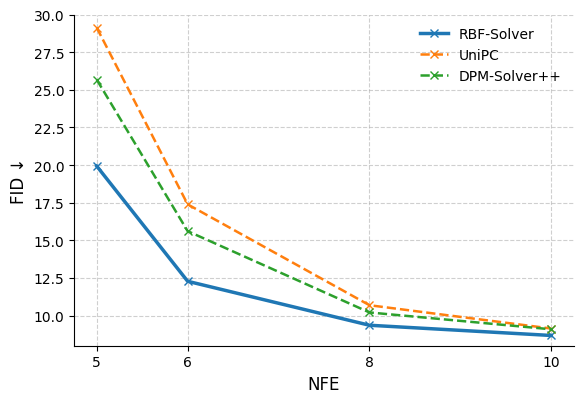}%
    \caption{ImageNet 128$\times$128, Guided-Diffusion, Guidance scale $=6.0$ (see Table~\ref{tab:fid_main_imagenet128})}
    \label{fig:cond_imagenet128}
  \end{subfigure}
  \hfill
  \begin{subfigure}[t]{0.31\textwidth}
    \centering
    \includegraphics[width=\linewidth]{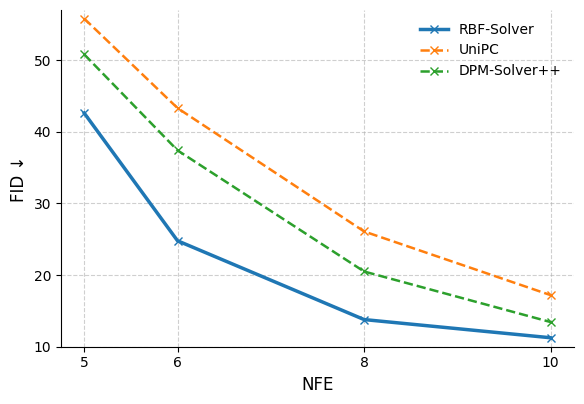}%
    \caption{ImageNet 256$\times$256, Guided-Diffusion, Guidance scale $=8.0$ (see Table~\ref{tab:fid_main_imagenet256})}
    \label{fig:cond_imagenet256}
  \end{subfigure}
  \hfill
  \begin{subfigure}[t]{0.31\textwidth}
    \centering
    \includegraphics[width=\linewidth]{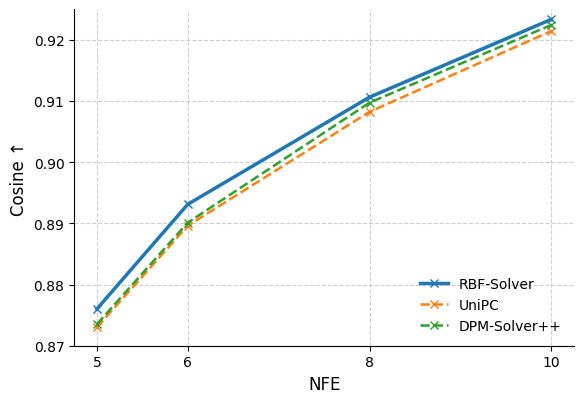}%
    \caption{Stable-Diffusion v1.4, Guidance scale $=7.5$ (see Table~\ref{tab:main_stable})}
    \label{fig:cond_sd14_clip}
  \end{subfigure}

  \caption{Conditional sampling results. ImageNet-128: FID$\downarrow$ on 50k samples; ImageNet-256 \& Stable-Diffusion v1.4: FID$\downarrow$/CLIP$\uparrow$ on 10k.}

  \label{fig:conditional_results}
\end{figure*}

\subsection{Ablation Study}
\label{sec:rbf_ablation}
\paragraph{From Adams Method to RBF--Solver}
\label{sec:adams_to_rbf}
The primary baseline for RBF-Solver is the Adams method, which employs Lagrange interpolation.  
Table~\ref{tab:adams_rbf_ablation} details the step-by-step evolution from the Adams method to the full RBF-Solver.  
\textit{RBF-Solver w/o Constant} omits the constant basis term in Eq.~\eqref{eq:RBF_interpolation}; this variant performs marginally better than RBF-Solver for $\text{NFE}\le 15$ but degrades steadily for $\text{NFE}\ge 20$.  
\textit{Adams Pred. + RBF Corr.} combines the Adams predictor with the RBF corrector, whereas \textit{RBF Pred. + Adams Corr.} applies the RBF predictor followed by the Adams corrector.  
Using the RBF formulation in the corrector yields a greater benefit than restricting it to the predictor, and for all tested settings with $\text{NFE}\ge 20$ the full RBF-Solver matches or surpasses the other configurations.


\begin{table}[h!]
    \centering
    \setlength{\tabcolsep}{6.5pt}     
    \renewcommand{\arraystretch}{1.15}
    \caption{From Adams method to RBF-Solver. Sample quality measured by FID $\downarrow$ on ImageNet 64$\times$64 (Improved-Diffusion), evaluated on 50k samples.}

    \label{tab:adams_rbf_ablation}
    \begin{tabularx}{\textwidth}{lYYYYYYYY}
        \toprule
        & \multicolumn{8}{c}{\textbf{NFE}}\\
        \cmidrule(lr){2-9}
        & \textbf{5} & \textbf{10} & \textbf{15} & \textbf{20}
          & \textbf{25} & \textbf{30} & \textbf{35} & \textbf{40}\\
        \midrule
        Adams Method
            & \textbf{110.02} & 26.94 & 21.07 & 19.49
            & 18.97 & 18.59 & 18.41 & 18.33\\
        RBF-Solver w/o Constant
            & 112.54 & 26.43 & \textbf{20.85} & 19.54
            & 18.95 & 18.51 & 18.31 & 18.16\\    
        Adams Pred.~+ RBF Corr.
            & 111.29 & 26.81 & 20.93 & 19.32
            & 18.73 & 18.32 & \textbf{18.17} & \textbf{18.07}\\
        RBF Pred.~+ Adams Corr.
            & 117.63 & \textbf{26.30} & 21.00 & 19.45
            & 18.94 & 18.58 & 18.40 & 18.33\\
        \rowcolor{gray!20}
        RBF-Solver
            & 116.55 & 26.64 & 20.89 & \textbf{19.30}
            & \textbf{18.72} & \textbf{18.31} & \textbf{18.17} & \textbf{18.07}\\
        \bottomrule
    \end{tabularx}
\end{table}
\paragraph{Shape Parameter Optimization}
\label{sec:shape_opt}
Table~\ref{tab:ablation_gamma} compares four strategies for optimizing the shape parameters
$\gamma_{t_{i+1}}^{\text{pred}}$ and $\gamma_{t_i}^{\text{corr}}$ introduced in
Section~\ref{sec:Learning of Shape Parameter}.  
In the \emph{Shared-$\gamma$} configuration, the predictor and corrector use the same value,
whereas \emph{Split-$\gamma$} allows the two values to differ.  
The \emph{Indep.} variant optimizes the parameters separately, while the \emph{Joint} variant
optimizes them simultaneously.  

For $\text{NFE}=5$ and $10$, the Shared-$\gamma$ setting provides a slight advantage,  
but for every tested value with $\text{NFE}\ge 15$, Split-$\gamma$ achieves lower (better) scores.  
Joint optimization consistently outperforms independent optimization, albeit by a small margin.


\begin{table}[h]
    \centering
    \setlength{\tabcolsep}{6.5pt}     
    \renewcommand{\arraystretch}{1.15}
    \caption{Shape parameter optimization. Sample quality measured by FID $\downarrow$ on ImageNet 64$\times$64 (Improved-Diffusion), evaluated on 50k samples.}
    \label{tab:ablation_gamma}
    \begin{tabularx}{\textwidth}{lYYYYYYYY}
        \toprule
        & \multicolumn{8}{c}{\textbf{NFE}}\\
        \cmidrule(lr){2-9}
        & \textbf{5} & \textbf{10} & \textbf{15} & \textbf{20}
          & \textbf{25} & \textbf{30} & \textbf{35} & \textbf{40}\\
        \midrule
        Shared-$\gamma$
            & \textbf{116.37} & \textbf{26.07} & 21.02 & 19.46
            & 18.95 & 18.58 & 18.40 & 18.28\\
        Split-$\gamma$ / Indep.
            & 117.35 & 26.86 & 20.91 & 19.33
            & 18.73 & 18.32 & \textbf{18.17} & 18.08\\
        \rowcolor{gray!20}
        Split-$\gamma$ / Joint
            & 116.55 & 26.64 & \textbf{20.89} & \textbf{19.30}
            & \textbf{18.72} & \textbf{18.31} & \textbf{18.17} & \textbf{18.07}\\
        \bottomrule
    \end{tabularx}
\end{table}
\subsection{High-Order Stability}
\label{sec:high_order_stability}
Table~\ref{tab:rbfsolver_unipc_ablation} reports sampling quality when RBF-Solver and UniPC are run at orders 3 and above.  
For UniPC, the best FID is obtained with \(p=3\) at \(\text{NFE}=10\); the optimum shifts to \(p=4\) when \(\text{NFE}=20\), 30, or 40 and performance deteriorates sharply once \(p\ge 5\).  
In contrast, RBF-Solver attains its minimum FID with \(p=4\) at \(\text{NFE}=10\) and with \(p=5\) or \(p=6\) when \(\text{NFE}=20\), 30, or 40; it remains stable even at higher orders.  
This robustness is attributed to two factors: (i) the learned shape parameters and (ii) the locality of Gaussian RBF bases, whose influence diminishes rapidly with distance from their centers~\cite{fornberg2007runge}.

\begin{table}[h]
    \centering
    \setlength{\tabcolsep}{3.5pt}
    \renewcommand{\arraystretch}{1.15}
    \caption{High-order stability. Sample quality measured by FID $\downarrow$ on the ImageNet 64$\times$64 (Improved-Diffusion), evaluated on 50k samples.}
    \label{tab:rbfsolver_unipc_ablation}
    \begin{tabular}{lcccccccc}
        \toprule
        & \multicolumn{2}{c}{\textbf{NFE = 10}} & \multicolumn{2}{c}{\textbf{NFE = 20}} &
          \multicolumn{2}{c}{\textbf{NFE = 30}} & \multicolumn{2}{c}{\textbf{NFE = 40}}\\
        \cmidrule(lr){2-3}\cmidrule(lr){4-5}\cmidrule(lr){6-7}\cmidrule(lr){8-9}
        & \textbf{RBF-Solver} & \textbf{UniPC} & \textbf{RBF-Solver} & \textbf{UniPC} &
          \textbf{RBF-Solver} & \textbf{UniPC} & \textbf{RBF-Solver} & \textbf{UniPC}\\
        \midrule
        $p\!=\!3$ & 26.47 & \textbf{27.75} & 19.30 & 19.51 & 18.31 & 18.60 & 18.07 & 18.34\\
        $p\!=\!4$ & \textbf{24.72} & 33.60 & 18.67 & \textbf{19.37} & 18.08 & \textbf{18.39} & 17.95 & \textbf{18.25}\\
        $p\!=\!5$ & 25.37 & 40.55 & 18.58 & 20.25 & \textbf{17.92} & 18.48 & 17.92 & 18.27\\
        $p\!=\!6$ & 25.30 & 40.55 & \textbf{18.44} & 29.93 & 17.99 & 23.89 & \textbf{17.79} & 20.69\\
        $p\!=\!7$ & 26.39 & 55.96 & 18.62 & 58.40 & 18.52 & 68.87 & 18.97 & 80.48\\
        $p\!=\!8$ & 29.26 & 106.09 & 18.68 & 87.22 & 18.08 & 123.89 & 18.16 & 146.54\\
        \bottomrule
    \end{tabular}
\end{table}
\section{Conclusions}
\label{sec:conclusions}
This paper proposes RBF-Solver, a sampling scheme for diffusion models based on radial basis function interpolation.
By satisfying the coefficient summation condition, the first-order case coincides with the Euler method and DDIM \cite{song2020denoising}.
The shape parameters that control the Gaussian widths cause RBF-Solver to converge to the Adams method as they tend to infinity.
The optimal shape parameters can be found by following the target trajectory.
RBF-Solver outperforms strong baselines in unconditional generation on CIFAR-10~\cite{Krizhevsky09learningmultiple} and ImageNet 64$\times$64 \cite{chrabaszcz2017downsampled} whenever NFE ≥ 15, and achieves better FID and cosine similarity on conditional ImageNet 128$\times$128, 256$\times$256~\cite{deng2009imagenet} and Stable Diffusion v1.4 \cite{rombach2022high} in the low-NFE regime.
Unlike polynomial-based samplers, which destabilize from fourth order onward, RBF-Solver remains robust at higher orders.

\paragraph{Limitations} This study investigates only the Gaussian RBF; other kernels—such as multiquadric and Wendland’s compactly supported functions~\cite{lee2015flat,Fornberg2004mq}—warrant further investigation. Moreover, adaptive runtime schemes that adjust the shape parameters on-the-fly, without an additional optimization procedure, remain unexplored.

\paragraph{Broader Impact} The proposed accelerated sampler can significantly lower the computational barrier for generating high-quality visual, auditory, and video content, fostering scientific and creative applications. Conversely, the same efficiency gains may facilitate malicious uses of synthetic media for disinformation or impersonation.



\clearpage
\bibliographystyle{unsrt}  
\bibliography{main}  

\clearpage
\appendix
\section{Proofs of Accuracy and Convergence of RBF-Solver} 
\label{eq:RBF-Accuracy}
In this section, our aim is to prove the accuracy of RBF-Solver.
First, we provide common assumptions in the convergence analysis of multistep approaches. Next, we summarize the existing proof of the accuracy of RBF interpolation (Appendix~\ref{sec:Accuracy RBF-interpolation}) and present a proof of RBF-Solver's accuracy in Theorem~\ref{theorem:accuracy} (Appendix~\ref{sec:proof_Thm_in_Appendix}).  Appendix~\ref{sec:Euler_Converence_in_Appendix} provides a detailed explanation of equal-coefficient sampling convergence of RBF-Solver in Section~\ref{sec:Euler_Convergence} and
Appendix~\ref{sec:RBF coefficients} introduces a reparameterization of RBF coefficients based on Lagrange interpolation convergence. 

{\bf Assumption A.1.} The data prediction model
$\bm{x}_\theta(\bm{x}_t, t)$
is Lipschitz continuous with respect to $\bm{x}$ with Lipschitz constant $L$.

{\bf Assumption A.2.}
\( h := \max_{0\leq i \leq M-1} h_{t_i} = \mathcal{O}(1/M).\)

{\bf Assumption A.3.} The starting values 
\( \bm{\tilde x}_{t_{i+1}}, \, i=0,\dots,p-2,\)
satisfy 
\[|\bm{\tilde x}_{t_{i+1}} - \bm{x}_{t_{i+1}}|= \mathcal{O}(h^{p+1}), \quad i=0,\dots,p-2. \]

\subsection{Accuracy of RBF-interpolation} \label{sec:Accuracy RBF-interpolation}
To provide a complete and self-contained proof, we recall the accuracy properties of RBF interpolation from the existing literature~\cite{lee2010nonlinear}. We summarize and adapt the results to the notation of this paper for consistency. 
\begin{lemma} \label{lemma:accuracy of RBF interpolation}
Suppose that the RBF interpolation
$\mathcal{R}(\lambda\,;\bm{\lambda}_{t_i}^{(p)})$ 
in Eq.~(\ref {eq:RBF_interpolation}) interpolates the values
$\{ \bm{\hat x}_\theta(\bm{\hat x}_{\lambda_{t_{i-j}}},\lambda_{t_{i-j}}) \}_{j=0}^{p-1}$ and  
its weights satisfy
$\sum_{j=0}^{p-1}\bm{w}_{t_{i}, j}=0$.
Then the RBF interpolation has local accuracy of order $p$
for \( \lambda \in (\lambda_{t_i},\lambda_{t_{i+1}} )\) as
\begin{equation*}   
\big| \bm{\hat x}_\theta(\bm{\hat x}_\lambda, \lambda) - \mathcal{R}(\lambda\,;\bm{\lambda}_{t_i}^{(p)}) \big| = \mathcal{O}(h^{p}).
\end{equation*}
\end{lemma}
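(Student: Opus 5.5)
The plan is to write the interpolant in cardinal (Lagrange-type) form and split the error into a Taylor remainder and a polynomial-reproduction defect. Throughout, write $f(\lambda) := \bm{\hat x}_\theta(\bm{\hat x}_\lambda,\lambda)$, $f_j := f(\lambda_{t_{i-j}})$, and assume $f$ is $C^p$ in $\lambda$ (componentwise).

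\textbf{Step 1: cardinal form and constant reproduction.} Since $\bm W_{t_i}^{(p)} = (\bm\Phi_{t_i}^{(p)})^{-1}\bm X_{t_i}^{(p)}$ and $\bm X_{t_i}^{(p)}$ is linear in the data, the interpolant can be written as
$\mathcal{R}(\lambda;\bm\lambda_{t_i}^{(p)}) = \sum_{j=0}^{p-1} u_j(\lambda)\, f_j$.
Here each $u_j$ lies in $\mathrm{span}\{\phi_{t_i,0},\dots,\phi_{t_i,p-1},1\}$ and satisfies $u_j(\lambda_{t_{i-k}})=\delta_{jk}$. For constant data $f_j\equiv c$, the solution $\bm w_{t_i,j}=0$, $\bm w^{\mathrm{const}}_{t_i}=c$ solves the system, including the constraint $\sum_j \bm w_{t_i,j}=0$. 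Hence $\sum_j u_j(\lambda)\equiv 1$; this is the interpolation analogue of the summation condition in Eq.~\eqref{eq:summation-condition}.

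\textbf{Step 2: rescaling to make the $u_j$ bounded.} Set $\lambda=\lambda_{t_i}+h_{t_i}s$ and $\lambda_{t_{i-j}}=\lambda_{t_i}+h_{t_i}s_j$. Under a quasi-uniform step assumption (consistent with Assumption A.2), the $s_j$ stay in a fixed compact set with separation bounded below. Because the Gaussian width is $\gamma_{t_i}h_{t_i}$, we get $\phi_{t_i,j}(\lambda)=\exp[-((s-s_j)/\gamma_{t_i})^2]$, so the rescaled kernel matrix does not depend on $h$. Its invertibility follows from strict positive definiteness of the Gaussian on the subspace $\{\bm w:\sum_j \bm w_{t_i,j}=0\}$, which is the standard argument for the augmented system. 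It follows that $\sup_{s\in[0,1]}|u_j|\le C(\gamma_{t_i},\{s_j\})$, with $C$ independent of $h$.

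\textbf{Step 3: splitting the error.} Let $T_{p-1}$ be the Taylor polynomial of $f$ at $\lambda_{t_i}$ of degree $p-1$. Then
\[
f(\lambda)-\mathcal{R}(\lambda;\bm\lambda_{t_i}^{(p)})
=\bigl[f(\lambda)-T_{p-1}(\lambda)\bigr]
-\sum_{j=0}^{p-1}u_j(\lambda)\bigl[f_j-T_{p-1}(\lambda_{t_{i-j}})\bigr]
+\Bigl[T_{p-1}(\lambda)-\sum_{j=0}^{p-1}u_j(\lambda)T_{p-1}(\lambda_{t_{i-j}})\Bigr].
\]
The first two brackets are $\mathcal{O}(h^p)$ by Taylor's theorem and the uniform bound on the $u_j$ from Step 2. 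The degree-$0$ part of the last bracket vanishes by Step 1.

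\textbf{Step 4 (main obstacle): the reproduction defect for degrees $1\le k\le p-1$.} It remains to control $h^k\bigl(s^k-\sum_j u_j(s)s_j^k\bigr)$. My first attempt is the expansion underlying Lemma~\ref{lemma:RBF convergence} (as in \cite{Lee2013rbf,lee2010nonlinear}). Write each Gaussian as $e^{-s^2/\gamma^2}e^{2ss_j/\gamma^2}e^{-s_j^2/\gamma^2}$ and expand in powers of $1/\gamma^2$. The cardinal functions then differ from the Lagrange basis $\ell_j$ by terms whose moments $\sum_j u_j s_j^k$ agree with $s^k$ through order $p-1$. The leftover defect should then carry an extra factor $h^{p-k}$, giving $\mathcal{O}(h^p)$ overall. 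I expect this step to be the delicate one. With the width tied exactly to $h$, the defect in rescaled variables is $h$-independent, so I would need to use the regime in which the shape parameter enters through $\gamma_{t_i}h_{t_i}$ relative to a fixed scale, i.e. the flat-limit expansion of \cite{Lee2013rbf} with constants depending on $\gamma_{t_i}$. If that fails, I would fall back on making the constant explicit: track the defect as $C(\gamma_{t_i})h^k$ and show $C(\gamma_{t_i})\to 0$ at the rate required as $\gamma_{t_i}\to\infty$, via Lemma~\ref{lemma:RBF convergence}.
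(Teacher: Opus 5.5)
Your Steps 1--3 are sound. Step 1 is in fact more careful than the paper. Under the constraint $\sum_j \bm w_{t_i,j}=\bm 0$, no single Gaussian lies in the interpolation space $\{\sum_j \bm w_{t_i,j}\phi_{t_i,j}+\bm w^{\mathrm{const}}_{t_i}:\sum_j\bm w_{t_i,j}=\bm 0\}$, so constants are the only functions reproduced exactly.

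The genuine gap is Step 4, and it cannot be closed under the lemma's hypotheses. The remark you make there is exactly the obstruction. For fixed $\gamma_{t_i}$ the $u_j(s)$ are independent of $h$, so the error equals $\sum_{k=1}^{p-1}\frac{h^k}{k!}f^{(k)}(\lambda_{t_i})\bigl(s^k-\sum_j u_j(s)s_j^k\bigr)+\mathcal{O}(h^p)$. The $k=1$ term is of exact order $h$. The reason is that the defect $s-\sum_j u_j(s)s_j$ is analytic and cannot vanish identically on $(0,1)$; otherwise a bounded function (Gaussians plus a constant) would equal $s$ on all of $\mathbb{R}$.

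Concretely, take $p=2$, uniform steps, $\lambda_{t_i}=0$, $f(\lambda)=\lambda$ and $\gamma=\gamma_{t_i}$. The linear system gives $\bm w^{\mathrm{const}}_{t_i}=-h/2$ and $\bm w_{t_i,0}=-\bm w_{t_i,1}=h/\bigl(2(1-e^{-1/\gamma^2})\bigr)$, hence
\[
f(hs)-\mathcal{R}\bigl(hs;\bm{\lambda}_{t_i}^{(2)}\bigr)=h\Bigl(s+\tfrac12-\frac{e^{-s^2/\gamma^2}-e^{-(s+1)^2/\gamma^2}}{2\bigl(1-e^{-1/\gamma^2}\bigr)}\Bigr),
\]
which is about $0.47h$ at $s=\tfrac12$, $\gamma=1$. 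So with $\gamma_{t_i}$ independent of $h$ (as in the paper, where $\log\gamma_{t_i}\in[-2,2]$), the $\mathcal{O}(h^p)$ claim is false for every $p\ge2$.

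Neither of your fallbacks rescues it:
\begin{enumerate}
\item The $\gamma_{t_i}^{-2}$ flat-limit correction to the $u_j$ already has a nonzero first moment. The paper's own $p=3$ table in Appendix~\ref{sec:RBF coefficients} gives $\sum_j a_{j2}(\lambda)(\lambda_{t_{i-j}}-\lambda_{t_i})=-\lambda(\lambda+0.1)(\lambda+0.2)\neq0$.
\item Letting $\gamma_{t_i}\to\infty$ at a suitable rate is an extra hypothesis, which Lemma~\ref{lemma:RBF convergence} does not supply.
\end{enumerate}

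The paper's proof takes a route that appears to sidestep your Step 4. It chooses $g=\sum_n d^\lambda_n\phi_{t_i,n}$ whose derivatives of order $0,\dots,p-1$ match those of $f$ at the evaluation point $\lambda$. It then uses $\sum_j u_j(\lambda)g(\lambda_{t_{i-j}})=g(\lambda)$ and Taylor-expands $g-f$ about $\lambda$, so no polynomial reproduction is invoked. It hides the same obstruction, however.
\begin{enumerate}
\item The reproduction identity requires $g$ to lie in the constrained space above. This is repairable by using $1$ and differences of Gaussians.
\item More seriously, the Gaussians have width $\gamma_{t_i}h_{t_i}$. Matching $g'(\lambda)=f'(\lambda)$ therefore generically forces $g^{(m)}(\lambda)$ to be of size $h^{1-m}$ for $m\ge p$, so the remainder is $\mathcal{O}(h)$, not $\mathcal{O}(h^p)$.
\end{enumerate}

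What makes the lemma true is holding the width $\beta=\gamma_{t_i}h_{t_i}$ fixed as $h\to0$. This is the regime Appendix~\ref{sec:RBF coefficients} implicitly uses when it expands in $h/\beta$. Under that extra hypothesis your Steps 1 and 3 survive. Step 2 must be redone, since the rescaled kernel now depends on $h$; the $u_j$ stay bounded because they converge to the $\ell_j$.

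Step 4 is then most easily finished with the paper's device, with $g$ taken in the constrained space. With $\phi(x)=e^{-x^2/\beta^2}$, that space tends to $\mathrm{span}\{1,\phi'(\cdot-\lambda_{t_i}),\dots,\phi^{(p-1)}(\cdot-\lambda_{t_i})\}$. The limiting derivative-matching matrix is invertible: up to diagonal factors it is a Gram matrix of $\omega,\dots,\omega^{p-1}$ against the positive Fourier transform of $\phi$. Hence $g^{(p)}$ stays bounded uniformly in $h$, and every degree-$k$ defect is $\mathcal{O}(h^p)$ at once.
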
 
\begin{proof}
From the data interpolation property, the RBF interpolation 
$\mathcal{R}(\lambda\,;\bm{\lambda}_{t_i}^{(p)})$ 
can be formulated as a Lagrange-type
\begin{equation} \label{eq:RBF Lagrange-type}
\mathcal{R}(\lambda\,;\bm{\lambda}_{t_i}^{(p)}) = \sum_{j=0}^{p-1} u_{j}(\lambda) 
\bm{\hat x}_\theta(\bm{\hat x}_{\lambda_{t_{i-j}}}, \lambda_{t_{i-j}}) 
\end{equation} 
where 
\( u_{j}(\lambda_{t_{i-k}}) = \delta_{j,k}, \, j,k=0,\dots,p-1\).
This follows from the properties of RBF interpolation,
\begin{equation*}
\sum_{j=0}^{p-1}  u_{j}(\lambda)  \phi_{{t_{i}}, n}(\lambda_{t_j}) 
= \phi_{{t_{i}}, n}(\lambda), \quad n=0,\dots,p-1.  
\end{equation*} 
To prove the interpolation accuracy, we define an auxiliary function 
$g(\cdot) \in \text{span}\{\phi_{{t_{i}}, 0},\dots, \phi_{{t_{i}}, p-1} \}$
for a fixed \( \lambda \), 
as
\[ g(\cdot) := \sum_{n=0}^{p-1} d^{\lambda}_n \phi_{{t_{i}}, n}(\cdot) \] 
such that the $k-$th derivatives of $g(\cdot)$ are equal to those of 
$\bm{\hat x}_\theta$ at the fixed $\lambda$ for all $k=0,\dots,p-1$, that is,
\[ g^{(k)}(\lambda) = \bm{\hat x}_\theta^{(k)}(\bm{\hat x}_\lambda,\lambda), \quad k=0,\dots,p-1.\]
Then the function \( g(\cdot)\) satisfies the relation
\begin{align*}  
\sum_{j=0}^{p-1}  u_{j}(\lambda)g(\lambda_{t_{i-j}})
= \sum_{n=0}^{p-1} d^\lambda_n \sum_{j=0}^{p-1}  u_{j}(\lambda)  \phi_{{t_{i}}, n}(\lambda_{t_{i-j}}) 
= \sum_{n=0}^{p-1} d^\lambda_n 
\phi_{{t_{i}}, n}(\lambda) 
= g(\lambda)
= \bm{\hat x}_\theta(\bm{\hat x}_\lambda,\lambda).
\end{align*}
We now calculate the interpolation error using 
$\bm{\hat x}_\theta(\bm{\hat x}_\lambda, \lambda) =\sum_{j=0}^{p-1}  u_{j}(\lambda)g(\lambda_{t_{i-j}})$, as follows:
\begin{align*}
\big| \bm{\hat x}_\theta(\bm{\hat x}_\lambda, \lambda) - \mathcal{R}(\lambda\,;\bm{\lambda}_{t_i}^{(p)}) \big| 
&= \bigg| \sum_{j=0}^{p-1}  
u_{j}(\lambda) g(\lambda_{t_{i-j}})-\sum_{j=0}^{p-1}  
u_{j}(\lambda) \bm{\hat x}_\theta(\bm{\hat x}_{\lambda_{t_{i-j}}}, \lambda_{t_{i-j}})\bigg| \\
& =  \bigg| \sum_{j=0}^{p-1}  u_{j}(\lambda) 
\big(g(\lambda_{t_{i-j}})-\bm{\hat x}_\theta(\bm{\hat x}_{\lambda_{t_{i-j}}}, \lambda_{t_{i-j}})\big) \bigg| 
\end{align*}
By taking Taylor expansions of $g(\lambda_{t_{i-j}})$ and $\bm{\hat x}_\theta(\bm{\hat x}_{\lambda_{t_{i-j}}}, \lambda_{t_{i-j}})$ at $\lambda$ for each $j=0,\dots,p-1$
and applying 
$g^{(k)}(\lambda) = \bm{\hat x}_\theta^{(k)}(\bm{\hat x}_\lambda,\lambda), k=0,\dots,p-1$,
we conclude that
\begin{align*}
\big| \bm{\hat x}_\theta(\bm{\hat x}_\lambda, \lambda) - \mathcal{R}(\lambda\,;\bm{\lambda}_{t_i}^{(p)}) \big| 
& =  \bigg| \sum_{j=0}^{p-1}  u_{j}(\lambda) 
\frac{(\lambda_{t_{i-j}}-\lambda)^p}{p !}( g^{(p)}(\lambda)
- \bm{\hat x}^{(p)}_\theta(\bm{\hat x}_\lambda,\lambda))  \bigg|+ \mathcal{O}(h^{p+1}) \\
&= \bigg| \sum_{j=0}^{p-1}  u_{j}(\lambda) 
\frac{(\lambda_{t_{i-j}}-\lambda)^p}{p !}  \bigg| 
\bigg| g^{(p)}(\lambda)
- \bm{\hat x}^{(p)}_\theta(\bm{\hat x}_\lambda,\lambda) \bigg| + \mathcal{O}(h^{p+1}) \\
&= \mathcal{O}(h^p).
\end{align*}
\end{proof}

\subsection{Proof of accuracy of RBF-Solver}
\label{sec:proof_Thm_in_Appendix}
\begin{proof}
Since RBF interpolation in Eq.~(\ref{eq:RBF_interpolation}) is of order $p$, as established in
Lemma \ref{lemma:accuracy of RBF interpolation}, that is, 
\[ \big| \bm{\hat x}_\theta(\bm{\hat x}_\lambda, \lambda) - 
\mathcal{R}(\lambda\,;\bm{\lambda}_{t_{i}}^{(p)}) \big| = \mathcal{O}(h^{p}),\]
for each $i \geq p-1,$ 
we evaluate the error of the RBF sampling equation
in Eq.~(\ref{eq:RBF-Solver Equation}) as follows:
\begin{align*}
\bm{\tilde x}_{t_{i+1}} -\bm{x}_{t_{i+1}} 
& =  \sigma_{t_{i+1}}  \left( \int_{\lambda_{t_{i}}}^{\lambda_{t_{i+1}}} 
                e^{\lambda} 
                \bm{\hat x}_\theta(\bm{\hat x}_\lambda,\lambda)
                \mathrm{d}\lambda
            \,\, - \, (\bm c_{t_{i}}^{(p)})^{\top}\,\bm X_{t_{i}}^{(p)} \right) \\
& =  \sigma_{t_{i+1}} \int_{\lambda_{t_{i}}}^{\lambda_{t_{i+1}}}
    e^{\lambda} 
    \left( \bm{\hat x}_\theta(\bm{\hat x}_\lambda, \lambda) -     
    \mathcal{R}(\lambda\,;\bm{\lambda}_{t_{i}}^{(p)}) \right) \mathrm{d}\lambda\\
& = \mathcal{O}(h^{p+1}).
\end{align*}
For $i=0,\dots,p-2$, we adopt a common assumption in multistep approaches for the starting values 
\( \bm{\tilde x}_{t_{i+1}}\), namely,
\[|\bm{\tilde x}_{t_{i+1}} - \bm{x}_{t_{i+1}}|= \mathcal{O}(h^{p+1}). \]
Therefore, for each $i=0,\dots,M-1$, the RBF sampling method has local accuracy of order $p+1$, and
we conclude that RBF-Solver has the global accuracy of order $p$.
\end{proof}

\subsection{Equal-Coefficient Sampling Convergence}
\label{sec:Euler_Converence_in_Appendix}
In this section, we provide a detailed explanation showing that the sampling formula of RBF-Solver converges to an equal-coefficient sampling
as \( \gamma_{t_{i}} \to 0 \).
For \(\phi_{{t_{i}}, j}(\lambda) \) in  Eq.~\eqref{eq:RBF_interpolation},  
the pointwise limit as \( \gamma_{t_{i}} \to 0 \) is given for each \( j=0, \dots, p-1\)  by
\[ \lim_{\gamma_{t_{i}} \to 0} \phi_{{t_{i}}, j}(\lambda) =
\begin{cases}
1, & \text{if } \lambda = \lambda_{t_{i-j}} \\
0, & \text{if } \lambda \ne \lambda_{t_{i-j}}
\end{cases},
\]
and for $l_{t_{i}, j}, j=0,\dots,p-1$ 
in Eq.~\eqref{eq:RBF-Solver Integral},
the limit is given by
\[ \lim_{\gamma_{t_{i}} \to 0}l_{{t_{i}}, j} 
= \lim_{\gamma_{t_{i}} \to 0} \int_{\lambda_{t_i}}^{\lambda_{t_{i+1}}} e^{\lambda} \phi_{{t_{i}}, j}(\lambda) \,\mathrm{d}\lambda =0. \]

Therefore, for the Kernel matrix $\mathbf{\Phi}_{t_i}^{(p)}$ defined in Section \ref{sec:Derivation of Sampling Equations}
and the integral vector $\bm{l}_{t_{i}}^{(p)}$ introduced in Eq.~(\ref{eq:RBF-Solver Equation}),
as the shape parameter $\gamma_{t_{i}}$ tends to zero, the matrices converge to the following limiting forms:
\begin{equation*}
\begin{aligned}
\mathbf{\Phi}^{(p)}_{\text{limit}} 
&\coloneqq
\begin{pmatrix}
1 
& 0
& \cdots 
& 0 
& 1
\\[6pt]
0 
& 1
& \cdots 
& 0
& 1
\\
\vdots 
& \vdots
& \ddots 
& \vdots 
& \vdots
\\
0
& 0
& \cdots 
& 1
& 1
\\[6pt]
1 & 1& \cdots & 1 & 0
\end{pmatrix}
=\begin{bmatrix}
\mathbf{I} & &\mathbf{1} \\
 & &\\
\mathbf{1}^T & &0
\end{bmatrix},
\,\,\,
\bm{l}^{(p)}_{\text{limit}} 
&\coloneqq
\begin{pmatrix}
0 \\[6pt] 
0\\
\vdots\\
0\\[6pt]
l_{t_{i}}^{\text{const}}
\end{pmatrix}
\end{aligned}
\end{equation*}
where $\mathbf{I}$ is the $p \times p$ identity matrix
and $\mathbf{1}$ is a vector of ones of length $p$.
Since the coefficient vector \( \bm c_{t_{i}}^{(p)} \) of RBF sampling equation in Eq.~\eqref{eq:RBF-Solver PC}
satisfies the linear system
\( \bm {\Phi}^{(p)}_{\text{limit}} \bm c_{t_{i}}^{(p)} = \bm l^{(p)}_{\text{limit}} \) in Eq.~\eqref{eq:RBF-Solver Equation},
the coefficients can be computed by inverting
\( \big(\mathbf{\Phi}^{(p)}_{\text{limit}}\big)^{-1}  \) as
\[
\big(\mathbf{\Phi}^{(p)}_{\text{limit}}\big)^{-1} = 
\begin{bmatrix}
\mathbf{I} - \frac{1}{p} \mathbf{1}\mathbf{1}^T & \frac{1}{p} \mathbf{1} \\
 & \\
\frac{1}{p}\mathbf{1}^T & -p
\end{bmatrix}
=
\begin{pmatrix}
1-1/p
& -1/p
& \cdots 
& -1/p 
& 1/p
\\[6pt]
-1/p 
& 1-1/p
& \cdots 
& -1/p
& 1/p
\\
\vdots 
& \vdots
& \ddots 
& \vdots 
& \vdots
\\
-1/p
& -1/p
& \cdots 
& 1-1/p
& 1/p
\\[6pt]
1/p & 1/p& \cdots & 1/p & -p
\end{pmatrix},
\]
and by directly computing
\( \bm c_{t_{i}}^{(p)} =\big(\mathbf{\Phi}^{(p)}_{\text{limit}}\big)^{-1} 
\bm l^{(p)}_{\text{limit}}  \). 
Finally, we obtain
\[  \big[\bm c_{t_{i}}^{(p)}\big]_0^{p-1} =
    \left(
    \frac{1}{p}
    l_{t_{i}}^{\text{const}} \,,
    \frac{1}{p}
    l_{t_{i}}^{\text{const}}\,,\dots\,,
    \frac{1}{p}
    l_{t_{i}}^{\text{const}}
    \right)^{\top} 
    \in
    \mathbb{R}^{\raisebox{.3ex}{$\scriptstyle p$}}.
\]
(For a vector $\mathbf{v}\in\mathbb{R}^{p+1}$,
we denote by $[\mathbf{v}]_{j}^k$ the subvector in $\mathbb{R}^{k+j-1}$ consisting of the $j$-th through $k$-th components of $\mathbf{v}$.)
As a result, the sampling equation in 
Eq.~\eqref{eq:RBF-Solver Equation} reduces to
\begin{equation*}
\label{eq:RBF-Solver Equation of Euler case}
\tilde{\bm x}_{t_{i+1}} =
\frac{\sigma_{t_{i+1}}}{\sigma_{t_{i}}}
\tilde{\bm x}_{t_{i}}
+
\sigma_{t_{i+1}}
\frac{l_{t_{i}}^{\text{const}}}{p}
\sum_{j=0}^{p-1}
{\bm{\hat x}}_\theta(\bm{\hat x}_{{t_{i-j}}},\lambda_{t_{i-j}})
\end{equation*}
which results in a sampling method
with equal coefficients, $l_{i}^{\text{const}}/p$.
This can be derived via a piecewise-constant regression quadrature approximation for the integral 
$ \int_{\lambda_{t_{i}}}^{\lambda_{t_{i+1}}}
e^{\lambda} \bm{\hat x}_\theta(\bm{\hat x}_\lambda,\lambda)
\mathrm{d}\lambda $ 
using the previous values 
$\{\bm{\hat x}_\theta(\bm{\hat x}_{\lambda_{t_{i-j}}},\lambda_{t_{i-j}})\}_{j=0}^{p-1}$.

\subsection{Reparameterization of RBF Coefficients
Using Lagrange Interpolation Convergence}
\label{sec:RBF coefficients}
Recall the Lagrange-type RBF interpolation 
interpolating the values
$\{ \bm{\hat x}_\theta(\bm{\hat x}_{\lambda_{t_{i-j}}},\lambda_{t_{i-j}}) \}_{j=0}^{p-1}$, in Eq.~\eqref{eq:RBF Lagrange-type} 
\begin{equation*} 
\mathcal{R}(\lambda\,;\bm{\lambda}_{t_i}^{(p)}) = \sum_{j=0}^{p-1} u_{j}(\lambda) 
\bm{\hat x}_\theta(\bm{\hat x}_{\lambda_{t_{i-j}}}, \lambda_{t_{i-j}}) 
\end{equation*} 
and its convergence result given
in Lemma \ref{lemma:RBF convergence}
\[   \lim_{\gamma_{t_{i}} \to \infty}
\mathcal{R}(\lambda\,;\bm{\lambda}_{t_{i}}^{(p)}) 
= \mathcal{L}^{(p)}(\lambda).
\]
We consider a reparameterization of RBF coefficients \( u_j(\cdot)\) by Taylor expansion of 
\( u_j(\cdot)\) with respect to \( \frac{h}{\beta} \), where $ \beta:=\gamma_{t_{i}}\cdot h_{t_{i}} $ 
as follows: 
\begin{equation} \label{eq:Taylor_of_coefficients}
u_j(\cdot)= \ell_j(\cdot) + a_{j2}(\cdot) \bigg( \frac{h}{\beta} \bigg)^2  + a_{j4}(\cdot) \bigg( \frac{h}{\beta} \bigg)^4 + a_{j6}(\cdot) \bigg( \frac{h}{\beta} \bigg)^6 + \cdots,     
\end{equation}
where $\ell_j(\cdot)$ is the Lagrange basis in Eq.~\eqref{eq:lagrange-interpolation}.
\newline \noindent
As an example, we provide explicit expressions for the case $p=3$ in Eq.~\eqref{eq:RBF_interpolation} when 
\( \mathcal{R}(\lambda\,;\bm{\lambda}_{t_{i}}^{(3)}) \) 
is defined on $\lambda_{t_i}=0$ and
\( \bm{\lambda}_{t_{i}}^{(3)}:=
(\lambda_{t_{i}}, \lambda_{t_{i-1}}, \lambda_{t_{i-2}} )
= (0,-0.1,-0.2) \).
The explicit expressions of the functions 
$ \ell_j(\lambda), a_{j2}(\lambda), a_{j4}(\lambda)$ in this case are given in the following table:
\[ 
\begin{array}{c|>{\columncolor{gray!20}}c|c|c|c}
& \ell_j(\lambda) & a_{j2}(\lambda)
& a_{j4}(\lambda) & \cdots
\\ \hline & & & &\\
j=0 
& {\left(10\,\lambda+1\right)}\,{\left(5\,\lambda+1\right)} & -\frac{\lambda\,{\left(2500\,\lambda^3 +1300\,\lambda^2 +215\,\lambda+11\right)}}{60} 
& \cdots
\\  & & & &\\
j=1& -20\,\lambda\,{\left(5\,\lambda+1\right)} & \frac{\lambda\,{\left(5\,\lambda+1\right)}\,(10\,\lambda + 1)^2 }{6} 
&  \cdots
\\  & & & &\\
j=2 & 5\,\lambda\,{\left(10\,\lambda+1\right)} & -\frac{\lambda\,{\left(2500\,\lambda^3 +700\,\lambda^2 +35\,\lambda-1\right)}}{60} 
&\cdots
\\  & & & &
\end{array} 
\]

We rearrange the representation of $\mathcal{R}(\lambda\,;\bm{\lambda}_{t_{i}}^{(3)})$ to show that it has the same third-order Taylor expansion as that of 
$\bm{\hat x}_\theta(\bm{\hat x}_{\lambda}, \lambda)$. 
For simplicity,
we use the simplified notation $\bm{\hat x}_\theta(\lambda_{t_{i-j}}) $ instead of full form $\bm{\hat x}_\theta(\bm{\hat x}_{\lambda_{t_{i-j}}}, \lambda_{t_{i-j}})$ for each $j=0,1,2$ in this section.
\begin{align*}
\mathcal{R}(\lambda\,;\bm{\lambda}_{t_{i}}^{(3)}) = \sum_{j=0}^{2} u_j(\lambda)\, \bm{\hat x}_\theta(\lambda_{t_{i-j}}) 
&= \sum_{j=0}^{2} u_j(\lambda) \bigg( \sum_{k=0} \frac{\bm{\hat x}_\theta^{(k)}(\lambda)}{k!} (\lambda_{t_{i-j}} - \lambda_{t_{i}})^k \bigg) \\
&= \sum_{k=0} \frac{\bm{\hat x}_\theta^{(k)}(\lambda)}{k!} \bigg( \sum_{j=0}^{2} u_j(\lambda) \cdot (\lambda_{t_{i-j}} - \lambda_{t_{i}})^k \bigg).
\end{align*}
By using the Taylor expansion of \( u_j(\cdot)\) in Eq.~\eqref{eq:Taylor_of_coefficients}, we represent the interpolant as
\begin{align*}
&\mathcal{R}(\lambda\,;\bm{\lambda}_{t_{i}}^{(3)}) \\
&= \sum_{k=0}^{2} \frac{\bm{\hat x}_\theta^{(k)}(\lambda)}{k!} 
\bigg( \sum_{j=0}^{2} \ell_j(\lambda)(\lambda_{t_{i-j}} - \lambda_{t_{i}})^k \bigg)
+ \bm{\hat x}_\theta(\lambda) 
\bigg( \sum_{j=0}^{2} a_{j2}(\lambda)(\lambda_{t_{i-j}} - \lambda_{t_{i}})^k  \bigg) \bigg( \frac{h}{\beta} \bigg)^2
+ \mathcal{O}(h^3). 
\end{align*}
Since the functions 
\(\ell_j(\lambda), \, a_{j2}(\lambda), \, j=0,1,2 \, \) in the above table satisfy
\begin{align*}
\sum_{j=0}^{2} \ell_j(\lambda)\cdot(\lambda_{t_{i-j}} - \lambda_{t_{i}})^k  
&=  (\lambda - \lambda_{t_{i}})^k, \quad k=0,1,2 \\
\sum_{j=0}^{2} a_{j2}(\lambda)\cdot(\lambda_{t_{i-j}} - \lambda_{t_{i}})^k  
&= 0, \quad k=0,
\end{align*}
with $\lambda_{t_i}=0$ and
\( \bm{\lambda}_{t_{i}}^{(3)}:=
(\lambda_{t_{i}}, \lambda_{t_{i-1}}, \lambda_{t_{i-2}} )
= (0,-0.1,-0.2) \),
we have that
\[ \mathcal{R}(\lambda\,;\bm{\lambda}_{t_{i}}^{(3)})
= \sum_{k=0}^{2} \frac{\bm{\hat x}_\theta^{(k)}(\lambda)}{k!} (\lambda - \lambda_{t_{i}})^k + \mathcal{O}(h^3)\]
which coincides with the third-order Taylor expansion of $\bm{\hat x}_\theta(\lambda) = \bm{\hat x}_\theta(\bm{\hat x}_{\lambda}, \lambda)$, thus confirming third-order accuracy.
Furthermore, if we let $\lambda_{t_{i+1}}:=0.1$ for this case, we can directly calculate the coefficients of Lagrange interpolation-based sampling (Adams method in Appendix \ref{sec:Lagrange-Solver}) using 
$ \int_{0}^{0.1}
e^{\lambda}\,\ell_{j}(\lambda) \mathrm{d}\lambda, j=0,1,2 $
and have the coefficients of Adams method as follows:
\begin{equation*}
\bm c_{t_{i}}^{(p)} 
= (78\,{\mathrm{e}}^{1/10} -86,
180-163\,{\mathrm{e}}^{1/10},
86\,{\mathrm{e}}^{1/10} -95)
\approx
(0.2033, -0.1429, 0.0447 )
\end{equation*}
which coincides with
Figure~\ref{fig:two-panel} (b) for the case Order-$3$.

\section{Stable Evaluation of the Integral in RBF-Solver}
\label{sec:how to calc integral}

In this section, we explain how to evaluate the integral of the Gaussian function in Eq.~\eqref{eq:RBF-Solver Integral} while ensuring numerical stability. Gaussian function can be integrated using either analytical or numerical methods. 

\subsection{Analytic Methods} 
One of the analytical methods is to integrate the Gaussian function using the error function~($\mathrm{erf}$). The procedure is as follows:
We first rescale the integration interval to $[0,1]$
\begin{align*}
l_{{t_{i}}, j}
&=
\int_{\lambda_{t_i}}^{\lambda_{t_{i+1}}}
\exp\!\Biggl[
\lambda -
\Bigl(\tfrac{\lambda - \lambda_{t_{i-j}}}{\gamma_{t_{i}} h_{t_{i}}}\Bigr)^{2}
\Biggr]
\,\mathrm{d}\lambda \\[4pt]
&=
\exp(\lambda_{t_{i+1}})
\int_{\lambda_{t_i}}^{\lambda_{t_{i+1}}}
\exp\!\Biggl[
(\lambda - \lambda_{t_{i+1}})
-
\tfrac{1}{\gamma_{t_{i}}^{2}}
\Bigl(\tfrac{\lambda - \lambda_{t_{i-j}}}{h_{t_{i}}}\Bigr)^{2}
\Biggr]
\,\mathrm{d}\lambda \\[4pt]
&=
\exp(\lambda_{t_{i+1}})
\int_{\lambda_{t_i}}^{\lambda_{t_{i+1}}}
\exp\!\Biggl[
(r-1)h_{t_{i}}
-
\tfrac{1}{\gamma_{t_{i}}^{2}}
\bigl(r - r_{i-j}\bigr)^{2}
\Biggr]
\,\mathrm{d}\lambda \\[4pt]
&=
\exp(\lambda_{t_{i+1}})\,
h_{t_{i}}
\int_{0}^{1}
\exp\!\Biggl[
(r-1)h_{t_{i}}
-
\tfrac{1}{\gamma_{t_{i}}^{2}}
\bigl(r - r_{i-j}\bigr)^{2}
\Biggr]
\,\mathrm{d}r
\\[6pt]
&\quad\text{where } 
r \coloneqq \tfrac{\lambda - \lambda_{t_i}}{h_{t_{i}}},
\quad
r_{i-j} \coloneqq \tfrac{\lambda_{t_{i-j}} - \lambda_{t_i}}{h_{t_{i}}}.
\end{align*}

The above integral has the following closed-form solution in terms of the error function~($\mathrm{erf}$):

\begin{equation*}
=
\frac{\sqrt{\pi}\gamma_{t_{i}}}{2}
h_{t_{i}}
\exp
\left[\lambda_{t_{i+1}}
+
\frac{\gamma_{t_{i}}^2 h_{t_{i}}^2}{4}
+
h_{t_{i}}(r_{i-j} - 1)
\right]
\left[
\mathrm{erf}
\left(
\frac{r_{i-j} + \frac{\gamma_{t_{i}}^2h_{t_{i}}}{2}}{\gamma_{t_{i}}}
\right)
-
\mathrm{erf}
\left(
\frac{r_{i-j} + \frac{\gamma_{t_{i}}^2h_{t_{i}}}{2} - 1}{\gamma_{t_{i}}}
\right)
\right]
\end{equation*}

When $\gamma_{t_{i}}$ is very large, the exponential factor grows rapidly while the difference between the error functions becomes negligibly small; to mitigate precision loss, we therefore evaluate the entire expression in log space. Moreover, because this small difference $\mathrm{erf}(a)-\mathrm{erf}(b)$ is prone to catastrophic cancellation for large $\gamma_{t_{i}}$, we rewrite it using the complementary error function and carry out the computation in log space as follows.

\begin{equation*}
\mathrm{erf}(a)-\mathrm{erf}(b) =
\mathrm{erfc}(b)
\left[
1 - 
\exp
\left(
\log \mathrm{erfc}(a)
- \log \mathrm{erfc}(b)
\right)
\right]
\end{equation*}

\subsection{Numerical Methods}
When the width of the Gaussian function is extremely narrow, numerical issues can arise since $\mathrm{erf}(\infty) - \mathrm{erf}(\infty) = 0$.
Conversely, when the width is extremely large, the value of $\mathrm{erf}(x)$
quickly saturates toward $1$, making it difficult to accurately compute the integral.
In such cases, Gaussian Quadrature, one of the numerical methods, efficiently places nodes near the region with the highest contribution, enabling highly accurate integration even with a small number of nodes.

Gaussian Quadrature can exactly integrate polynomials of up to degree \(2n - 1\)  using \(n\) evaluation points,
\[
\int_a^b f(x) \, dx \approx \sum_{i=1}^{n} w_i f(x_i)
\]
where the nodes \(x_i\) are determined by the roots of orthogonal polynomials, such as Legendre, Hermite, and Laguerre polynomials, \( w_i \) are the weights corresponding to each node, and \( n \) is the number of nodes used.
If we apply Gaussian-Legendre quadrature to the integral of the Gaussian function in Eq.~\eqref{eq:RBF-Solver Integral}, the integration interval 
$[\lambda_{t_i}, \lambda_{t_{i+1}}]$ must be transformed to $[-1,1]$:
\begin{align*}
l_{{t_{i}}, j}
&=
\int_{\lambda_{t_i}}^{\lambda_{t_{i+1}}}
\exp\!\Biggl[
\lambda -
\Bigl(\tfrac{\lambda - \lambda_{t_{i-j}}}{\gamma_{t_{i}} h_{t_{i}}}\Bigr)^{2}
\Biggr]
\,\mathrm{d}\lambda \\[4pt]
&=
\frac{\lambda_{t_{i+1}} - \lambda_{t_i}}{2} \int_{-1}^{1} f\left( 
\frac{\lambda_{t_{i+1}} - \lambda_{t_i}}{2} \xi + \frac{\lambda_{t_{i+1}} + \lambda_{t_i}}{2} 
\right) d\xi, 
\quad f(\lambda) = \exp\!\Biggl[
\lambda -
\Bigl(\tfrac{\lambda - \lambda_{t_{i-j}}}{\gamma_{t_{i}} h_{t_{i}}}\Bigr)^{2}
\Biggr].\\[4pt]
\end{align*}
The integral can be approximated as follows:
\[
l_{t_i, j} \approx \frac{\lambda_{t_{i+1}} - \lambda_{t_i}}{2} \sum_{k=1}^{n} w_k \, f \left( 
\frac{\lambda_{t_{i+1}} - \lambda_{t_i}}{2} \xi_k + \frac{\lambda_{t_{i+1}} + \lambda_{t_i}}{2} 
\right)
\]
where \( \xi_k \) are the Gaussian-Legendre nodes, \( w_k \) are the corresponding weights, and \( n \) is the number of quadrature points.

\section{Adams Method}
This section derives the Adams-method sampling equations from Lagrange interpolation and shows that it satisfies the \emph{coefficient summation condition} in Eq.~\eqref{eq:Summation Condition}.
\subsection{Sampling Equations in Adams Method}
\label{sec:Lagrange-Solver}
As shown in Section~\ref{sec:Lagrange Convergence}, RBF interpolation converges to Lagrange interpolation as the shape parameter $\gamma_{t_i}$ approaches infinity.  
For numerical stability, therefore, RBF-Solver switches to an Adams scheme whenever the optimal 
$\gamma_{t_i}$ is large—using Adams–Bashforth as the predictor and Adams–Moulton as the corrector.  
We detail the resulting sampling equations and their implementation below.

We first approximate the evaluation function $\bm{x}_\theta$ in the exact solution (Eq.~\ref{eq:exact_solution_x0}) with the Lagrange interpolation (Eq.~\ref{eq:lagrange-interpolation}).  
This substitution yields the following sampling equation, which advances a sample $\tilde{\bm x}_{t_i}$ to the next time step $t_{i+1}$:

\begin{equation}
\label{eq:Lagrange-Solver Interpolation}
\tilde{\bm\x}_{t_{i+1}} =
\frac{\sigma_{t_{i+1}}}{\sigma_{t_{i}}}
\tilde{\bm\x}_{t_{i}}
+
\sigma_{t_{i+1}}
\int_{\lambda_{t_i}}^{\lambda_{t_{i+1}}}
e^{\lambda}
\left[
\sum_{j=0}^{p-1} \ell_{j}(\lambda) 
\hat{\bm{x}}_\theta(\hat{\bm{x}}_{\lambda_{t_{i-j}}},\lambda_{t_{i-j}})
\right]
\mathrm{d}\lambda
,\,\,\,
\ell_{j}(\lambda)
= 
\prod_{\substack{k=0 \\ k \neq j}}^{p-1}
\frac{\lambda - \lambda_{t_{i-k}}}
{\lambda_{t_{i-j}} - \lambda_{t_{i-k}}}.
\end{equation}

Next, we interchange the integral and the summation, rewriting each Lagrange polynomial $\ell_{j}(\lambda),j=0,\dots,p-1$ as a linear combination of monomials $\lambda^{k}$ with coefficients $\beta_{j,k}$:

\begin{equation*}
\tilde{\bm\x}_{t_{i+1}} =
\frac{\sigma_{t_{i+1}}}{\sigma_{t_{i}}}
\tilde{\bm\x}_{t_{i}}
+
\sigma_{t_{i+1}}
\sum_{j=0}^{p-1}
\hat{\bm{x}}_\theta(\hat{\bm{x}}_{\lambda_{t_{i-j}}},\lambda_{t_{i-j}})
\int_{\lambda_{t_{i}}}^{\lambda_{t_{i+1}}}
e^{\lambda}
\sum_{k=0}^{p-1}
\beta_{j,k}\lambda^k
\mathrm{d}\lambda
\end{equation*}

The coefficient vector $\bm{\beta}_{j}\in\mathbb{R}^{p}:=(\beta_{j,0},\beta_{j,1},\ldots,\beta_{j,p-1})$ can be obtained from the Vandermonde matrix $\bm{V}_{t_{i}}^{(p)}\in\mathbb{R}^{p\times p}$ and the standard basis vector $\bm{e}_{j}\in\mathbb{R}^{p}$ as follows:

\begin{equation*}
\bm{\beta}_{j}=(\bm{V}_{t_{i}}^{(p)})^{-1}\bold e_j,
\,\,\,
\bm{V}_{t_{i}}^{(p)} \;:=\;
\begin{pmatrix}
1 & \lambda_{t_{i-0}} & \cdots & (\lambda_{t_{i-0}})^{p-1}\\
1 & \lambda_{t_{i-1}} & \cdots & (\lambda_{t_{i-1}})^{p-1}\\
\vdots & \vdots & \ddots & \vdots\\
1 & \lambda_{t_{i-(p-1)}} & \cdots & (\lambda_{t_{i-(p-1)}})^{p-1}
\end{pmatrix}. 
\end{equation*}

Applying this, we obtain

\begin{equation*}
\tilde{\bm\x}_{t_{i+1}} =
\frac{\sigma_{t_{i+1}}}{\sigma_{t_{i}}}
\tilde{\bm\x}_{t_{i}}
+
\sigma_{t_{i+1}}
\sum_{j=0}^{p-1}
\hat{\bm{x}}_\theta(\hat{\bm{x}}_{\lambda_{t_{i-j}}},\lambda_{t_{i-j}})
\sum_{k=0}^{p-1}
[(\bm{V}_{t_{i}}^{(p)})^{-\top}]_{jk}
\int_{\lambda_{t_{i}}}^{\lambda_{t_{i+1}}}
e^{\lambda}
\lambda^k
\mathrm{d}\lambda.
\end{equation*}

Defining the scalar integral $l_{{t_{i}},k}$, we evaluate it in closed form as follows:
\begin{equation*}
l_{{t_{i}},k}\coloneqq
\int_{\lambda_{t_i}}^{\lambda_{t_{i+1}}} e^\lambda \,\lambda^k \, d\lambda
\;=\;
e^\lambda \sum_{m=0}^{k}
(-1)^m \,\frac{k!}{(k - m)!}\,\lambda^{\,k-m}
\;\Biggr|_{\lambda_{t_i}}^{\lambda_{t_{i+1}}}.
\end{equation*}

Define the integral vector
\[
\bm{l}_{t_{i}}^{(p)}:=\bigl(l_{{t_{i}},0},\,l_{{t_{i}},1},\,\ldots,\,l_{{t_{i}},p-1}\bigr)^{\!\top}\in\mathbb{R}^{p},
\]
and the evaluation vector
\[
\bm{X}_{t_{i}}^{(p)}
:=
\Bigl(
\hat{\bm{x}}_{\theta}\!\bigl(\hat{\bm{x}}_{\lambda_{t_i}},\lambda_{t_i}\bigr),
\,
\hat{\bm{x}}_{\theta}\!\bigl(\hat{\bm{x}}_{\lambda_{t_{i-1}}},\lambda_{t_{i-1}}\bigr),
\,
\ldots,
\hat{\bm{x}}_{\theta}\!\bigl(\hat{\bm{x}}_{\lambda_{t_{i-p+1}}},\lambda_{t_{i-p+1}}\bigr)
\Bigr)^{\!\top}\!.
\]
With these definitions, the sampling equation can be expressed as a dot product between the evaluation vector and the coefficient vector:
\begin{equation*}
\tilde{\bm x}_{t_{i+1}} =
\frac{\sigma_{t_{i+1}}}{\sigma_{t_{i}}}
\tilde{\bm x}_{t_{i}}
+
\sigma_{t_{i+1}}
(\bm c_{t_{i}}^{(p)})^{\!\top}\,\bm X_{t_{i}}^{(p)}
,\,\,
\bm c_{t_{i}}^{(p)} = (\bm{V}_{t_{i}}^{(p)})^{-\top}\bm{l}_{t_{i}}^{(p)}.
\end{equation*}

Building on the above sampling equation, we formulate the predictor--corrector pair within the linear multistep framework~\cite{butcher2016numerical}.
The predictor uses the most recent \(p\) evaluations up to the current time step \(t_i\), whereas the corrector utilizes the preceding \(p+1\) evaluations up to the next time step \(t_{i+1}\).
The complete sampling algorithm, summarized in Algorithm~\ref{alg:predictor-corrector}, can be directly employed for sampling in the same way as RBF-Solver.

\begin{equation*}
\left\{
\begin{aligned}
\tilde{\bm x}_{t_{i+1}}^{\text{pred}}
&=
\text{Predictor}\bigl(\tilde{\bm x}_{t_i}^{\text{corr}}\bigr)
=
\frac{\sigma_{t_{i+1}}}{\sigma_{t_i}}
\tilde{\bm x}_{t_i}^{\text{corr}}
+
\sigma_{t_{i+1}}
(\bm c_{t_{i}}^{(p)})^{\top}\bm X_{t_{i}}^{(p)},\\[6pt]
\tilde{\bm x}_{t_{i+1}}^{\text{corr}}
&=
\text{Corrector}\bigl(\tilde{\bm x}_{t_i}^{\text{corr}}\bigr)
=
\frac{\sigma_{t_{i+1}}}{\sigma_{t_i}}
\tilde{\bm x}_{t_i}^{\text{corr}}
+
\sigma_{t_{i+1}}
(\bm c_{{t_{i+1}}}^{(p+1)})^{\top}\bm X_{t_{i+1}}^{(p+1)}.
\end{aligned}
\right.
\end{equation*}

\subsection{Coefficient Summation Condition in Adams Method}
\label{sec:Coefficients Summation Condition in Lagrange-Solver}

This section proves that the sampling equation in Eq.~\eqref{eq:Lagrange-Solver Interpolation}, derived from Lagrange interpolation, satisfies the \emph{coefficient summation condition} in Section~\ref{sec:Coefficients-Summation-Condition}. We first establish the \emph{partition-of-unity} property of the Lagrange basis and then apply it to the proof.

\begin{lemma}[Partition-of-Unity]
For every \(\lambda\in\mathbb{R}\), the Lagrange basis satisfies
\[
\sum_{j=0}^{p-1}\ell_{j}(\lambda)=1.
\]
\end{lemma}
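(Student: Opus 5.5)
The plan is to use the fact that Lagrange interpolation reproduces every polynomial of degree at most $p-1$ exactly, and in particular the constant function $f(\lambda)\equiv 1$. Write $\mathcal{L}_f(\lambda)=\sum_{j=0}^{p-1}\ell_j(\lambda) f(\lambda_{t_{i-j}})$ for the interpolant of $f$ at the $p$ nodes $\lambda_{t_i},\lambda_{t_{i-1}},\dots,\lambda_{t_{i-p+1}}$. Each basis function $\ell_j$ in Eq.~\eqref{eq:Lagrange-Solver Interpolation} is a polynomial of degree $p-1$ satisfying $\ell_j(\lambda_{t_{i-k}})=\delta_{j,k}$. The nodes are pairwise distinct because the time steps are strictly decreasing and $\lambda$ is strictly monotone in $t$, so the denominators in $\ell_j$ never vanish.

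First I will define
\[
q(\lambda) := \sum_{j=0}^{p-1}\ell_j(\lambda) - 1 .
\]
This is a polynomial of degree at most $p-1$. Evaluating it at a node $\lambda_{t_{i-k}}$ gives $q(\lambda_{t_{i-k}})=\sum_j \delta_{j,k}-1=0$ for every $k=0,\dots,p-1$. So $q$ has $p$ distinct roots while having degree at most $p-1$, and therefore $q\equiv 0$. This gives $\sum_{j=0}^{p-1}\ell_j(\lambda)=1$ for every $\lambda\in\mathbb{R}$.

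An equivalent route uses the matrix formulation of Appendix~\ref{sec:Lagrange-Solver}. There $\ell_j(\lambda)=\sum_k \beta_{j,k}\lambda^k$, where $\bm\beta_j$ is the $j$-th column of $(\bm V_{t_i}^{(p)})^{-1}$. Summing over $j$ amounts to computing $(\bm V_{t_i}^{(p)})^{-1}\mathbf 1$. Since the first column of $\bm V_{t_i}^{(p)}$ is $\mathbf 1$, we have $(\bm V_{t_i}^{(p)})^{-1}\mathbf 1=\bm e_0$, so the sum equals the monomial $\lambda^0=1$. I would mention this only as a remark; the root-counting argument is the main proof.

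The proof has essentially no obstacle. The only point needing care is that the nodes are distinct, which is required both for the $\ell_j$ to be well defined and for the root-counting argument. For the $p=1$ case I will use the convention that the empty product gives $\ell_0\equiv 1$.
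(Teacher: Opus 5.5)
Your proof is correct and matches the paper's argument: define $q(\lambda)=\sum_{j=0}^{p-1}\ell_j(\lambda)-1$, note it has degree at most $p-1$ and vanishes at the $p$ distinct nodes, and conclude $q\equiv 0$. Your Vandermonde remark, that $(\bm V_{t_i}^{(p)})^{-1}\mathbf 1=\bm e_0$, is also valid, and your note that distinct nodes are needed is a useful detail the paper leaves implicit.
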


\begin{proof}
The Lagrange interpolation polynomials satisfy
\(\ell_{j}(\lambda_{t_{i-k}})=\delta_{jk}\) at each node.
Hence the polynomial  
\(q(\lambda)\coloneqq\sum_{j=0}^{p-1}\ell_{j}(\lambda)-1\)
is zero at every node \(\{\lambda_{t_{i-k}}\}_{k=0}^{p-1}\).
Because \(\deg q \le p-1\) yet \(q\) has \(p\) distinct zeros, it must be identically zero.
\end{proof}

In the sampling equation of the Adams method Eq.~\eqref{eq:Lagrange-Solver Interpolation}, the sum of the coefficients applied to the model evaluations is
\begin{equation}
\sum_{j=0}^{p-1}
\int_{\lambda_{t_i}}^{\lambda_{t_{i+1}}}
e^{\lambda}\,
\ell_{j}(\lambda)\,
\mathrm{d}\lambda.
\end{equation}

Using the partition-of-unity property, we obtain
\[
\sum_{j=0}^{p-1}\!
\int_{\lambda_{t_i}}^{\lambda_{t_{i+1}}}
e^{\lambda}\,\ell_{j}(\lambda)\,\mathrm{d}\lambda
=
\int_{\lambda_{t_i}}^{\lambda_{t_{i+1}}}
e^{\lambda}\Bigl(\sum_{j=0}^{p-1}\ell_{j}(\lambda)\Bigr)\,\mathrm{d}\lambda
=
\int_{\lambda_{t_i}}^{\lambda_{t_{i+1}}} e^{\lambda}\,\mathrm{d}\lambda.
\]

Consequently, the Adams method satisfies the coefficient summation condition.

\section{Lagrange Interpolation-based Approach}
\label{sec:Lagrange Interpolation-based Approach}

In this section, we aim to understand UniPC as a Lagrange interpolation-based approach and prove its equivalence to SA-Solver under certain conditions.

UniPC~\citep{zhao2023unipc} adopts Lagrange interpolation for model prediction, but differs from other Lagrange interpolation-based approaches~\citep{Zhang2023fast, xue2024sa} in that it computes the coefficients directly, rather than explicitly using the Lagrange basis (Appendix~\ref{sec:UniPC}). 
UniPC also introduces several key modifications. 
Specifically, UniPC incorporates the $B(h)$ function and applies a Taylor approximation to adjust the coefficient when only a single coefficient appears in the second-order formulation(Appendix~\ref{sec:UniPC 2nd Order Coefficients}).

As a variance-controlled diffusion SDE solver, SA-Solver~\citep{xue2024sa} employs stochastic Adams method within a predictor-corrector scheme which predicts $\bm{\hat x}_\theta$ using Adams-Bashforth and corrects it by substituting into Adams-Moulton.
In the zero-variance setting, SA-Solver is equivalent to UniPC with $B(h)=h$ (Appendix~\ref{sec:Equivalence between UniPC and SA-Solver}).

\subsection{UniPC: Interpreting UniPC as a Lagrange Interpolation-Based Approach}
\label{sec:UniPC}
In this section, we revisit UniPC method in \cite{zhao2023unipc} and reformulate the sampling method in the form of a Lagrange interpolation-based approach.

Recall the sampling equation of the Lagrange interpolation-based Adams method in Appendix~\ref{sec:Lagrange-Solver}:
\begin{equation*}
\tilde{\bm x}_{t_{i+1}} =
\frac{\sigma_{t_{i+1}}}{\sigma_{t_{i}}}
\tilde{\bm x}_{t_{i}}
+
\sigma_{t_{i+1}}
(\bm c_{t_{i}}^{(p)})^{\!\top}\,\bm X_{t_{i}}^{(p)}
= \frac{\sigma_{t_{i+1}}}{\sigma_{t_{i}}}
\tilde{\bm x}_{t_{i}}
+ \sigma_{t_{i+1}}\sum_{k=0}^{p-1} 
\big[\bm c_{t_{i}}^{(p)}\big]_k \, \bm{\hat x}_\theta(\bm{\hat x}_{\lambda_{t_{i-k}}},\lambda_{t_{i-k}}).
\end{equation*}
To compare with 
the coefficient vector $\bm{a}_p=(\tilde{a}_1,\dots,\tilde{a}_{p-1})^\top$ for the $p$-th order data prediction model UniP-$p$ in {\bf Appendix A.2}
of UniPC~\citep{zhao2023unipc}, we define $a_k$ 
to reformulate the coefficients $\big[c_{t_{i}}^{(p)}\big]_k$ 
as 
\[h_{t_i} a_k := \frac{\alpha_{t_{i+1}}}{\sigma_{t_{i+1}}}\,\big[c_{t_{i}}^{(p)}\big]_k, \quad k=0,\dots,p-1. \]
Then the sampling equation of the Lagrange interpolation-based method can be represented as
\begin{align} \label{eq:sampling_UniPC}
\bm{x}_{t_{i+1}} 
&= \frac{\sigma_{t_{i+1}}}{\sigma_{t_i}}\bm{x}_{t_i} + 
\alpha_{t_{i+1}} \sum_{k=0}^{p-1} h_{t_i} a_k\bm{\hat x}_\theta(\bm{\hat x}_{\lambda_{t_{i-k}}},\lambda_{t_{i-k}}).
\end{align}
Let
\[ r_j = \frac{\lambda_{t_{i-j}} - \lambda_{t_i}}{h_{t_{i}}}, \quad j=0,\dots,p-1 \]
and define the functions $\psi_k(h)$ as
\[ 
\psi_k(h) := \int_0^1 e^{(r - 1)h} \frac{r^{k-1}}{(k-1)!} \, \mathrm{d}r, 
\qquad \varphi_0(h) = e^{-h}.
\]
From the construction of the coefficient 
$\bm c_{t_{i}}^{(p)}$ of Adams method
in Appendix~\ref{sec:Lagrange-Solver},
the coefficients
\( \bm{a}:=\bigl(a_0, \dots, a_{p-1}\bigr)^{\top}\in\mathbb{R}^{p} \)
satisfy
the following \( p\times p \) linear system:
\begin{equation} \label{eq:Lagrange Reproducing}
    \begin{pmatrix} 
        1 & 1 & \dots & 1 \\ 
        r_0 & r_1 & \dots & r_{p-1} \\ 
        \vdots & \vdots & \ddots & \vdots \\ 
        r_0^{p-1} & r_1^{p-1} & \dots & r_{p-1}^{p-1} 
    \end{pmatrix}
    \begin{pmatrix}
        a_0 \\
        a_1 \\ 
        \vdots \\ 
        a_{p-1}
    \end{pmatrix} =
    \begin{pmatrix} 
       \psi_1(h_{t_i}) \\ 
       \psi_2(h_{t_i}) \\ 
        \vdots \\ 
       (p-1)!\psi_{p}(h_{t_i})
    \end{pmatrix}.
\end{equation}

To derive the coefficient formulas as in UniPC, we define auxiliary functions $\phi_k$ by 
\[ \phi_k(h):=h^k k! \psi_{k+1}(h), \quad k=0,1,\dots,p-1 \]
in the same manner as the auxiliary functions defined in \cite{zhao2023unipc} for the noise prediction model. By using this $\phi_k$, we reformulate the linear system Eq.~\eqref{eq:Lagrange Reproducing} as
\begin{equation} \label{eq:Lagrange Reproducing phi}
    \begin{pmatrix} 
        1 & 1 & \dots & 1 \\ 
        r_0h_{t_i} & r_1h_{t_i} & \dots & r_ph_{t_i} \\ 
        \vdots & \vdots & \ddots & \vdots \\ 
        (r_0h_{t_i})^{p-1} & (r_1h_{t_i})^{p-1} & \dots & (r_{p-1}h_{t_i})^{p-1} 
    \end{pmatrix}
    \begin{pmatrix}
        a_0 \\
        a_1 \\ 
        \vdots \\ 
        a_{p-1}
    \end{pmatrix} =
    \begin{pmatrix} 
       \phi_0(h_{t_i}) \\ 
       \phi_1(h_{t_i})\\ 
        \vdots \\ 
       \phi_{p-1}(h_{t_i})
    \end{pmatrix}.
\end{equation}
Using the fact \(r_0=0 \),
the linear system can be reduced to
\begin{equation*} 
    \begin{pmatrix} 
        1 & 1 & \dots & 1 \\ 
        0 & r_1h_{t_i} & \dots & r_ph_{t_i} \\ 
        \vdots & \vdots & \ddots & \vdots \\ 
        0 & (r_1h_{t_i})^{p-1} & \dots & (r_{p-1}h_{t_i})^{p-1} 
    \end{pmatrix}
    \begin{pmatrix}
        a_0 \\
        a_1 \\ 
        \vdots \\ 
        a_{p-1}
    \end{pmatrix} =
    \begin{pmatrix} 
       \phi_0(h_{t_i}) \\ 
       \phi_1(h_{t_i})\\ 
        \vdots \\ 
       \phi_{p-1}(h_{t_i})
    \end{pmatrix}.
\end{equation*}
Since \(a_1, \dots, a_{p-1}\) do not depend on 
\(a_0\), we first solve the \((p-1) \times (p-1)\) linear system
to find \(a_1, \dots, a_{p-1}\):
\begin{equation*}
    \begin{pmatrix}  
        r_1h_{t_i} & \dots & r_ph_{t_i} \\ 
        \vdots & \ddots & \vdots \\ 
        (r_1h_{t_i})^{p-1} & \dots & (r_{p-1}h_{t_i})^{p-1} 
    \end{pmatrix}
    \begin{pmatrix}
        a_1 \\ 
        \vdots \\ 
        a_{p-1}
    \end{pmatrix} =
    \begin{pmatrix} 
       \phi_1(h_{t_i})\\ 
        \vdots \\ 
       \phi_{p-1}(h_{t_i})
    \end{pmatrix},
\end{equation*}
equivalently,
\begin{equation} \label{eq:UniPC LS}
    \begin{pmatrix}  
        1 & \dots & 1 \\ 
        \vdots & \ddots & \vdots \\ 
        (r_1h_{t_i})^{p-2} & \dots & (r_{p-1}h_{t_i})^{p-2} 
    \end{pmatrix}
    \begin{pmatrix}
        r_1h_{t_i} a_1 \\ 
        \vdots \\ 
        r_{p-1}h_{t_i} a_{p-1}
    \end{pmatrix} =
    \begin{pmatrix} 
       \phi_1(h_{t_i})\\ 
        \vdots \\ 
       \phi_{p-1}(h_{t_i})
    \end{pmatrix}.
\end{equation}
Next, we find $a_0$ by using
\(    \sum_{m=0}^{p-1} a_m = \phi_0(h_{t_i}) \) as
\[ a_0= \phi_0(h_{t_i}) -\sum_{m=1}^{p-1} a_m.\]

Finally, the Lagrange interpolation-based sampling equation Eq.~\eqref{eq:sampling_UniPC} can be represented as:
\begin{align*}
\bm{x}_{t_{i+1}} 
&= \frac{\sigma_{t_{i+1}}}{\sigma_{t_i}}\bm{x}_{t_i} + 
\alpha_{t_{i+1}} \sum_{k=0}^{p-1} h_{t_i} \, a_k \,\bm{\hat x}_\theta(\bm{\hat x}_{\lambda_{t_{i-k}}},\lambda_{t_{i-k}})\\
&= \frac{\sigma_{t_{i+1}}}{\sigma_{t_i}}\bm{x}_{t_i} + 
\alpha_{t_{i+1}} h_{t_i} a_0 \bm{\hat x}_\theta(\bm{\hat x}_{\lambda_{t_i}},\lambda_{t_i})+ 
\alpha_{t_{i+1}}
\sum_{k=1}^{p-1} h_{t_i} \, a_k \,\bm{\hat x}_\theta(\bm{\hat x}_{\lambda_{t_{i-k}}},\lambda_{t_{i-k}}) \\
&= \frac{\sigma_{t_{i+1}}}{\sigma_{t_i}}\bm{x}_{t_i} + 
\alpha_{t_{i+1}} h_{t_i} \Bigg( \phi_0 -\sum_{k=1}^{p-1} a_k \Bigg) \bm{\hat x}_\theta(\bm{\hat x}_{\lambda_{t_i}},\lambda_{t_i})+ 
\alpha_{t_{i+1}}
\sum_{k=1}^{p-1} h_{t_i} \, a_k \,\bm{\hat x}_\theta(\bm{\hat x}_{\lambda_{t_{i-k}}},\lambda_{t_{i-k}}) \\
&= \frac{\sigma_{t_{i+1}}}{\sigma_{t_i}}\bm{x}_{t_i} + 
\alpha_{t_{i+1}} h_{t_i} \phi_0  \bm{\hat x}_\theta(\bm{\hat x}_{\lambda_{t_i}},\lambda_{t_i})+ 
\alpha_{t_{i+1}}
\sum_{k=1}^{p-1} h_{t_i}  a_k \Bigg(\bm{\hat x}_\theta(\bm{\hat x}_{\lambda_{t_{i-k}}},\lambda_{t_{i-k}}) - \bm{\hat x}_\theta(\bm{\hat x}_{\lambda_{t_i}},\lambda_{t_i}) \Bigg) \\
&= \frac{\sigma_{t_{i+1}}}{\sigma_{t_i}}\bm{x}_{t_i} + 
\alpha_{t_{i+1}} (1-e^{-h_{t_i}}) \,\bm{\hat x}_\theta(\bm{\hat x}_{\lambda_{t_i}},\lambda_{t_i})
+ 
\alpha_{t_{i+1}} h_{t_i} 
\sum_{k=1}^{p-1} a_k \,D_k  \\
&= \frac{\sigma_{t_{i+1}}}{\sigma_{t_i}}\bm{x}_{t_i} + 
\alpha_{t_{i+1}}
(1-e^{-h_{t_i}}) \,\bm{\hat x}_\theta(\bm{\hat x}_{\lambda_{t_i}},\lambda_{t_i})
+ 
\alpha_{t_{i+1}} h_{t_i} 
\sum_{k=1}^{p-1} \frac{\tilde{a}_k}{r_k} \,D_k,
\end{align*}
where \( D_k:=\bm{\hat x}_\theta(\bm{\hat x}_{\lambda_{t_{i-k}}},\lambda_{t_{i-k}}) - \bm{\hat x}_\theta(\bm{\hat x}_{\lambda_{t_{i}}},\lambda_{t_{i}}) \) and 
\( \tilde{a}_k := r_k \, a_k \).
The vector $(\tilde{a}_1, \dots,\tilde{a}_{p-1})$ in the above representation is identical to the coefficient vector $\bm{a}_p$ of the UniP-
$p$ data prediction in UniPC~\citep{zhao2023unipc}, 
since the linear system in Eq.~\eqref{eq:UniPC LS} coincides with the matrix equation presented in {\bf Appendix A.2} of UniPC~\citep{zhao2023unipc}.

\subsection{Proof of Equivalence between UniPC and SA-Solver} 
\label{sec:Equivalence between UniPC and SA-Solver}
Let's prove that UniPC~\citep{zhao2023unipc} with $B(h) = h$ is equivalent to SA-Solver~\citep{xue2024sa} in the zero-variance setting.
The SA-Predictor update equation is given by
\begin{equation}
\label{eq:SA-Predictor}
\begin{aligned}
\bm{x}_{t_{i + 1}} &= 
\frac{\sigma_{t_{i + 1}}}{\sigma_{t_{i}}} e^{-\int_{\lambda_{t_i}}^{\lambda_{t_{i+1}}} \tau^2(\lambda) \mathrm{d} \lambda} \bm{x}_{t_i} + 
\sum_{j=0}^{p-1} b_{i-j} \bm{\hat x}_{\theta}(\bm{\hat x}_{\lambda_{t_{i-j}}},\lambda_{t_{i-j}}) 
+ \Tilde{\sigma}_i \boldsymbol{\xi}, \hspace{4mm} \boldsymbol{\xi} \sim \mathcal{N}(\mathbf{0}, \boldsymbol{I}), \\
b_{i-j} &= \sigma_{t_{i + 1}}   \int_{\lambda_{t_i}}^{\lambda_{t_{i+1}}} e^{-\int_{\lambda}^{\lambda_{t_{i + 1}}} \tau^2(\lambda_v) \mathrm{d} \lambda_v }\left(1+\tau^2\left(\lambda\right)\right) e^{\lambda} \ell_{j}(\lambda) \mathrm{d} \lambda, 
\quad \ell_{j}(\lambda) = \prod_{\substack{k=0 \\ k \neq j}}^{p-1}
\frac{\lambda - \lambda_{t_{i-k}}}
{\lambda_{t_{i-j}} - \lambda_{t_{i-k}}},\\
\Tilde{\sigma}_i &= \sigma_{t_{i + 1}}\sqrt{1 - e^{-2\int_{\lambda_{t_i}}^{\lambda_{t_{i + 1}}} \tau^2(\lambda) \mathrm{d} \lambda}}.
\end{aligned}
\end{equation}
If $\tau(\lambda_u)=0$, we obtain the following simplified equation
\begin{equation}
\label{eq:SA-Predictor_wo_tau}
\begin{aligned}
\bm{x}_{t_{i + 1}} &= \frac{\sigma_{t_{i + 1}}}{\sigma_{t_{i}}} \bm{x}_{t_i} + 
\sigma_{t_{i + 1}}  \int_{\lambda_{t_i}}^{\lambda_{t_{i+1}}} 
e^{\lambda} 
\Bigg[\sum_{j=0}^{p-1} \ell_{j}(\lambda)
\bm{\hat x}_{\theta}(\bm{\hat x}_{\lambda_{t_{i-j}}},\lambda_{t_{i-j}}) \Bigg]
\mathrm{d} \lambda.
\end{aligned}
\end{equation}
The Lagrange basis $\ell_{j}(\lambda)$ can be transformed to $\ell_{j}(r)$ as follows
\begin{equation*}
\begin{aligned}
\ell_{j}(\lambda) &= \prod_{\substack{k=0 \\ k \neq j}}^{p-1}
\frac{\lambda - \lambda_{t_{i-k}}}
{\lambda_{t_{i-j}} - \lambda_{t_{i-k}}}
= \prod_{\substack{k=0 \\ k \neq j}}^{p-1} \frac{r h_{t_{i}} - r_k h_{t_{i}}}{r_j h_{t_{i}} - r_k h_{t_{i}}}
= \prod_{\substack{k=0 \\ k \neq j}}^{p-1} \frac{r - r_k}{r_j - r_k}
= \ell_{j}(r),\\
\lambda &= \lambda_{t_i} + r h_{t_{i}}, 
\quad r_j = \frac{\lambda_{t_{i-j}} - \lambda_{t_i}}{h_{t_{i}}}, 
\quad r_k = \frac{\lambda_{t_{i-k}} - \lambda_{t_i}}{h_{t_{i}}}.
\end{aligned}
\end{equation*}
Accordingly, Eq.~\ref{eq:SA-Predictor_wo_tau} can be rewritten as
\begin{equation}
\begin{aligned}
\bm{x}_{t_{i + 1}} &= \frac{\sigma_{t_{i + 1}}}{\sigma_{t_{i}}} \bm{x}_{t_i} + 
\sigma_{t_{i + 1}} 
\int_{\lambda_{t_i}}^{\lambda_{t_{i+1}}} 
e^{\lambda} 
\Bigg[\sum_{j=0}^{p-1} \ell_{j}(r) 
\bm{\hat x}_{\theta}(\bm{\hat x}_{\lambda_{t_{i-j}}},\lambda_{t_{i-j}})\Bigg]
\, \mathrm{d} \lambda \\
\end{aligned}
\end{equation}
If we define the Lagrange basis vector 
\( \bm{\ell}(r):=\bigl(\ell_{0}(r), \ell_{1}(r), \dots, \ell_{p-1}(r) \bigr)^{\top}\in\mathbb{R}^{p} \), it satisfies the polynomial reproduction property as follows:
\begin{equation*}
\bm{M} \cdot \bm{\ell}(r) = \bm{R}, \quad\quad
    \bm{R} := \begin{bmatrix} 
       1 \\ 
       r \\ 
        \vdots \\ 
       r^{p-1}
    \end{bmatrix}, \quad
    \bm{M} := \begin{bmatrix} 
        1 & 1 & \dots & 1 \\ 
        r_0 & r_1 & \dots & r_{p-1} \\ 
        \vdots & \vdots & \ddots & \vdots \\ 
        r_0^{p-1} & r_1^{p-1} & \dots & r_{p-1}^{p-1} 
    \end{bmatrix}.
\end{equation*}
By applying a change of variables to transform the integral from  $\lambda$-space to  $r$-space, and applying $\bm{\ell}(r) = \bm{M}^{-1} \bm{R}$, we obtain the following result:
\begin{equation}
\begin{aligned}
\bm{x}_{t_{i + 1}} 
&= \frac{\sigma_{t_{i + 1}}}{\sigma_{t_{i}}} \bm{x}_{t_i} + \alpha_{t+1} \int_0^1 e^{(r-1)h_{t_i}} 
\Bigg[\sum_{j=0}^p \ell_j(r)
\bm{\hat x}_{\theta}(\bm{\hat x}_{\lambda_{t_{i-j}}},\lambda_{t_{i-j}})\Bigg]
\, h_{t_i} \mathrm{d}r \\
&= \frac{\sigma_{t_{i + 1}}}{\sigma_{t_{i}}} \bm{x}_{t_i} + \alpha_{t+1} 
\sum_{j=0}^p h_{t_i} 
\bm{\hat x}_{\theta}(\bm{\hat x}_{\lambda_{t_{i-j}}},\lambda_{t_{i-j}})
\int_0^1 e^{(r-1)h} \ell_j(r) \, \mathrm{d}r \\
&= \frac{\sigma_{t_{i + 1}}}{\sigma_{t_{i}}} \bm{x}_{t_i} + \alpha_{t+1} 
\sum_{j=0}^p h_{t_i}   
\bm{\hat x}_{\theta}(\bm{\hat x}_{\lambda_{t_{i-j}}},\lambda_{t_{i-j}})
\int_0^1 e^{(r-1)h}  
\left(\sum_{k=0}^{p-1} (\bm{M}^{-1})_{jk} \bm{R}_k\right) 
\, \mathrm{d}r\\
&= \frac{\sigma_{t_{i + 1}}}{\sigma_{t_{i}}} \bm{x}_{t_i} + \alpha_{t+1} 
\sum_{j=0}^p h_{t_i}   
\bm{\hat x}_{\theta}(\bm{\hat x}_{\lambda_{t_{i-j}}},\lambda_{t_{i-j}})
\left(\sum_{k=0}^{p-1} (\bm{M}^{-1})_{jk} \int_0^1 e^{(1-r)h}  \bm{R}_k \, \mathrm{d}r \right)\\
&= \frac{\sigma_{t_{i + 1}}}{\sigma_{t_{i}}} \bm{x}_{t_i} + \alpha_{t+1} 
\sum_{j=0}^p h_{t_i}   
\bm{\hat x}_{\theta}(\bm{\hat x}_{\lambda_{t_{i-j}}},\lambda_{t_{i-j}})
\left(\sum_{j=0}^{p-1} (\bm{M}^{-1})_{jk} \tilde{\bm{R}}_k \right) \\
&= \frac{\sigma_{t_{i + 1}}}{\sigma_{t_{i}}} \bm{x}_{t_i} + \alpha_{t+1} 
\sum_{k=0}^p h_{t_i} \bm{a}_k 
\bm{\hat x}_{\theta}(\bm{\hat x}_{\lambda_{t_{i-j}}},\lambda_{t_{i-j}}),\\
\end{aligned}
\end{equation}
where 
\begin{equation*}
    \boldsymbol{a} = \bm{M}^{-1} \tilde{\bm{R}}, \quad
    \tilde{\bm{R}} := \begin{bmatrix} 
       \int_0^1 e^{(1-r)h}  \, \mathrm{d}r\\ 
       \int_0^1 e^{(1-r)h} r \, \mathrm{d}r\\ 
        \vdots \\ 
       \int_0^1 e^{(1-r)h} r^{p-1}\, \mathrm{d}r
    \end{bmatrix}.
\end{equation*}

The derivation from this equation to the final UniPC formulation follows the same process starting from Eq.~\eqref{eq:sampling_UniPC} as described in Appendix \ref{sec:UniPC}.

\subsection{2nd Order Coefficients}
\label{sec:UniPC 2nd Order Coefficients}

The second-order method in UniPC and SA-Solver can degenerate into a simple equation in which one or two coefficients are unknown. To prevent such degeneration, they determine the coefficients using a Taylor approximation with an accuracy of $\mathcal{O}(h^2)$.

\textbf{UniPC} The conditions for UniP-2 and UniC-1 degenerate into a simple equation where only a single $a_1$ is unknown.
For $B_1(h)=h$, $a_1=1/2$ satisfies the following condition:
\begin{equation}
    a_1B(h)-h\psi_2(h) = \frac{1}{2}h-\frac{1}{2}h + \mathcal{O}(h^2) = \mathcal{O}(h^2),
\end{equation}
where
\begin{equation}
    \psi_2(h) = \frac{h - 1 + e^{-h}}{h^2}.
\end{equation}
For $B_2(h)=e^h-1=h+\mathcal{O}(h^2)$, the derivation is similar. Therefore, UniPC directly set $a_1=1/2$ for UniP-2 and UniC-1 without solving the equation. 

\textbf{SA-Solver}
Similar to UniPC, the coefficients for the 2-step SA-Predictor and 1-step SA-Corrector degenerate into a simple case.
From Appendix D of the SA-Solver, the coefficients of the 2-step SA-Predictor and the 1-step SA-Corrector in Eq. (106) become simplified if we set $\tau = 0$ and $h=\lambda_{t_i} - \lambda_{t_{i-1}}$.
\begin{equation}
\label{eq:appendix_2nd_coeff}
\begin{aligned}
b_i + b_{i-1} &= \alpha_{t_{i+1}}(1 - e^{-h(1+\tau^2)})
\quad {\longrightarrow} \quad  
b_i + b_{i-1} = \alpha_{t_{i+1}} h \psi_1(h),\\
b_{i-1} &= \alpha_{t_{i+1}} \frac{e^{-(1+\tau^2)h} + (1+\tau^2)h - 1}{(1+\tau^2)(\lambda_{t_i} - \lambda_{t_{i-1}})} 
\quad {\longrightarrow} \quad  
b_{i-1} = \alpha_{t_{i+1}} h \psi_2(h),
\end{aligned}
\end{equation}
where
\begin{equation}
\begin{aligned}
\psi_1(h) = \frac{1 - e^{-h}}{h}, \quad \psi_2(h) = \frac{h - 1 + e^{-h}}{h^2}.
\end{aligned}
\end{equation}
Using a Taylor expansion of $e^{-h}$, $b_{i-1}$ can be approximated as follows:
\begin{equation}
\label{eq:2nd_coeff_b_i_1}
\begin{aligned}
\tilde{b}_{i-1} &= \alpha_{t_{i+1}} h  \frac{h - 1 + e^{-h}}{h^2}
= \alpha_{t_{i+1}} \frac{h - 1 + (1 - h + \frac{1}{2}h^2 + \mathcal{O}(h^3) )}{h}
= \alpha_{t_{i+1}} \frac{1}{2}h + \mathcal{O}(h^2).
\end{aligned}
\end{equation}
${b}_i$ can be approximated as:
\begin{equation}
\label{eq:2nd_coeff_b_i}
\begin{aligned}
\tilde{b}_i &= (b_i + b_{i-1}) - \tilde{b}_{i-1}
= \alpha_{t_{i+1}} (1 - e^{-h}) - \frac{1}{2} \alpha_{t_{i+1}} h + \mathcal{O}(h^2).
\end{aligned}
\end{equation}

Applying $\tilde{b}_i$ and $\tilde{b}_{i-1}$ to the update equation, the equation becomes identical to the case where $B_1(h) = h$ is used in UniP-2 and UniC-1:
\begin{equation}
\begin{aligned}
\bm{x}_{t_{i+1}} &= \frac{\sigma_{t_{i+1}}}{\sigma_{t_i}} x_{t_i} 
+ b_i \bm{\hat x}_{\theta}(\bm{\hat x}_{\lambda_{t_{i}}},\lambda_{t_{i}}) 
+ b_{i-1} \bm{\hat x}_{\theta}(\bm{\hat x}_{\lambda_{t_{i-1}}},\lambda_{t_{i-1}})  \\
&= \frac{\sigma_{t_{i+1}}}{\sigma_{t_i}} x_{t_i} + \alpha_{t_{i+1}} (1 - e^{-h}) \bm{\hat x}_{\theta}(\bm{\hat x}_{\lambda_{t_{i}}},\lambda_{t_{i}}) 
+ \frac{1}{2} \alpha_{t_{i+1}} h \left(\bm{\hat x}_{\theta}(\bm{\hat x}_{\lambda_{t_{i-1}}}) - \bm{\hat x}_{\theta}(\bm{\hat x}_{\lambda_{t_{i}}},\lambda_{t_{i}}) \right) + \mathcal{O}(h^2)
\end{aligned}
\end{equation}
\section{Analysis of Shape Parameter}
\label{app:Analysis of Shape Parameter}

Building on the ImageNet 128$\times$128~\cite{deng2009imagenet} experiments presented in Appendix~\ref{app:imagenet128}, this section clarifies how the predictor’s shape parameter (Section~\ref{sec:Learning of Shape Parameter}) relates to both the coefficients in Eq.~\eqref{eq:RBF-Solver Equation} and the sampling trajectory.

\paragraph{Shape Parameter and Coefficient Magnitude Ratio (CMR)}

Figure~\ref{fig:optimized shape parameters} plots the predictor and corrector shape parameters,
$\gamma_{t_i}^{\mathrm{pred}}$ and $\gamma_{t_i}^{\mathrm{corr}}$, optimized by the procedure in
Section~\ref{sec:Learning of Shape Parameter}; both curves are shown on a logarithmic scale,
i.e., as $\log\gamma_{t_i}$.
To quantify the relative importance of each coefficient, we introduce the
\emph{coefficient magnitude ratio} (CMR).
For a given time index~$t_i$ and coefficient position~$j$, the CMR is defined as the
absolute value of that coefficient divided by the $\ell_1$-sum of all coefficients, i.e.,

\begin{equation}
\label{eq:CMR-definition}
\begin{aligned}
\bigl[\mathrm{CMR}^{\mathrm{pred}}_{t_i}\bigr]_j
  &= \frac{\bigl|[\mathbf{c}^{(p)}_{t_i}]_j\bigr|}
         {\displaystyle\sum_{k=0}^{p-1}\bigl|[\mathbf{c}^{(p)}_{t_i}]_k\bigr|}
  , \quad 0 \le j < p, \\[6pt]
\bigl[\mathrm{CMR}^{\mathrm{corr}}_{t_i}\bigr]_j
  &= \frac{\bigl|[\mathbf{c}^{(p+1)}_{t_{i+1}}]_j\bigr|}
         {\displaystyle\sum_{k=0}^{p}\bigl|[\mathbf{c}^{(p+1)}_{t_{i+1}}]_k\bigr|}
  , \quad 0 \le j < p+1.
\end{aligned}
\end{equation}


Throughout the figures, CMRs are displayed for guidance scales
\(\{2.0, 4.0, 6.0, 8.0\}\) and numbers of function evaluations (NFE)
\(\{5, 10, 15\}\).

\paragraph{Shape Parameter and Sampling Trajectory}
Figure~\ref{fig:mean_deltas} compares four quantities
(the shape parameter and the zeroth-index CMR correspond to the $\mathrm{NFE}=15$ results in
Figure~\ref{fig:optimized shape parameters}):
\begin{itemize}
    \item[(a)] The predictor‐side shape parameter
               $\gamma_{t_i}^{\mathrm{pred}}$ (Figure~\ref{fig:shape_param});
    \item[(b)] The predictor‐side zeroth‐index CMR $\bigl[\mathrm{CMR}^{\mathrm{pred}}_{t_i}\bigr]_0$ (Figure~\ref{fig:coeff0});
    \item[(c)] The absolute change in successive model evaluations
               $\lvert\Delta\bm{x}_{\theta}(\bm{x}_{t_i}, t_i)\rvert$
               (Figure~\ref{fig:mean_x0});
    \item[(d)] The absolute change in successive intermediate samples
               $\lvert\Delta\bm{x}_{t_i}\rvert$
               (Figure~\ref{fig:mean_xt}).
\end{itemize}

These differences, obtained from the UniPC~\cite{zhao2023unipc} run with $\mathrm{NFE}=200$, are defined as

\begin{equation*}
\begin{aligned}
\left|\Delta\bm{x}_{\theta}\bigl(\bm{x}_{t_i},\, t_i\bigr)\right|
  &=\bigl|
      \bm{x}_{\theta}\bigl(\bm{x}_{t_i},\, t_i\bigr)
      -\bm{x}_{\theta}\bigl(\bm{x}_{t_{i-1}},\, t_{i-1}\bigr)
    \bigr|,\\[4pt]
\left|\Delta\bm{x}_{t_i}\right|
  &=\bigl|\bm{x}_{t_i}-\bm{x}_{t_{i-1}}\bigr|,
  \qquad 0<i< M=200.
\end{aligned}
\end{equation*}

As Figure~\ref{fig:mean_xt} illustrates, the variation in the sampling trajectory grows as
the guidance scale increases from~2.0 to~8.0. Correspondingly,
Figure~\ref{fig:mean_x0} shows that the model evaluations change more abruptly at larger
scales. This tendency is especially pronounced during the early steps, where the sampler
captures coarse structural changes rather than fine textures~\cite{ho2020denoising}. To interpolate such rapidly
diverging model evaluations, the shape parameters are learned
(Figure~\ref{fig:shape_param}), which in turn adjust the CMRs
(Figure~\ref{fig:coeff0}). Notably, the zeroth-index CMR rises, indicating a
stronger dependency on the most recent information.

\begin{figure*}[t]
  \centering
  \begin{minipage}[t]{0.49\linewidth}
    \centering
    \includegraphics[width=\linewidth]{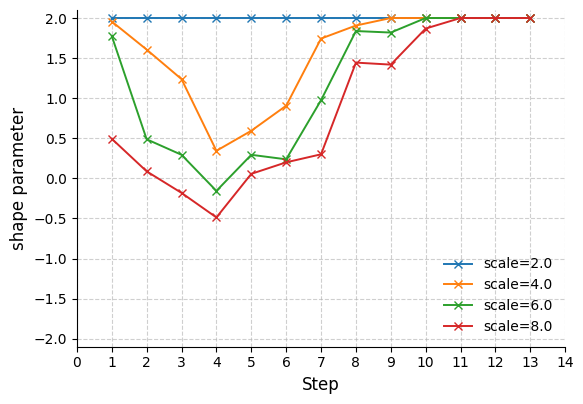}
    \subcaption{Predictor shape parameter $\gamma_{t_i}^{\mathrm{pred}}$}
    \label{fig:shape_param}
  \end{minipage}\hfill
  \begin{minipage}[t]{0.49\linewidth}
    \centering
    \includegraphics[width=\linewidth]{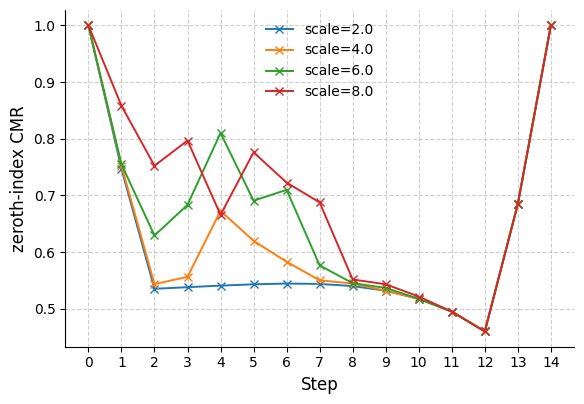}
    \subcaption{Predictor zeroth-index CMR $\bigl[\mathrm{CMR}^{\mathrm{pred}}_{t_i}\bigr]_0$}
    \label{fig:coeff0}
  \end{minipage}

  \vspace{0.8em} 

  \begin{minipage}[t]{0.49\linewidth}
    \centering
    \includegraphics[width=\linewidth]{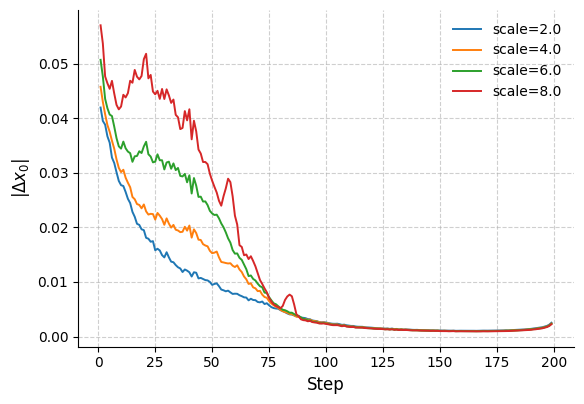}
    \subcaption{Difference of model evaluations $\lvert \Delta\bm{x}_{\theta}(\bm{x}_{t_i}, t_i)\rvert$}
    \label{fig:mean_x0}
  \end{minipage}\hfill
  \begin{minipage}[t]{0.49\linewidth}
    \centering
    \includegraphics[width=\linewidth]{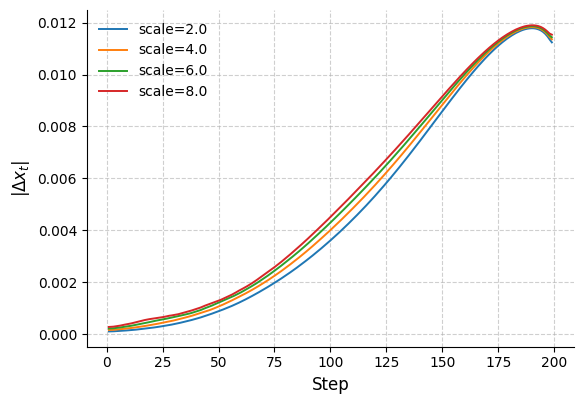}
    \subcaption{Difference of intermediate samples $\lvert \Delta\bm{x}_{t_i}\rvert$}
    \label{fig:mean_xt}
  \end{minipage}

  \caption{Evolution of key quantities across guidance scales 2.0, 4.0, 6.0, and 8.0.
Variations in the predictor’s shape parameter modulate the magnitude of its zeroth-index coefficient, which in turn helps interpolate the rapidly changing trajectory of the model evaluations.}
  \label{fig:mean_deltas}
\end{figure*}

\begin{figure*}[h]
  \centering
  \setlength{\tabcolsep}{0pt}
  \renewcommand{\arraystretch}{0}

  \begin{tabular}{c}
    {\scriptsize\textbf{Optimized shape parameters and coefficient weights (guidance scale = 2.0)}}\\[4pt]
    \includegraphics[width=\textwidth]{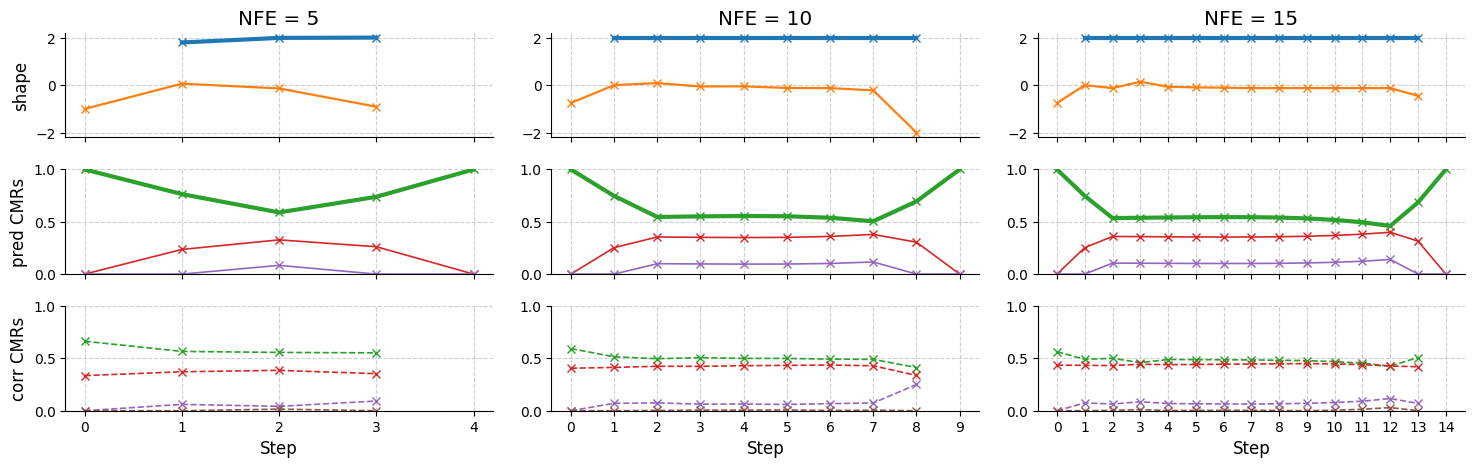}\\[6pt]

    {\scriptsize\textbf{Optimized shape parameters and coefficient weights (guidance scale = 4.0)}}\\[4pt]
    \includegraphics[width=\textwidth]{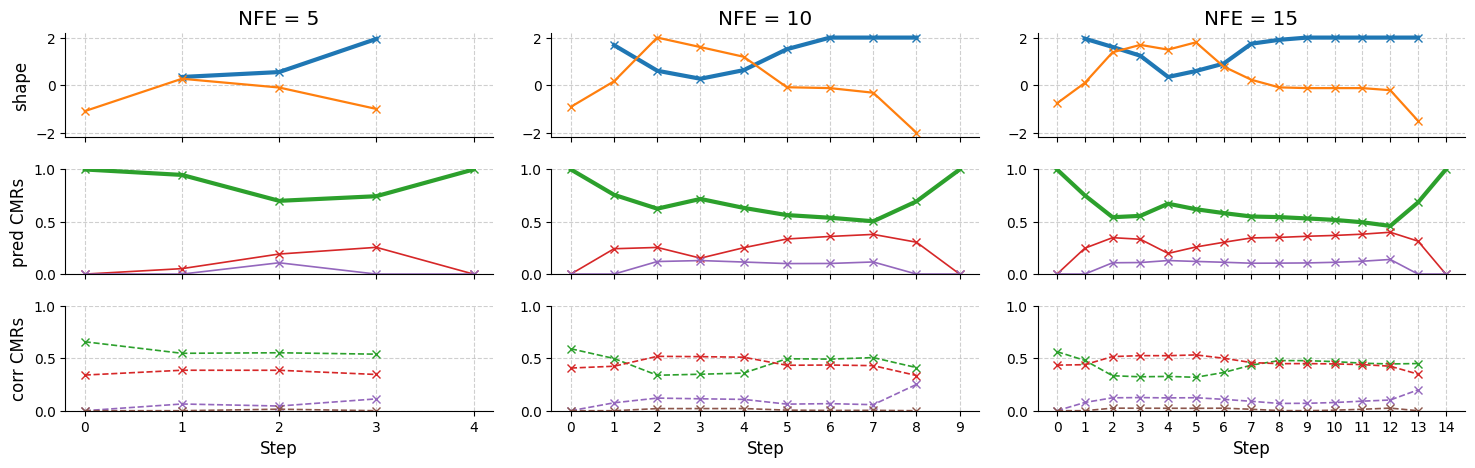}\\[6pt]

    {\scriptsize\textbf{Optimized shape parameters and coefficient weights (guidance scale = 6.0)}}\\[4pt]
    \includegraphics[width=\textwidth]{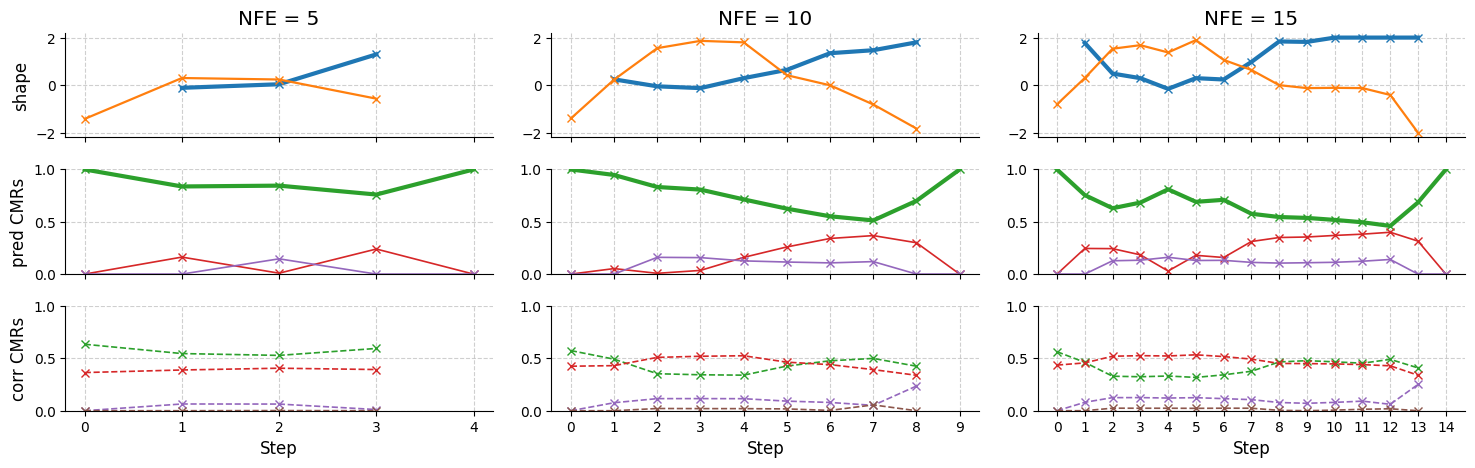}\\[6pt]

    {\scriptsize\textbf{Optimized shape parameters and coefficient weights (guidance scale = 8.0)}}\\[4pt]
    \includegraphics[width=\textwidth]{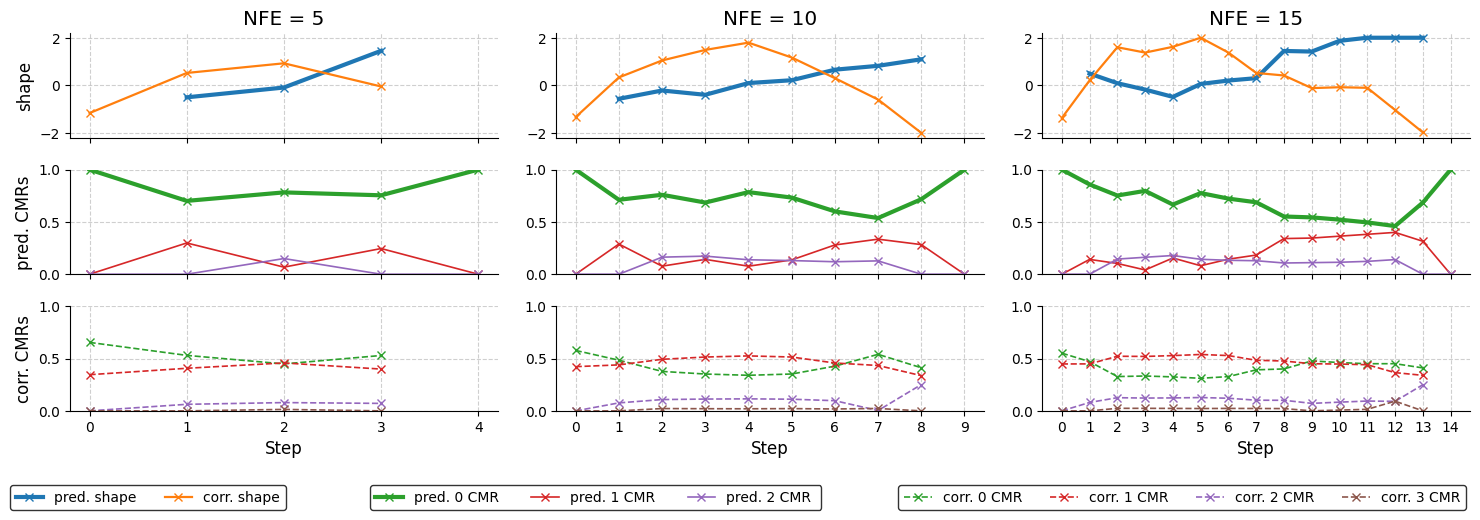}
  \end{tabular}

  \caption{Optimized predictor- and corrector-side shape parameters and their corresponding coefficient-magnitude ratios (CMRs).
Rows represent guidance scales 2.0, 4.0, 6.0, and 8.0 (top → bottom); columns represent $\mathrm{NFE}=$ 5, 10, 15.
Shape parameters are plotted on a logarithmic scale (blue = predictor, yellow = corrector).
CMRs are color-coded as follows: green (0-index), red (1-index), purple (2-index), and brown (3-index).}
  \label{fig:optimized shape parameters}
\end{figure*}

From this figure, we observe that the shape parameters—and hence the resulting coefficients—adapt to variations in the data. These results illustrate the Gaussian’s flexibility and the associated performance improvements. Similar outperforming experiments demonstrating this flexibility have been conducted in \cite{Dyn2007GaussianSubdivision, Yoon2022WENO_RBF} for subdivision schemes and hyperbolic conservation laws, respectively.

\clearpage
\section{Additional Ablation Study}
\label{app:additional_ablation_study}
This section discusses ablation studies performed in addition to those presented in the main text.
\subsection{Model Trajectory vs. Target Trajectory}
\label{app:intermediate_target_ablation}
We conduct an ablation study on how to choose the intermediate target
\(\bm{x}_{t_i}^{\mathrm{target}}\) when optimizing the shape parameter.
Following prior work~\cite{dpm_solver_v3,dc_solver}, one option is to run an
existing sampler—such as DDIM~\cite{song2020denoising} or
DPM-Solver++~\cite{lu2022dpmpp}—with a high number of function evaluations
(\(\text{NFE}=200\) or \(1000\)) and record the resulting trajectory from
model evaluations.  
Because this trajectory is produced entirely by model evaluations, we refer
to it as the \emph{model trajectory}. Alternatively, starting from the same endpoints—the noise
\(\bm{x}_{T}^{\mathrm{target}}\) and the image \(\bm{x}_{0}^{\mathrm{target}}\)—we can obtain the intermediate
target by running the forward diffusion process:
\begin{equation*}
    \bm{x}_{t_i}^{\mathrm{target}}
    = \alpha_{t_i}\,\bm{x}_{0}^{\mathrm{target}}
    + \sigma_{t_i}\,\bm{x}_{T}^{\mathrm{target}}.
    \label{eq:target_trajectory}
\end{equation*}
Since this trajectory is computed directly from the target noise and image, we call it the \emph{target trajectory}. To decide which approach yields better learning of the shape parameter, we conduct an experiment. Using DPM-Solver++ with \(\text{NFE}=200\), we obtain both the model trajectory and the target trajectory for 128 image--noise pairs and take each in turn as the intermediate target.
Table~\ref{tab:fid_model_vs_target} reports the FID scores obtained when the
shape parameter is trained using each type of intermediate target.  Across
almost all NFE settings, the \textbf{target trajectory} yields
consistently and substantially better FID than the \textbf{model
trajectory}.

\begin{table}[h]
    \centering
    \caption{Comparison of model and target trajectory. Sample quality is measured by FID $\downarrow$ on the CIFAR-10 32×32 dataset (Score-SDE), evaluated on 50k samples.}

    \label{tab:fid_model_vs_target}
    \renewcommand{\arraystretch}{1.2}
    \setlength{\tabcolsep}{5.0pt}
    \begin{tabular}{lccccccccccc}
        \toprule
        \multirow{2}{*}{\textbf{Series}} & \multicolumn{11}{c}{\textbf{NFE}} \\
        \cmidrule(lr){2-12}
        & \textbf{5} & \textbf{6} & \textbf{8} & \textbf{10} & \textbf{12} & \textbf{15} & \textbf{20} & \textbf{25} & \textbf{30} & \textbf{35} & \textbf{40} \\
        \midrule
        Model Trajectory  & 27.37 & \textbf{11.39} & 5.97 & 4.68 & 4.76 & 3.54 & 2.91 & 2.73 & 2.65 & 2.61 & 2.61 \\
        \addlinespace[0.5ex]
        \rowcolor{gray!20}
        Target Trajectory & \textbf{26.65} & 13.05 & \textbf{5.32} & \textbf{4.13} & \textbf{3.84} & \textbf{3.02} & \textbf{2.66} & \textbf{2.56} & \textbf{2.51} & \textbf{2.50} & \textbf{2.49} \\
        \bottomrule
    \end{tabular}
\end{table}

\subsection{Shape Parameter Range}
\label{app:shape_parameter_range}
As described in Section~\ref{sec:Learning of Shape Parameter}, we learn the shape
parameter $\gamma$ in log space. Empirically, the optimization frequently drives
$\gamma \!\to\! \infty$. When $\gamma$ diverges, the solver converges to the
Adams method, as proven in Section~\ref{sec:Lagrange Convergence}. For numerical
stability, we therefore switch to the Adams method whenever
$\log\gamma$ exceeds a chosen threshold.

To identify the best range, we perform an ablation study.
Tables~\ref{tab:fid_shape_range_cifar10} and~\ref{tab:fid_shape_range_imagenet}
evaluate three search intervals for $\log\gamma$: $[-1,1]$, $[-2,2]$,
and $[-3,3]$.  In each case, we fall back to the Adams method if
$\log\gamma$ rises above the interval’s upper bound (i.e., $1$, $2$, or
$3$, respectively).  On CIFAR-10 (Score-SDE), the early switch with
$[-1,1]$ gives the best results, whereas on ImageNet $64\times64$ the
wider intervals $[-2,2]$ and $[-3,3]$ are optimal at different NFEs.
Since the best threshold varies slightly across datasets, we
adopt the balanced range $[-2,2]$ as our default setting.

\begin{table}[h]
  \centering
    \caption{Shape parameter range ablation study. Sample quality is measured by FID $\downarrow$ on CIFAR-10 32$\times$32 dataset (Score-SDE), evaluated on 50k samples.}
  \label{tab:fid_shape_range_cifar10}
  \renewcommand{\arraystretch}{1.2}
  \setlength{\tabcolsep}{5pt}
  \begin{tabularx}{\textwidth}{lYYYYYYYYYYY}
    \toprule
    \multirow{2}{*}{\textbf{Series}}
      & \multicolumn{11}{c}{\textbf{NFE}} \\ \cmidrule(lr){2-12}
      & \textbf{5}&\textbf{6}&\textbf{8}&\textbf{10}&\textbf{12}&\textbf{15}
      &\textbf{20}&\textbf{25}&\textbf{30}&\textbf{35}&\textbf{40}\\
    \midrule
    $[-1,\,1]$ &
      \textbf{26.34}&16.34&\textbf{5.28}&\textbf{4.11}&\textbf{3.80}&\textbf{3.01}&%
      \textbf{2.66}&\textbf{2.56}&\textbf{2.51}&\textbf{2.50}&\textbf{2.49}\\
    \rowcolor{gray!20}
    $[-2,\,2]$ &
      26.65&13.05&5.32&4.13&3.84&3.02&%
      \textbf{2.66}&\textbf{2.56}&\textbf{2.51}&\textbf{2.50}&\textbf{2.49}\\
    $[-3,\,3]$ &
      26.64&\textbf{12.45}&5.36&4.16&3.92&3.02&%
      \textbf{2.66}&\textbf{2.56}&\textbf{2.51}&\textbf{2.50}&2.50\\
    \bottomrule
  \end{tabularx}
\end{table}

\begin{table}[h]
  \centering
    \caption{Shape parameter range ablation study. Sample quality is measured by FID $\downarrow$ on ImageNet 64$\times$64 dataset (Improved-Diffusion), evaluated on 50k samples.}
  \label{tab:fid_shape_range_imagenet}
  \renewcommand{\arraystretch}{1.2}
  \setlength{\tabcolsep}{5pt}
  \begin{tabularx}{\textwidth}{lYYYYYYYYYYY}
    \toprule
    \multirow{2}{*}{\textbf{Series}}
      & \multicolumn{11}{c}{\textbf{NFE}} \\ \cmidrule(lr){2-12}
      & \textbf{5}&\textbf{6}&\textbf{8}&\textbf{10}&\textbf{12}&\textbf{15}
      &\textbf{20}&\textbf{25}&\textbf{30}&\textbf{35}&\textbf{40}\\
    \midrule
    $[-1,\,1]$ &
      116.69&\textbf{70.58}&35.68&26.57&22.37&20.89&%
      19.30&18.73&18.33&18.18&18.09\\
    \rowcolor{gray!20}
    $[-2,\,2]$ &
      \textbf{116.55}&70.68&\textbf{34.15}&\textbf{26.47}&22.72&20.89&%
      19.30&\textbf{18.72}&18.31&18.17&\textbf{18.07}\\
    $[-3,\,3]$ &
      116.75&70.60&35.13&26.64&\textbf{22.36}&\textbf{20.88}&%
      \textbf{19.28}&18.73&\textbf{18.30}&\textbf{18.15}&\textbf{18.07}\\
    \bottomrule
  \end{tabularx}
\end{table}

\section{Runtime Analysis of Shape Parameter Optimization}
\label{app:elapsed_time}

In this section, we measure the wall-clock time required to optimize the shape
parameters for sampling on
CIFAR-10~\cite{chrabaszcz2017downsampled},
ImageNet 64$\times$64~\cite{chrabaszcz2017downsampled},
and ImageNet 128$\times$128 and 256$\times$256~\cite{deng2009imagenet}.

The overall runtime consists of two stages:
\begin{enumerate}[nosep,label=(\arabic*)]
    \item \textbf{Target-set preparation}: generation of \emph{(noise, data)} target
          pairs with an existing sampler;
    \item \textbf{Per-NFE shape parameter optimization} with the proposed
          RBF-Solver.
\end{enumerate}

The target set is created with UniPC~\cite{zhao2023unipc}
using \(\text{NFE}=200\).
For each NFE, we perform a grid search over
\(1{,}089\;(\gamma^{\mathrm{pred}},\gamma^{\mathrm{corr}})\) pairs obtained by
uniformly dividing the range \([-2,2]\) into 33 values for each parameter.

Three variables control the amount of data processed during optimization:
\begin{itemize}
    \item \(N\): number of target pairs in the target set,
    \item \(M\): batch size used during optimization,
    \item \(K\): number of independent optimization runs.
\end{itemize}

We fix $N=128$ for all datasets. 
For CIFAR-10 and ImageNet~$64\times64$, we use $(M,K)=(128,1)$.  
For ImageNet~$128\times128$ and $256\times256$, we set $(M,K)=(16,20)$ owing to GPU-memory limitations.  
For each chosen NFE, the sampler requires a vector of
\(2(\text{NFE}-1)\) shape parameters: 
\(\gamma^{\mathrm{pred}}_{t_i}\) for \(i=1,\dots,\text{NFE}-1\)
(the initial predictor parameter \(\gamma^{\mathrm{pred}}_{t_0}\) is unused)
and \(\gamma^{\mathrm{corr}}_{t_i}\) for \(i=0,\dots,\text{NFE}-2\)
(the final corrector parameter at \(t_{\text{NFE}}\) is not needed).
We optimize this vector \(K\) times, obtain \(K\) independent estimates,
and then average them element-wise to obtain the final
shape-parameter vector.
The same \(N\) and \(M\) settings apply when constructing the target set.
To obtain \(N = 128\) target pairs, CIFAR-10 and ImageNet~$64\times64$ require
one pass with \(M = 128\),
whereas ImageNet~$128\times128$ and $256\times256$ need eight passes with
\(M = 16\) each \((16\times 8 = 128)\).
All timings are measured on a single NVIDIA RTX 4090 GPU and an Intel Core i7-12700H CPU.

\begin{table}[H]
  \centering
  \caption{Runtime per NFE for shape-parameter optimization. 
Each per-NFE value is the time required to optimize one batch of size $M$. 
The \textbf{Total} row sums all per-NFE runtimes, multiplies the result by $K$, and then adds the target-set runtime.
All values are reported as mean$\;\pm\;$std over 10 runs.
}
  \label{tab:runtime_multi_dataset}
  \setlength{\tabcolsep}{6pt}
  \renewcommand{\arraystretch}{1.15}
  \resizebox{\linewidth}{!}{%
  \begin{tabular}{lcccc}
    \toprule
    \textbf{NFE} &
    \textbf{CIFAR-10} &
    \textbf{ImageNet 64$\times$64} &
    \textbf{ImageNet 128$\times$128} &
    \textbf{ImageNet 256$\times$256} \\
    \midrule
     5  & $1.916\pm0.032$ & $4.846\pm0.006$ & $4.356\pm0.046$  & $7.432\pm0.024$ \\
     6  & $2.361\pm0.032$ & $6.002\pm0.026$ & $5.346\pm0.031$  & $9.101\pm0.025$ \\
     8  & $3.333\pm0.028$ & $8.333\pm0.019$ & $7.384\pm0.041$  & $12.491\pm0.024$ \\
    10  & $4.207\pm0.029$ & $10.692\pm0.012$ & $9.461\pm0.031$ & $15.889\pm0.027$ \\
    12  & $5.205\pm0.031$ & $12.968\pm0.009$ & $11.465\pm0.065$ & $19.275\pm0.033$ \\
    15  & $6.524\pm0.033$ & $16.504\pm0.009$ & $14.566\pm0.067$ & $24.349\pm0.041$ \\
    20  & $8.902\pm0.033$ & $22.236\pm0.008$ & $19.712\pm0.067$ & $32.810\pm0.069$ \\
    \midrule
    Target Set & $26.365\pm0.028$ & $57.905\pm0.019$ & $312.700\pm0.714$ & $1139.204\pm1.290$ \\
    \textbf{Total} &
      \textbf{58.8~s} &
      \textbf{2~min~19.5~s} &
      \textbf{29~min~18.5~s} &
      \textbf{59~min~26.1~s} \\
    \bottomrule
  \end{tabular}}
\end{table}

\clearpage
\section{License}
\label{app:license}

We list the used datasets and code licenses in Table~\ref{tab:license}.


\begin{table}[h]                
  \centering
  \footnotesize                    
  \setlength{\tabcolsep}{9pt}
  \renewcommand{\arraystretch}{1.15}
  \caption{Datasets, codebases, and their respective licenses.}
  \label{tab:license}
  \vspace{0.3em}

  \begin{adjustbox}{width=\textwidth}  
  \begin{tabular}{@{}llll@{}}
    \toprule
    \textbf{Name} & \textbf{URL} & \textbf{Citation} & \textbf{License}\\
    \midrule
    \multicolumn{4}{l}{\textit{Datasets}}\\
    \cmidrule(lr){1-4}
    CIFAR-10         & \url{https://www.cs.toronto.edu/~kriz/cifar.html} & \cite{Krizhevsky09learningmultiple} & N/A\\
    ImageNet         & \url{https://www.image-net.org}                  & \cite{deng2009imagenet}             & ImageNet Terms\\
    MS-COCO~2014     & \url{https://cocodataset.org}                    & \cite{lin2014microsoft}             & CC~BY~4.0\\
    \midrule
    \multicolumn{4}{l}{\textit{Code repositories}}\\
    \cmidrule(lr){1-4}
    Score-SDE        & \url{https://github.com/yang-song/score_sde}      & \cite{song2021score}        & Apache 2.0\\
    EDM              & \url{https://github.com/NVlabs/edm}               & \cite{karras2022elucidating} & CC BY-NC-SA 4.0\\
    Guided-Diffusion & \url{https://github.com/openai/guided-diffusion}  & \cite{dhariwal2021diffusion} & MIT\\
    Latent-Diffusion & \url{https://github.com/CompVis/latent-diffusion} & \cite{rombach2022high}      & MIT\\
    Improved-Diffusion & \url{https://github.com/openai/improved-diffusion} & \cite{nichol2021improved} & MIT\\
    Stable-Diffusion & \url{https://github.com/CompVis/stable-diffusion} & \cite{rombach2022high}      & CreativeML Open RAIL-M\\
    DPM-Solver++     & \url{https://github.com/LuChengTHU/dpm-solver}    & \cite{lu2022dpmpp}          & MIT\\
    UniPC            & \url{https://github.com/wl-zhao/UniPC}            & \cite{zhao2023unipc}        & MIT\\
    DPM-Solver-v3    & \url{https://github.com/thu-ml/DPM-Solver-v3}     & \cite{dpm_solver_v3}        & MIT\\
    DC-Solver        & \url{https://github.com/wl-zhao/DC-Solver}        & \cite{dc_solver}            & Apache 2.0\\
    AMED-Solver      & \url{https://github.com/zju-pi/diff-sampler}      & \cite{amed_solver}          & Apache 2.0\\
    \bottomrule
  \end{tabular}
  \end{adjustbox}
\end{table}

\section{Additional Main Results}
\label{app:Additional_Main_Results}
In this section, we provide additional information on the main experiments described in Section \ref{sec:Experiments}, including the experimental environments, the models used, the sampler configurations, the exact numerical results obtained, and detailed analyses of those results. We report the GPU models and card counts because identical seeds can yield different noise across GPU models and in multi-GPU runs.

\paragraph{Common Sampler Settings} Across all experiments, we employ data-prediction mode with multistep samplers: unconditional runs use third-order updates, conditional runs use second-order updates, and the \texttt{lower\_order\_final} flag automatically switches to a lower order when only a few steps remain.

\begin{table}[h]
  \setlength{\tabcolsep}{9pt}
  \renewcommand{\arraystretch}{1.15}
  \centering
  \begin{tabular}{llcl}
    \toprule
    Dataset & Model & \# Eval. samples & Section \\
    \midrule
    \multicolumn{4}{l}{\textbf{Unconditional}} \\[2pt]
    CIFAR-10 32$\times$32~\cite{Krizhevsky09learningmultiple} & Score-SDE~\cite{song2021score}          & 50k & Appendix~\ref{app:cifar10_scoresdes} \\
    CIFAR-10 32$\times$32~\cite{Krizhevsky09learningmultiple} & EDM~\cite{karras2022elucidating}        & 50k & Appendix~\ref{app:cifar10_edm}        \\
    ImageNet 64$\times$64~\cite{chrabaszcz2017downsampled}    & Improved-Diffusion~\cite{nichol2021improved} & 50k & Appendix~\ref{app:imagenet64_improved} \\
    \midrule
    \multicolumn{4}{l}{\textbf{Conditional}} \\[2pt]
    ImageNet 128$\times$128~\cite{deng2009imagenet} & Guided-Diffusion~\cite{dhariwal2021diffusion} & 50k & Appendix~\ref{app:imagenet128} \\
    ImageNet 256$\times$256~\cite{deng2009imagenet} & Guided-Diffusion~\cite{dhariwal2021diffusion} & 10k & Appendix~\ref{app:imagenet256} \\
    LAION-5B~\cite{schuhmann2022laion}              & Stable-Diffusion v1.4~\cite{rombach2022high} & 10k & Appendix~\ref{app:stable_diffusion} \\
    \bottomrule
  \end{tabular}
  \caption{Summary of experimental settings used in Additional Main Results}
\end{table}

\subsection{Unconditional Models}
\label{app:Additional Experiment Results - Unconditional Models}

\subsubsection{CIFAR-10 (Score-SDE) Experiment Results}
\label{app:cifar10_scoresdes}

\paragraph{Environment Setup}
\begin{itemize}
    \item \textbf{GPU}: Single NVIDIA RTX~4090
    \item \textbf{Model}: Score-SDE~\cite{song2021score}, \url{https://github.com/yang-song/score_sde}
    \item \textbf{Checkpoint}: \texttt{vp/cifar10\_ddpmpp\_deep\_continuous}
\end{itemize}

\paragraph{Sampler Settings}
\begin{itemize}
    \item \textbf{UniPC}: employs $B_1(h)$, which yields the best results in unconditional generation, as reported in \cite{zhao2023unipc}.
    \item \textbf{RBF-Solver}: The shape parameters are optimized using a target set of 128 samples generated by running UniPC with NFE$=$200. The solver is also evaluated with order$=$4.
\end{itemize}

\paragraph{Experiment Results}
As shown in Table~\ref{tab:fid_cifar10_score_sde}, RBF-Solver begins to outperform the competing methods once the number of function evaluations~(NFE) exceeds~10, and it maintains a consistent FID margin up to NFE~40.  
For RBF-Solver with order$=$4, the performance is the best among all samplers at NFE$=$5, and at NFE$=$6 it still surpasses the order$=$3 variant. This trend is consistently observed on the CIFAR-10 (EDM) experiment~\ref{app:cifar10_edm}.

\begin{table}[h]
    \centering
    \caption{Sample quality measured by FID $\downarrow$ on CIFAR-10 32$\times$32 dataset (Score-SDE), evaluated on 50k samples. Numbers in parentheses indicate the solver order.}
    \label{tab:fid_cifar10_score_sde}
    \renewcommand{\arraystretch}{1.2}
    \setlength{\tabcolsep}{5.0pt}
    \begin{tabular}{lccccccccccc}
        \toprule
        \multirow{2}{*}{\textbf{Model}} & \multicolumn{11}{c}{\textbf{NFE}}\\
        \cmidrule(lr){2-12}
        & \textbf{5} & \textbf{6} & \textbf{8} & \textbf{10} & \textbf{12} & \textbf{15} & \textbf{20} & \textbf{25} & \textbf{30} & \textbf{35} & \textbf{40} \\
        \midrule
        DPM-Solver++ (3)      & 28.50 & 13.46 & 5.32 & 4.01 & 4.04 & 3.32 & 2.89 & 2.75 & 2.68 & 2.65 & 2.62 \\
        UniPC--$B_1(h)$ (3)   & 23.69 & \textbf{10.38} & \textbf{5.15} & \textbf{3.96} & 3.93 & 3.05 & 2.73 & 2.64 & 2.60 & 2.59 & 2.57 \\
        \addlinespace[0.5ex]
        \rowcolor{gray!20}
        RBF-Solver (3)        & 26.65 & 13.05 & 5.32 & 4.13 & \textbf{3.84} & 3.02 & 2.66 & 2.56 & 2.51 & 2.50 & 2.49 \\
        \rowcolor{gray!20}
        RBF-Solver (4)        & \textbf{20.17} & 12.45 & 5.71 & 4.26 & 3.90 & \textbf{2.87} & \textbf{2.60} & \textbf{2.54} & \textbf{2.49} & \textbf{2.49} & \textbf{2.48} \\
        \bottomrule
    \end{tabular}
\end{table}

\begin{table*}[p]
  \centering

  \caption{Qualitative comparison of different samplers on CIFAR-10 (Score-SDE). Each image is randomly sampled and solver order 3. Columns correspond to samplers; rows indicate NFE values of 5, 10, and 15.}
  
  \label{tab:scoresde_samples}
  \renewcommand{\arraystretch}{1.02}
  \setlength{\tabcolsep}{2pt}

  \begin{adjustbox}{max width=\textwidth}
  \begin{tabular}{@{}C C C@{}}
    \toprule
    \textbf{DPM-Solver++} & \textbf{UniPC} & \textbf{RBF-Solver} \\
    \midrule
    \multicolumn{3}{c}{\textbf{NFE = 5}} \\[2pt]
    \includegraphics[width=\linewidth]{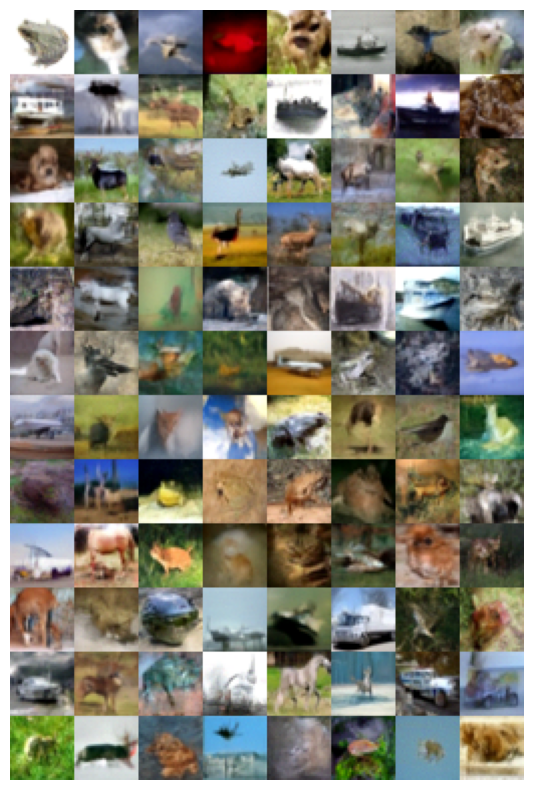} &
    \includegraphics[width=\linewidth]{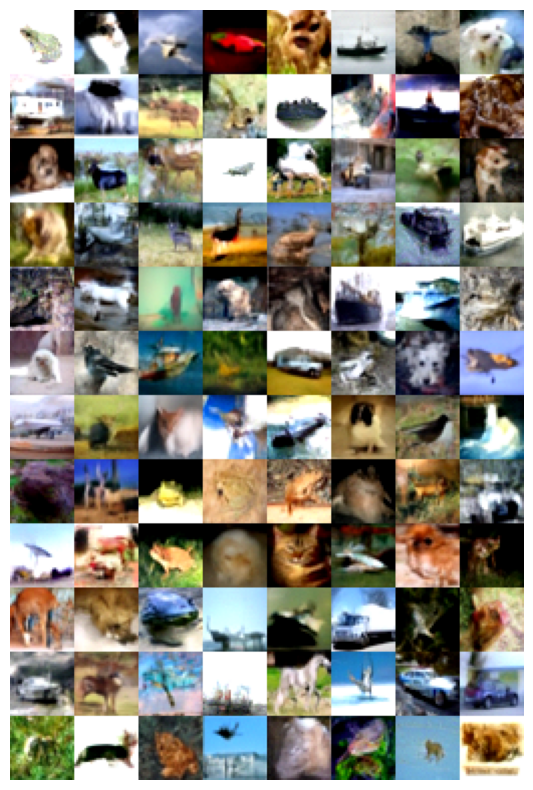} &
    \includegraphics[width=\linewidth]{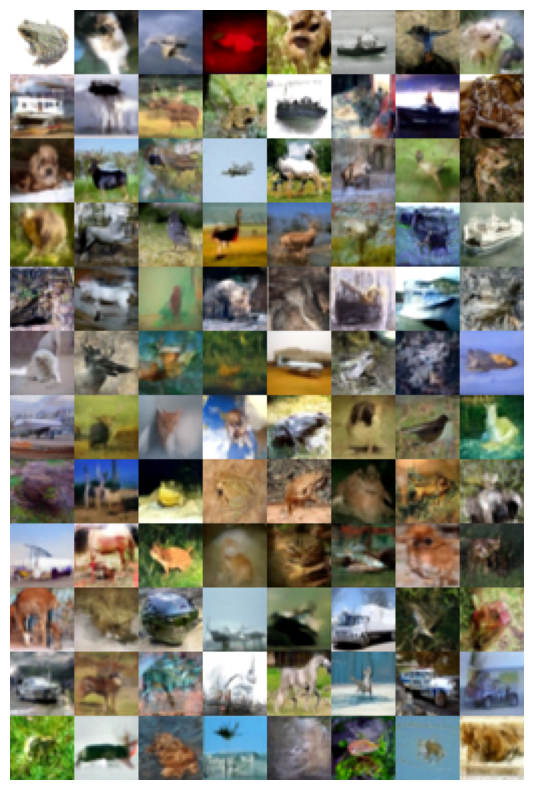} \\
    \midrule
    \multicolumn{3}{c}{\textbf{NFE = 10}} \\[2pt]
    \includegraphics[width=\linewidth]{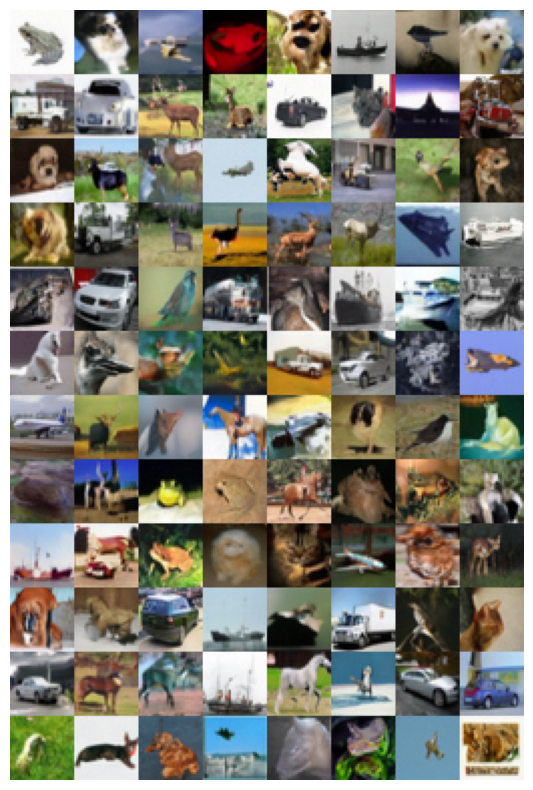} &
    \includegraphics[width=\linewidth]{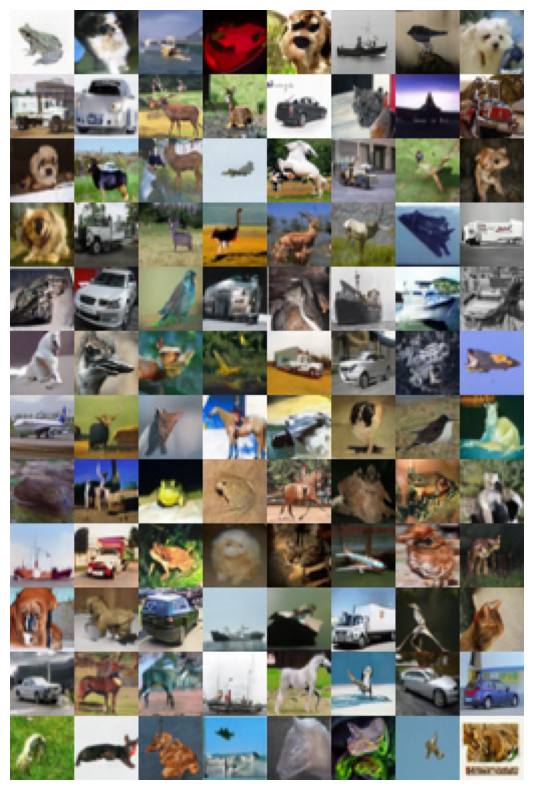} &
    \includegraphics[width=\linewidth]{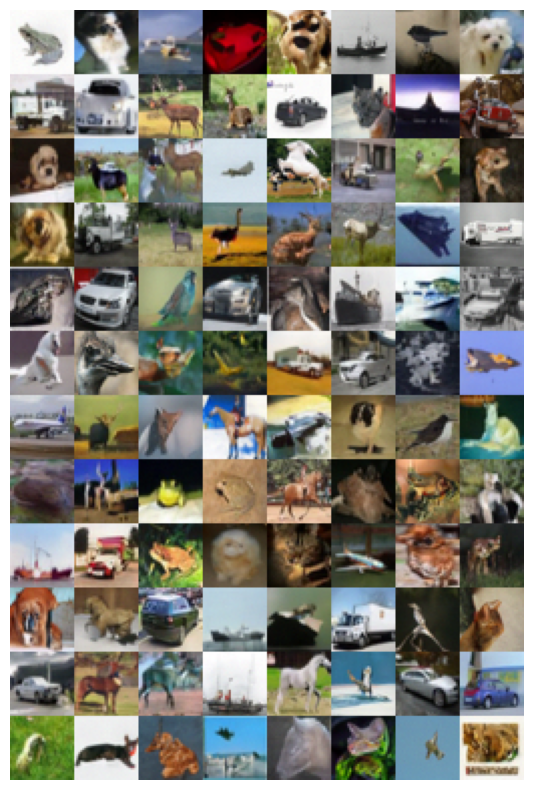} \\
    \midrule
    \multicolumn{3}{c}{\textbf{NFE = 15}} \\[2pt]
    \includegraphics[width=\linewidth]{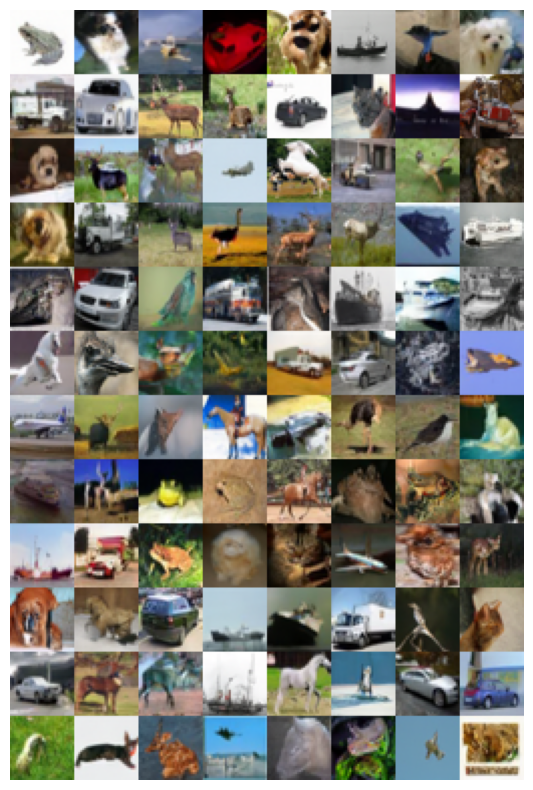} &
    \includegraphics[width=\linewidth]{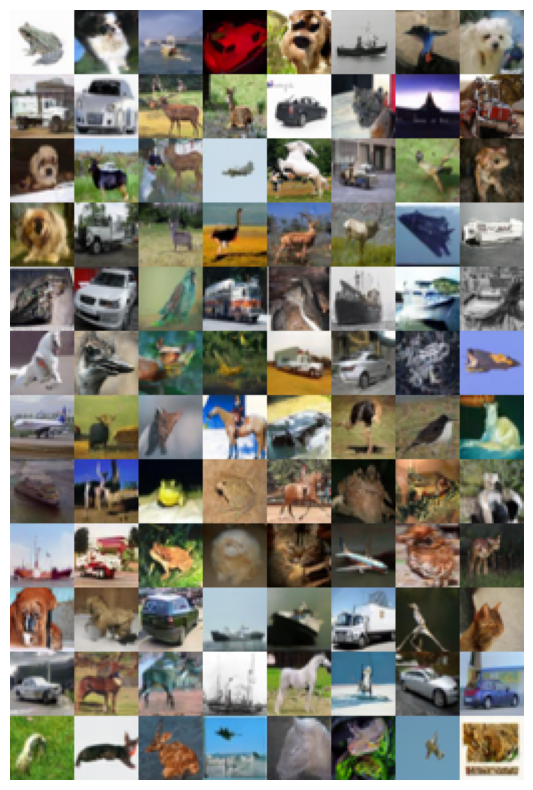} &
    \includegraphics[width=\linewidth]{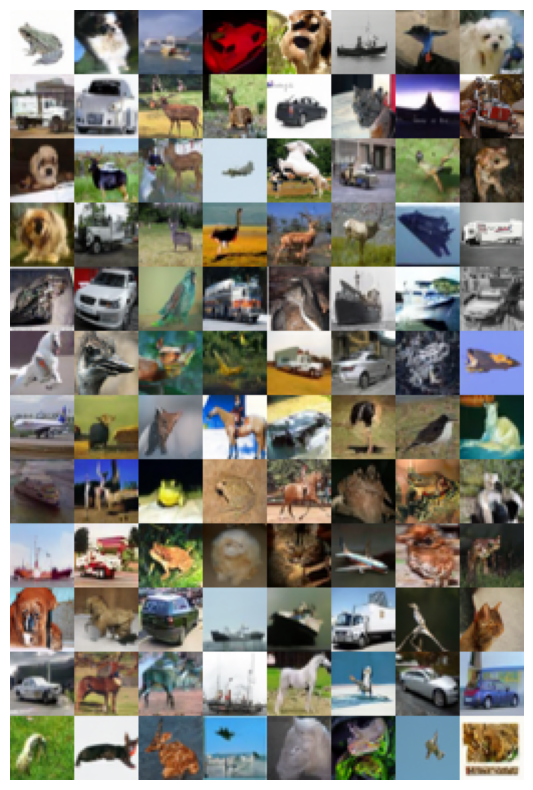} \\
    \bottomrule
  \end{tabular}
  \end{adjustbox}
\end{table*}

\subsubsection{CIFAR-10 (EDM) Experiment Results}
\label{app:cifar10_edm}

\paragraph{Environment Setup}
\begin{itemize}
    \item \textbf{GPU}: Single NVIDIA RTX~4090
    \item \textbf{Model}: EDM~\cite{karras2022elucidating}, \url{https://github.com/NVlabs/edm}
    \item \textbf{Checkpoint}: \texttt{cifar10‑32x32‑uncond‑vp}
\end{itemize}

\paragraph{Sampler Settings}
\begin{itemize}
    \item \textbf{UniPC}: uses $B_1(h)$ and $B_2(h)$.
    \item \textbf{RBF-Solver}: The shape parameters are optimized using a target set of 128 samples generated by running UniPC with NFE$=$200. The solver is also evaluated with order$=$4.
\end{itemize}

\paragraph{Experiment Results} As shown in Table~\ref{tab:fid_cifar10_edm}, RBF‐Solver with order$=$3 achieves marginally better FID scores than UniPC for all NFEs except 10 and 12. The order$=$4 variant provides a sizeable performance gain over order$=$3 at NFE 5 and 6, but offers no clear advantage at higher NFEs.

\begin{table}[h]
    \centering
    \caption{Sample quality measured by FID $\downarrow$ on CIFAR-10 32$\times$32 dataset (EDM), evaluated on 50k samples. Numbers in parentheses indicate the solver order.}
    \label{tab:fid_cifar10_edm}
    \renewcommand{\arraystretch}{1.2}
    \setlength{\tabcolsep}{5.0pt}
    \begin{tabular}{lccccccccccc}
        \toprule
        \multirow{2}{*}{\textbf{Model}} & \multicolumn{11}{c}{\textbf{NFE}}\\
        \cmidrule(lr){2-12}
        & \textbf{5} & \textbf{6} & \textbf{8} & \textbf{10} & \textbf{12} & \textbf{15} & \textbf{20} & \textbf{25} & \textbf{30} & \textbf{35} & \textbf{40} \\
        \midrule
        DPM-Solver++ (3)   & 24.54 & 11.87 & 4.36 & 2.91 & 2.45 & 2.17 & 2.05 & 2.02 & 2.01 & 2.00 & 2.00 \\
        UniPC-$B_1(h)$ (3)    & 23.75 & 12.74 & 4.22 & 3.01 & 2.41 & 2.08 & 2.01 & 2.00 & 2.00 & 2.00 & 2.00 \\
        UniPC-$B_2(h)$ (3)    & 23.52 & 11.11 & 3.86 & \textbf{2.85} & \textbf{2.37} & 2.07 & 2.01 & 2.00 & 1.99 & 1.99 & 1.99 \\
        \addlinespace[0.5ex]
        \rowcolor{gray!20}
        RBF-Solver (3)     & 22.25 & 10.78 & \textbf{3.77} & 2.88 & \textbf{2.37} & 2.06 & \textbf{1.99} & \textbf{1.98} & \textbf{1.98} & \textbf{1.98} & \textbf{1.98} \\
        \rowcolor{gray!20}
        RBF-Solver (4)     & \textbf{14.91} & \textbf{7.67} & 4.90 & 3.71 & 2.50 & \textbf{2.05} & 2.00 & 1.99 & 1.99 & 1.99 & \textbf{1.98} \\
        \bottomrule
    \end{tabular}
\end{table}

\subsubsection{ImageNet 64$\times$64 Experiment Results}
\label{app:imagenet64_improved}

\paragraph{Environment Setup}
\begin{itemize}
    \item \textbf{GPUs}: 8$\times$ NVIDIA RTX~4090
    \item \textbf{Model}: Improved-Diffusion~\cite{nichol2021improved},\\
    \url{https://github.com/openai/improved-diffusion}
    \item \textbf{Checkpoint}: \texttt{imagenet64\_uncond\_100M\_1500K.pt}
\end{itemize}

\paragraph{Sampler Settings}
\begin{itemize}
    \item \textbf{UniPC}: uses $B_1(h)$.
    \item \textbf{RBF-Solver}: The shape parameters are optimized using a target set of 128 samples generated by running UniPC with NFE$=$200. The solver is also evaluated with order$=$4.
\end{itemize}

\paragraph{Experiment Results} As shown in Table~\ref{tab:fid_imagenet64_improved}, both RBF-Solver variants (order$=$3 and order$=$4) outperform the competing samplers at every NFE except 5 and 6.


\begin{table}[h]
    \centering
    \caption{Sample quality measured by FID $\downarrow$ on ImageNet 64$\times$64 dataset (Improved-Diffusion), evaluated on 50k samples. Numbers in parentheses indicate the solver order.}
    \label{tab:fid_imagenet64_improved}
    \renewcommand{\arraystretch}{1.2}
    \setlength{\tabcolsep}{3pt}
    \begin{tabular}{lccccccccccc}
        \toprule
        \multirow{2}{*}{\textbf{Model}} & \multicolumn{11}{c}{\textbf{NFE}}\\
        \cmidrule(lr){2-12}
        & \textbf{5} & \textbf{6} & \textbf{8} & \textbf{10} & \textbf{12} & \textbf{15} & \textbf{20} & \textbf{25} & \textbf{30} & \textbf{35} & \textbf{40} \\
        \midrule
        DPM-Solver++ (3)  & 110.23 & 66.00 & 38.48 & 28.95 & 24.59 & 21.62 & 19.94 & 19.36 & 18.91 & 18.70 & 18.57 \\
        UniPC-$B_1(h)$ (3)  & \textbf{109.81} & \textbf{65.06} & 35.11 & 27.75 & 24.30 & 21.15 & 19.51 & 18.98 & 18.60 & 18.42 & 18.34 \\
        \rowcolor{gray!20}
        RBF-Solver (3) & 116.55 & 70.68 & 34.15 & 26.47 & 22.72 & 20.89 & 19.30 & 18.72 & 18.31 & 18.17 & 18.07 \\
        \rowcolor{gray!20}
        RBF-Solver (4) & 126.26 & 76.92 & \textbf{31.71} & \textbf{24.72} & \textbf{21.70} & \textbf{19.61} & \textbf{18.67} & \textbf{18.30} & \textbf{18.08} & \textbf{17.99} & \textbf{17.95} \\
        \bottomrule
    \end{tabular}
\end{table}

\begin{table*}[p]
  \centering

  \caption{Qualitative comparison of different samplers on ImageNet 64$\times$64. Each image is randomly sampled and solver order$=$3. Columns correspond to samplers; rows indicate NFE values of 5, 10, and 15.}
  
  \label{tab:imagenet64_samples}
  \renewcommand{\arraystretch}{1.02}
  \setlength{\tabcolsep}{2pt}

  \begin{adjustbox}{max width=\textwidth}
  \begin{tabular}{@{}C C C@{}}
    \toprule
    \textbf{DPM-Solver++} & \textbf{UniPC} & \textbf{RBF-Solver} \\
    \midrule
    \multicolumn{3}{c}{\textbf{NFE = 5}} \\[2pt]
    \includegraphics[width=\linewidth]{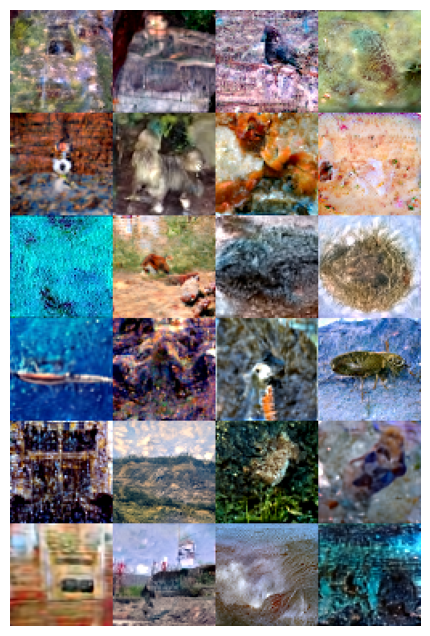} &
    \includegraphics[width=\linewidth]{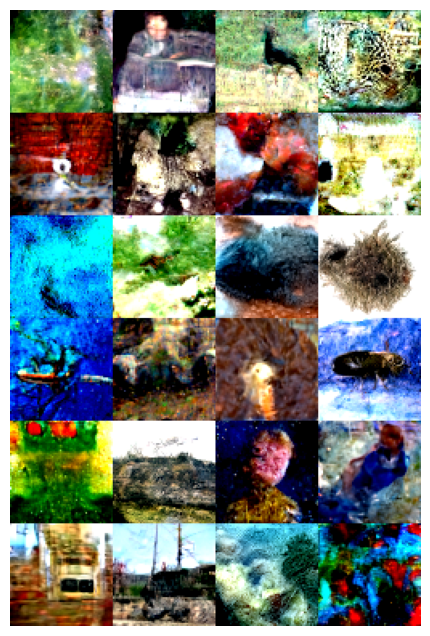} &
    \includegraphics[width=\linewidth]{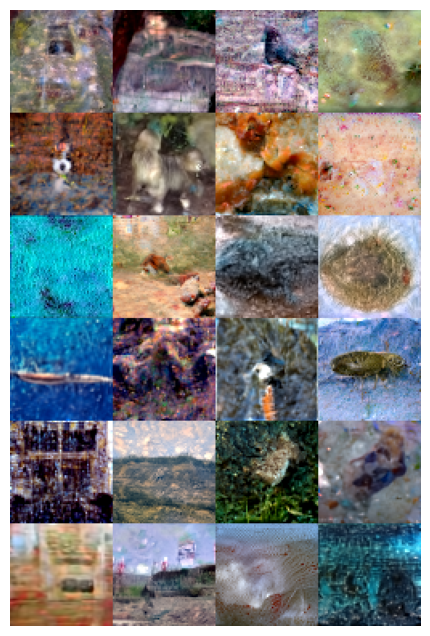} \\
    \midrule
    \multicolumn{3}{c}{\textbf{NFE = 10}} \\[2pt]
    \includegraphics[width=\linewidth]{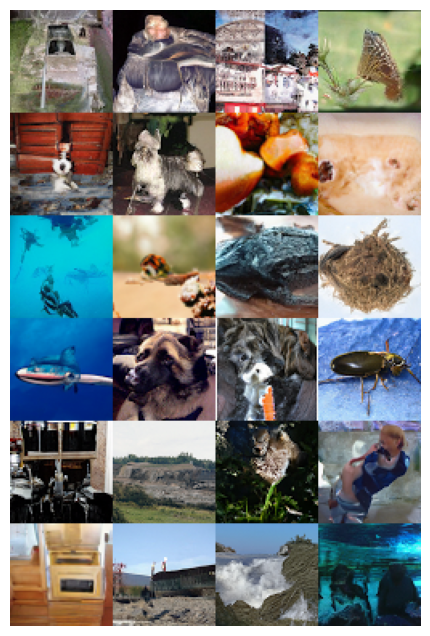} &
    \includegraphics[width=\linewidth]{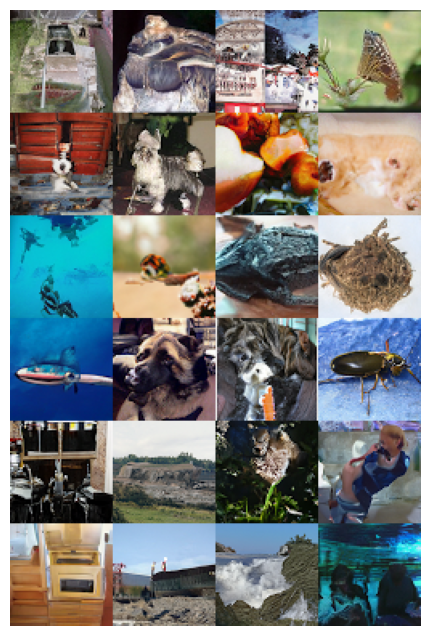} &
    \includegraphics[width=\linewidth]{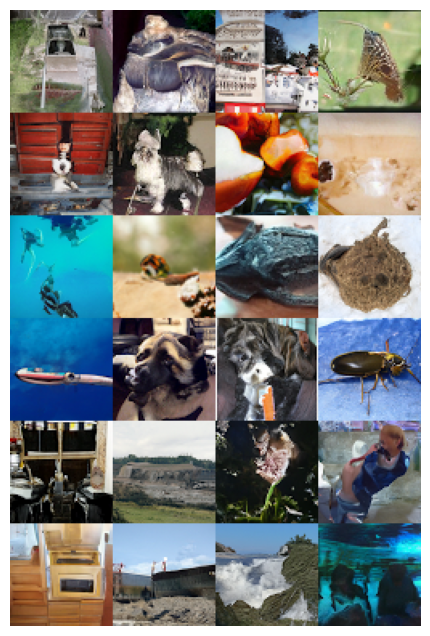} \\
    \midrule
    \multicolumn{3}{c}{\textbf{NFE = 15}} \\[2pt]
    \includegraphics[width=\linewidth]{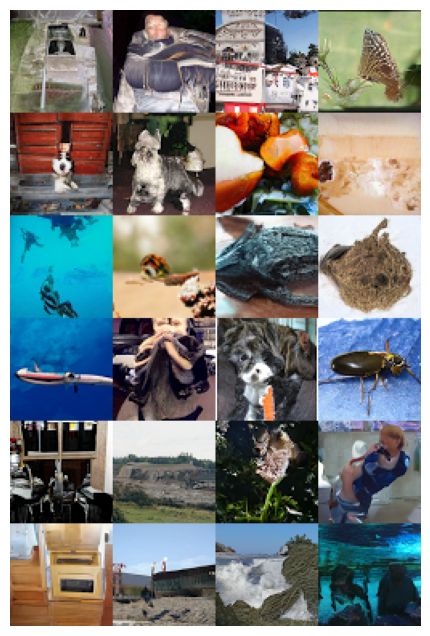} &
    \includegraphics[width=\linewidth]{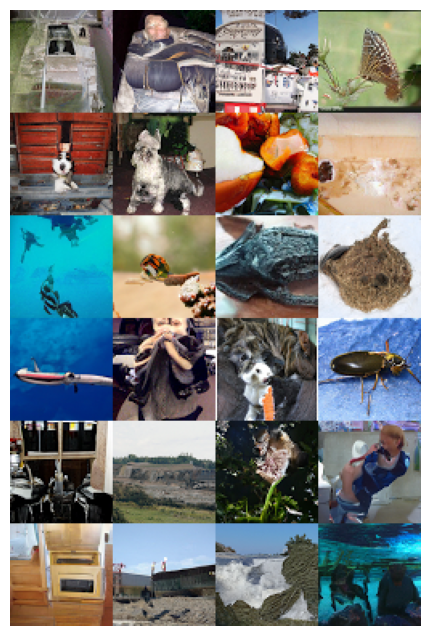} &
    \includegraphics[width=\linewidth]{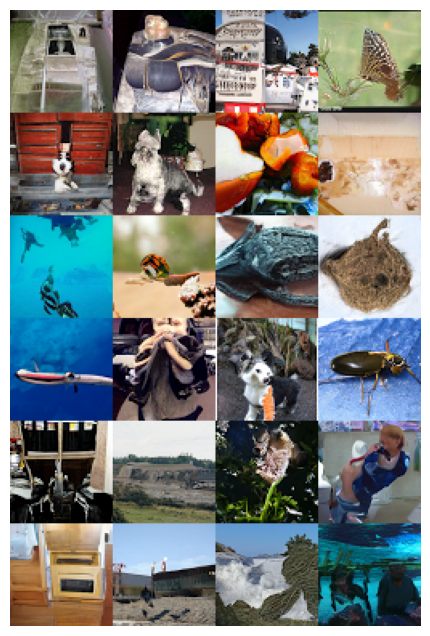} \\
    \bottomrule
  \end{tabular}
  \end{adjustbox}
\end{table*}

\subsection{Conditional Models}
\label{app:Additional Experiment Results - Conditional Models}

\subsubsection{ImageNet 128$\times$128 Experiment Results}
\label{app:imagenet128}

\paragraph{Environment Setup}
\begin{itemize}
    \item \textbf{GPUs}: 8-way NVIDIA RTX~4090
    \item \textbf{Model}: Guided-Diffusion~\cite{dhariwal2021diffusion},
    \url{https://github.com/openai/guided-diffusion}
    \item \textbf{Checkpoint}: \texttt{128x128\_classifier.pt, 128x128\_diffusion.pt}
\end{itemize}

\paragraph{Sampler Settings}
\begin{itemize}
    \item \textbf{UniPC}: employs $B_2(h)$, which yields the best results in conditional generation, as reported in \cite{zhao2023unipc}.
    \item \textbf{RBF-Solver}: The shape parameters are optimized by averaging the results of 20 runs, each performed on a batch of 16 images randomly drawn from a 128-image target set generated with UniPC NFE$=$200. The solver is also evaluated with order$=$3.
\end{itemize}

\paragraph{Experiment Results}
As shown in Table~\ref{tab:fid_main_imagenet128}, RBF-Solver with order$=$2 and order$=$3 consistently outperforms the competing solvers at every guidance scale. This advantage becomes especially pronounced at the higher scales of 6.0 and 8.0 and when the NFE is low. Furthermore, the order$=$3 variant yields slightly better performance than the order$=$2 variant overall.


\begin{table}[h!]
    \centering
    \caption{Sample quality measured by FID $\downarrow$ on ImageNet 128$\times$128 dataset (Guided-Diffusion), evaluated on 50k samples. Numbers in parentheses indicate the solver order.} 
    \label{tab:fid_main_imagenet128}
    \renewcommand{\arraystretch}{1.2}
    \setlength{\tabcolsep}{6pt}
    \begin{tabular}{llcccccccc}
        \toprule
        & & \multicolumn{8}{c}{\textbf{NFE}} \\
        \cmidrule(lr){3-10}
        & & \textbf{5} & \textbf{6} & \textbf{8} & \textbf{10} & \textbf{12} & \textbf{15} & \textbf{20} & \textbf{25} \\
        \midrule

        \multicolumn{10}{l}{\textbf{Guidance Scale = 2.0}} \\
        \addlinespace[0.5ex]
        & DPM-Solver++ (2)              & 15.32 & 10.60 &  7.16 &  6.01 &  5.41 &  4.93 &  4.50 &  4.30 \\
        & UniPC-$B_2(h)$ (2)                 & 13.43 &  9.19 &  6.50 &  5.69 &  5.23 &  4.85 & \textbf{4.48} &  4.29 \\
        \rowcolor{gray!20}
        & RBF-Solver (2)          & 12.83 &  8.80 &  6.34 &  5.61 &  5.22 &  4.85 &  4.49 & \textbf{4.28} \\
        \rowcolor{gray!20}
        & RBF-Solver (3)          & \textbf{12.19} & \textbf{8.15} & \textbf{5.98} & \textbf{5.51} & \textbf{5.16} & \textbf{4.84} &  4.51 &  4.30 \\
        \midrule

        \multicolumn{10}{l}{\textbf{Guidance Scale = 4.0}} \\
        \addlinespace[0.5ex]
        & DPM-Solver++ (2)            & 15.56 & 10.86 &  8.31 &  7.67 &  7.40 &  7.12 & \textbf{6.82} & \textbf{6.65} \\
        & UniPC-$B_2(h)$ (2)               & 15.10 &  9.98 &  7.81 &  7.46 &  7.28 &  7.13 &  6.89 &  6.72 \\
        \rowcolor{gray!20}
        & RBF-Solver (2)          & 14.24 &  9.66 &  7.71 &  7.40 &  7.20 &  7.11 &  6.86 &  6.72 \\
        \rowcolor{gray!20}
        & RBF-Solver (3)          & \textbf{13.77} & \textbf{9.52} & \textbf{7.42} & \textbf{7.11} & \textbf{7.01} & \textbf{6.96} &  6.83 &  6.70 \\
        \midrule

        \multicolumn{10}{l}{\textbf{Guidance Scale = 6.0}} \\
        \addlinespace[0.5ex]
        & DPM-Solver++ (2)            & 25.67 & 15.61 & 10.21 &  9.09 &  8.69 &  8.63 &  8.48 &  8.27 \\
        & UniPC-$B_2(h)$ (2)               & 29.12 & 17.39 & 10.69 &  9.14 &  8.59 &  8.51 &  8.51 &  8.39 \\
        \rowcolor{gray!20}
        & RBF-Solver (2)          & 19.93 & \textbf{12.28} & \textbf{9.36} &  8.68 &  8.58 &  8.48 &  8.36 &  8.29 \\
        \rowcolor{gray!20}
        & RBF-Solver (3)          & \textbf{19.46} & 12.61 &  9.75 & \textbf{8.60} & \textbf{8.33} & \textbf{8.27} & \textbf{8.21} & \textbf{8.18} \\
        \midrule

        \multicolumn{10}{l}{\textbf{Guidance Scale = 8.0}} \\
        \addlinespace[0.5ex]
        & DPM-Solver++ (2)            & 43.76 & 28.75 & 15.96 & 11.68 & 10.11 &  9.60 &  9.51 &  9.49 \\
        & UniPC-$B_2(h)$ (2)               & 50.63 & 34.56 & 19.94 & 14.11 & 11.10 &  9.72 &  9.46 &  9.46 \\
        \rowcolor{gray!20}
        & RBF-Solver (2)         & 30.55 & \textbf{17.87} & \textbf{11.70} & 10.35 &  9.64 &  9.43 &  9.37 &  9.40 \\
        \rowcolor{gray!20}
        & RBF-Solver (3)          & \textbf{30.24} & 18.41 & 13.11 & \textbf{10.22} & \textbf{9.44} & \textbf{9.20} & \textbf{9.22} & \textbf{9.29} \\
        \bottomrule
    \end{tabular}
\end{table}


\begin{table*}[p]
  \centering
  \caption{Qualitative comparison of various samplers on ImageNet 128$\times$128. Each image is randomly sampled with guidance scale 6.0 and solver order$=$2. Columns correspond to samplers; rows indicate NFE values of 5, 10, and 15.}
  \label{tab:imagenet128_scale4_samples}
  \renewcommand{\arraystretch}{1.02}
  \setlength{\tabcolsep}{2pt}

  \begin{adjustbox}{max width=\textwidth}
  \begin{tabular}{@{}C C C@{}}
    \toprule
    \textbf{DPM-Solver++} & \textbf{UniPC} & \textbf{RBF-Solver} \\
    \midrule
    \multicolumn{3}{c}{\textbf{NFE = 5}} \\[2pt]
    \includegraphics[width=\linewidth]{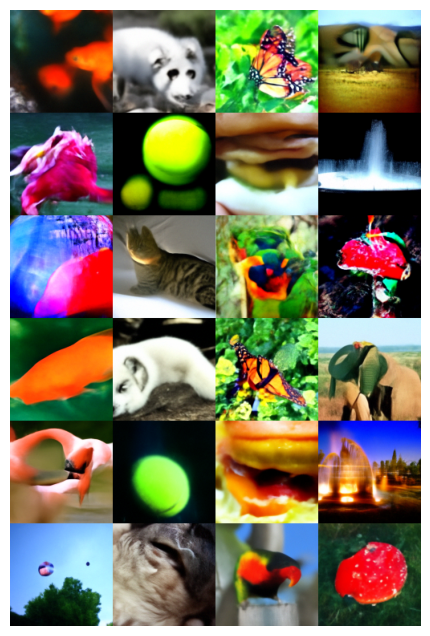} &
    \includegraphics[width=\linewidth]{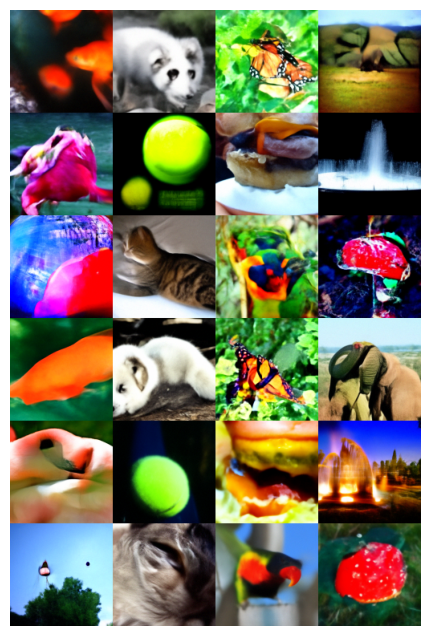} &
    \includegraphics[width=\linewidth]{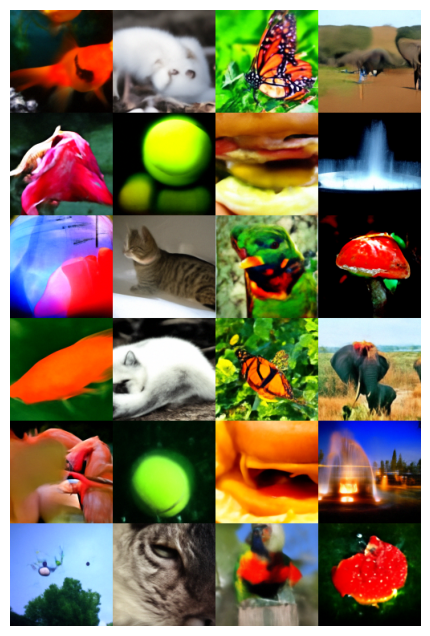} \\
    \midrule
    \multicolumn{3}{c}{\textbf{NFE = 10}} \\[2pt]
    \includegraphics[width=\linewidth]{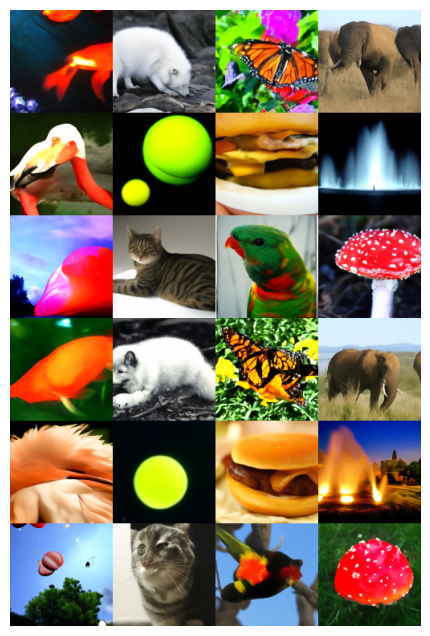} &
    \includegraphics[width=\linewidth]{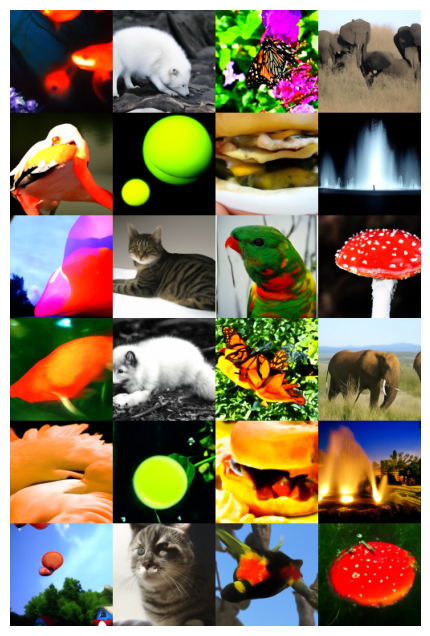} &
    \includegraphics[width=\linewidth]{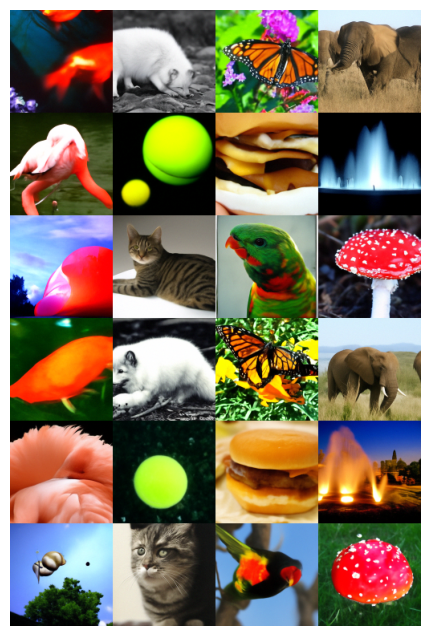} \\
    \midrule
    \multicolumn{3}{c}{\textbf{NFE = 15}} \\[2pt]
    \includegraphics[width=\linewidth]{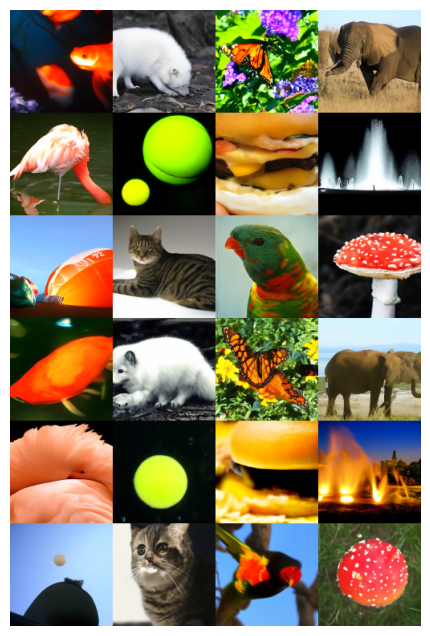} &
    \includegraphics[width=\linewidth]{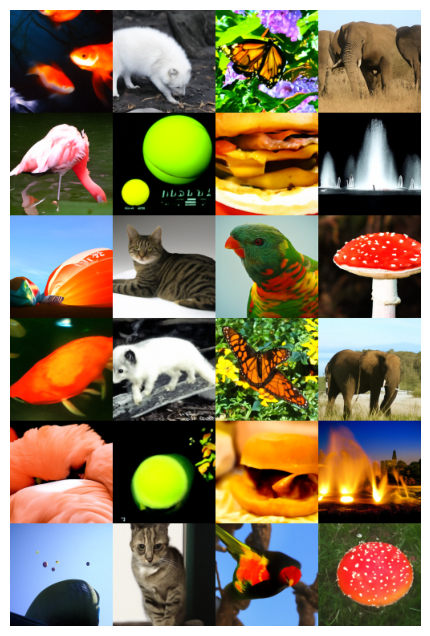} &
    \includegraphics[width=\linewidth]{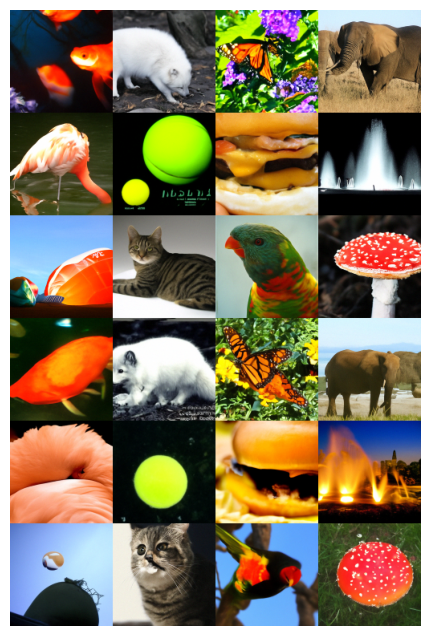} \\
    \bottomrule
  \end{tabular}
  \end{adjustbox}
\end{table*}

\subsubsection{ImageNet 256$\times$256 Experiment Results}
\label{app:imagenet256}


\paragraph{Environment Setup}
\begin{itemize}
    \item \textbf{GPUs}: 8-way NVIDIA RTX~4090
    \item \textbf{Model}: Guided-Diffusion~\cite{dhariwal2021diffusion},
    \url{https://github.com/openai/guided-diffusion}
    \item \textbf{Checkpoint}: \texttt{256x256\_classifier.pt, 256x256\_diffusion.pt}
\end{itemize}

\paragraph{Sampler Settings}
\begin{itemize}
    \item \textbf{UniPC}: employs $B_2(h)$.
    \item \textbf{RBF-Solver}: The shape parameters are optimized by averaging the results of 20 runs, each performed on a batch of 16 images randomly drawn from a 128 image-target set generated with UniPC NFE$=$200. The solver is also evaluated with order$=$3.
\end{itemize}

\paragraph{Experiment Results}
As shown in Table~\ref{tab:fid_main_imagenet256}, RBF-Solver attains performance similar to the competing samplers at guidance scales 2.0 and 4.0, while clearly outperforming them at the higher scales 6.0 and 8.0.  
This advantage becomes more pronounced as the NFE decreases.  
Overall, the order$=$4 variant of RBF-Solver yields better results than its order$=$3 variant.


\begin{table}[h]
    \centering
    \caption{Sample quality measured by FID $\downarrow$ on ImageNet 256$\times$256 dataset (Guided-Diffusion), evaluated on 10k samples. Numbers in parentheses indicate the solver order.} 
    \label{tab:fid_main_imagenet256}
    \renewcommand{\arraystretch}{1.2}
    \setlength{\tabcolsep}{6pt}
    \begin{tabular}{llccccccccc}
        \toprule
        & & \multicolumn{9}{c}{\textbf{NFE}} \\
        \cmidrule(lr){3-11}
        & & \textbf{5} & \textbf{6} & \textbf{8} & \textbf{10} & \textbf{12} & \textbf{15} & \textbf{20} & \textbf{25} & \textbf{30} \\
        \midrule

        \multicolumn{11}{l}{\textbf{Guidance Scale = 2.0}} \\
        \addlinespace[0.5ex]
        & DPM-Solver++ (2)      & 16.27 & 12.59 &  9.62 &  8.54 &  8.11 &  7.68 &  7.44 &  7.33 &  7.27 \\
        & UniPC-$B_2(h)$ (2)     & 15.00 & 11.53 &  9.05 & \textbf{8.20} & \textbf{7.83} & \textbf{7.62} & \textbf{7.36} & \textbf{7.23} & \textbf{7.23} \\
        \rowcolor{gray!20}
        & RBF-Solver (2)     & 15.05 & 11.45 &  9.11 &  8.28 &  7.87 &  7.71 &  7.43 &  7.31 &  7.28 \\
        \rowcolor{gray!20}
        & RBF-Solver (3)     & \textbf{14.77} & \textbf{11.11} & \textbf{9.00} &  8.40 &  8.05 &  7.78 &  7.45 &  7.31 &  7.28 \\
        \midrule

        \multicolumn{11}{l}{\textbf{Guidance Scale = 4.0}} \\
        \addlinespace[0.5ex]
        & DPM-Solver++ (2)      & 18.73 & 13.28 &  9.92 &  9.02 &  8.57 &  8.35 &  8.10 & \textbf{8.01} & \textbf{7.95} \\
        & UniPC-$B_2(h)$ (2)     & 18.41 & 12.92 &  9.63 &  8.85 & \textbf{8.50} &  8.34 & \textbf{8.04} &  8.03 &  7.97 \\
        \rowcolor{gray!20}
        & RBF-Solver (2)    & \textbf{17.76} & 12.47 &  9.55 &  8.95 &  8.58 &  8.35 &  8.08 & \textbf{8.01} &  7.99 \\
        \rowcolor{gray!20}
        & RBF-Solver (3)    & 18.33 & \textbf{12.03} & \textbf{9.22} & \textbf{8.83} &  8.52 & \textbf{8.26} &  8.11 &  8.02 &  7.97 \\
        \midrule

        \multicolumn{11}{l}{\textbf{Guidance Scale = 6.0}} \\
        \addlinespace[0.5ex]
        & DPM-Solver++ (2)      & 31.46 & 20.86 & 12.36 & 10.07 &  9.34 &  9.04 &  8.95 &  8.89 &  8.82 \\
        & UniPC-$B_2(h)$ (2)    & 34.05 & 22.93 & 13.71 & 10.53 &  9.74 &  9.12 &  9.02 &  8.85 &  8.82 \\
        \rowcolor{gray!20}
        & RBF-Solver (2)     & 26.78 & 16.21 & 11.00 &  9.74 &  9.39 &  8.97 &  8.95 & \textbf{8.79} &  8.79 \\
        \rowcolor{gray!20}
        & RBF-Solver (3)     & \textbf{24.91} & \textbf{15.48} & \textbf{10.72} & \textbf{9.65} & \textbf{9.31} & \textbf{8.91} & \textbf{8.79} &  8.80 & \textbf{8.75} \\
        \midrule

        \multicolumn{11}{l}{\textbf{Guidance Scale = 8.0}} \\
        \addlinespace[0.5ex]
        & DPM-Solver++ (2)      & 50.80 & 37.41 & 20.50 & 13.44 & 10.98 &  9.85 &  9.61 &  9.54 &  9.48 \\
        & UniPC-$B_2(h)$ (2)    & 55.80 & 43.31 & 26.09 & 17.20 & 12.71 & 10.35 &  9.57 &  9.53 &  9.53 \\
        \rowcolor{gray!20}
        & RBF-Solver (2)     & 42.61 & \textbf{24.79} & \textbf{13.79} & 11.25 & 10.30 &  9.69 &  9.46 & \textbf{9.42} &  9.48 \\
        \rowcolor{gray!20}
        & RBF-Solver (3)     & \textbf{38.81} & \textbf{24.79} & 14.04 & \textbf{10.98} & \textbf{10.21} & \textbf{9.63} & \textbf{9.44} &  9.47 & \textbf{9.41} \\
        \bottomrule
    \end{tabular}
\end{table}


\begin{table*}[p]
  \centering
  \caption{Qualitative comparison of different samplers on ImageNet 256$\times$256. Each image is randomly sampled with guidance scale set to 8.0 and solver order$=$2. Columns correspond to samplers; rows indicate NFE values of 5, 10, and 15.}
  \label{tab:imagenet256_scale4_samples}
  \renewcommand{\arraystretch}{1.02}
  \setlength{\tabcolsep}{2pt}

  \begin{adjustbox}{max width=\textwidth}
  \begin{tabular}{@{}C C C@{}}
    \toprule
    \textbf{DPM-Solver++} & \textbf{UniPC} & \textbf{RBF-Solver} \\
    \midrule
    \multicolumn{3}{c}{\textbf{NFE = 5}} \\[2pt]
    \includegraphics[width=\linewidth]{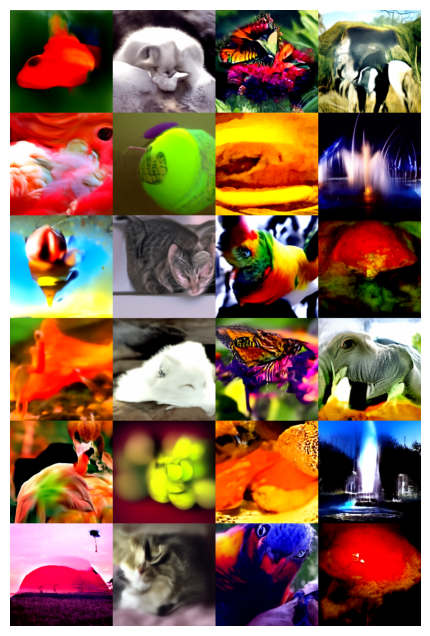} &
    \includegraphics[width=\linewidth]{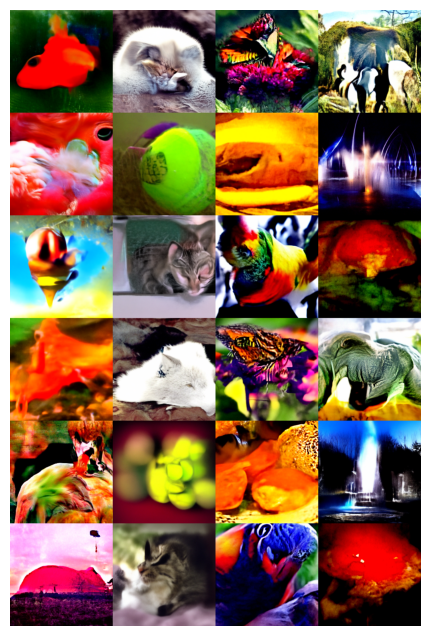} &
    \includegraphics[width=\linewidth]{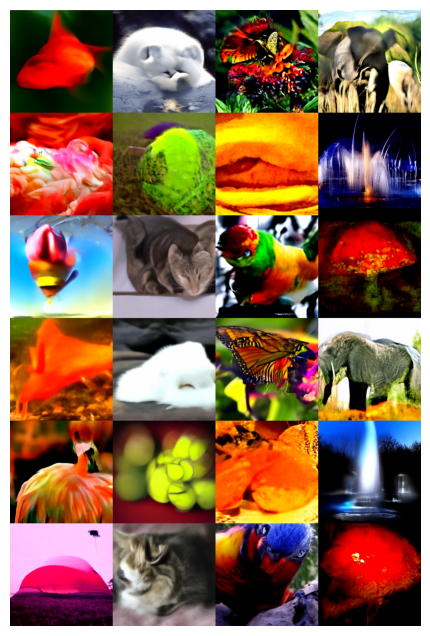} \\
    \midrule
    \multicolumn{3}{c}{\textbf{NFE = 10}} \\[2pt]
    \includegraphics[width=\linewidth]{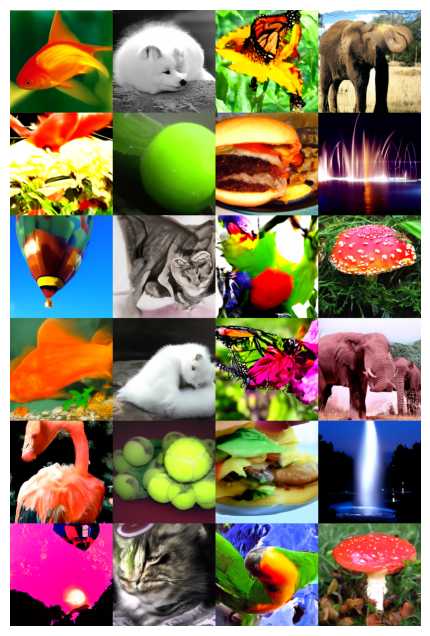} &
    \includegraphics[width=\linewidth]{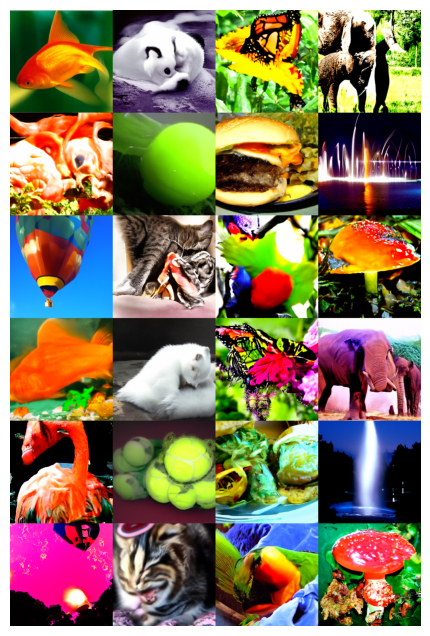} &
    \includegraphics[width=\linewidth]{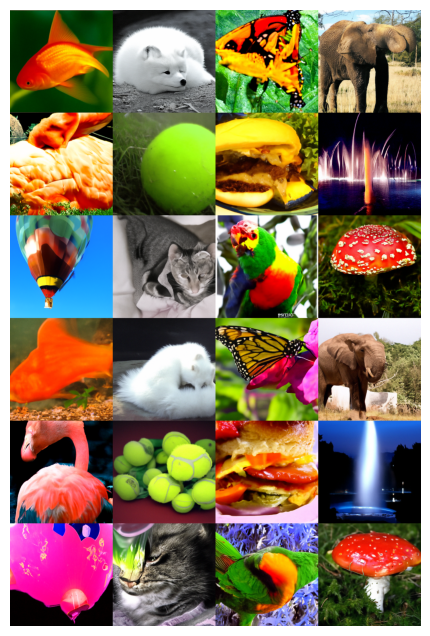} \\
    \midrule
    \multicolumn{3}{c}{\textbf{NFE = 15}} \\[2pt]
    \includegraphics[width=\linewidth]{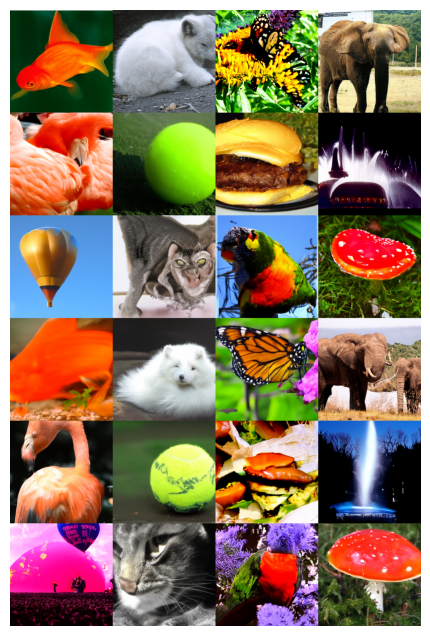} &
    \includegraphics[width=\linewidth]{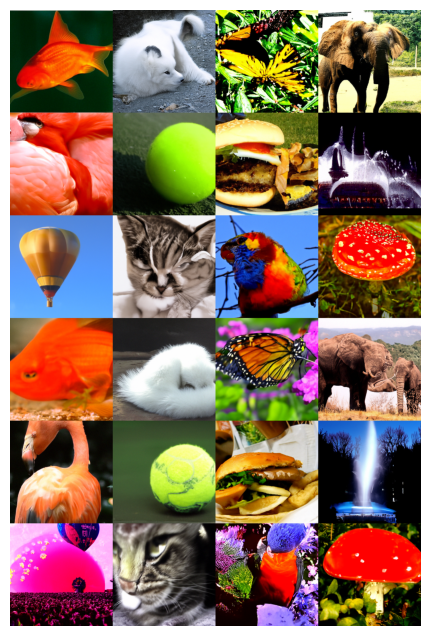} &
    \includegraphics[width=\linewidth]{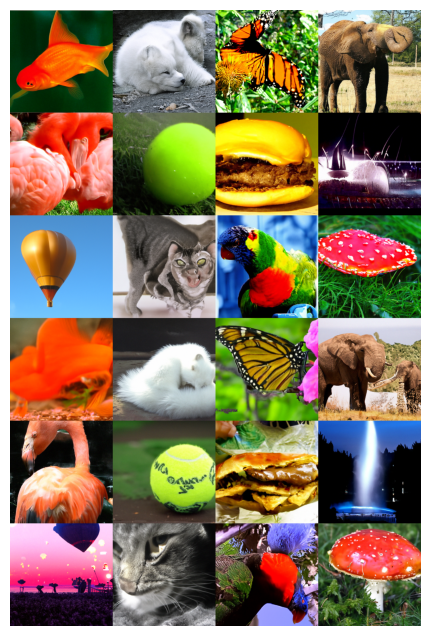} \\
    \bottomrule
  \end{tabular}
  \end{adjustbox}
\end{table*}

\subsubsection{Stable-Diffusion v1.4 Experiment Results}
\label{app:stable_diffusion}

\paragraph{Environment Setup}
\begin{itemize}
    \item \textbf{GPU}: Single NVIDIA RTX~4090
    \item \textbf{Model}: Stable-Diffusion~\cite{rombach2022high},
    \url{https://github.com/CompVis/stable-diffusion}
    \item \textbf{Checkpoint}: \texttt{sd-v1-4.ckpt}
\end{itemize}

\paragraph{Sampler Settings}
\begin{itemize}
    \item \textbf{UniPC}: employs $B_2(h)$.
    \item \textbf{RBF-Solver}: The shape parameters are optimized by averaging the results of 20 runs, each performed on a batch of 6 images randomly drawn from a 128 image target set generated with UniPC at NFE$=$200. The solver is also evaluated with order=3.
\end{itemize}

\paragraph{Experiment Results}
Table \ref{tab:main_stable} reports both the RMSE loss and the CLIP~\cite{radford2021learning} cosine similarity.
For RMSE, we regard the images sampled with UniPC at NFE=200 as references and compute the root-mean-square error against them.
For the CLIP metric, we input the reference images and the generated samples into CLIP ViT-B/32 and take the cosine of their image-embedding vectors.
RBF-Solver outperforms the competing samplers in RMSE at guidance scales 1.5 and 9.5, and performs comparably to them at the intermediate scales.
With respect to CLIP cosine similarity, however, RBF-Solver delivers consistently superior results across all guidance scales.

\begin{table*}[t]
    \centering
    \caption{CLIP embedding cosine similarity (↑) and RMSE loss (↓) with Stable Diffusion v1.4 for guidance scales 1.5, 3.5, 5.5, 7.5 and 9.5.
             Bold numbers mark the best value per NFE in each metric.}
    \label{tab:main_stable}
    \renewcommand{\arraystretch}{0.92}
    \setlength{\tabcolsep}{6pt}

  \begin{adjustbox}{max height=\textheight}
      \begin{minipage}{\textwidth}
      \centering

    \ContinuedFloat
        \caption*{(a) CLIP cosine similarity (↑).}

        \begin{tabular}{llccccccc}
            \toprule
            & & \multicolumn{7}{c}{\textbf{NFE}} \\
            \cmidrule(lr){3-9}
            & & \textbf{5} & \textbf{6} & \textbf{8} & \textbf{10} & \textbf{12} & \textbf{15} & \textbf{20} \\
            \midrule
            \multicolumn{9}{l}{\textbf{Guidance Scale = 1.5}}\\
            \addlinespace[0.4ex]
            & DPM-Solver++        & 0.8579 & 0.8906 & 0.9263 & 0.9453 & 0.9570 & 0.9688 & 0.9800 \\
            & UniPC-$B_2(h)$      & 0.8789 & 0.9092 & 0.9404 & 0.9575 & 0.9688 & 0.9790 & 0.9878 \\
            \rowcolor{gray!20}
            & RBF-Solver $p{=}2$  & 0.8887 & 0.9165 & 0.9453 & 0.9614 & 0.9717 & 0.9810 & 0.9888 \\
            \rowcolor{gray!20}
            & RBF-Solver $p{=}3$  & \textbf{0.8926} & \textbf{0.9199} & \textbf{0.9468} & \textbf{0.9629} & \textbf{0.9731} & \textbf{0.9814} & \textbf{0.9893} \\
            \midrule
            \multicolumn{9}{l}{\textbf{Guidance Scale = 3.5}}\\
            \addlinespace[0.4ex]
            & DPM-Solver++        & 0.8774 & 0.8989 & 0.9233 & 0.9380 & 0.9478 & 0.9585 & 0.9702 \\
            & UniPC-$B_2(h)$      & 0.8877 & 0.9067 & 0.9287 & 0.9434 & 0.9536 & 0.9648 & 0.9771 \\
            \rowcolor{gray!20}
            & RBF-Solver $p{=}2$  & 0.8916 & 0.9097 & \textbf{0.9302} & \textbf{0.9448} & \textbf{0.9551} & \textbf{0.9663} & \textbf{0.9785} \\
            \rowcolor{gray!20}
            & RBF-Solver $p{=}3$  & \textbf{0.8931} & \textbf{0.9106} & \textbf{0.9302} & 0.9443 & 0.9546 & \textbf{0.9663} & 0.9780 \\
            \midrule
            \multicolumn{9}{l}{\textbf{Guidance Scale = 5.5}}\\
            \addlinespace[0.4ex]
            & DPM-Solver++        & 0.8794 & 0.8960 & 0.9170 & 0.9297 & 0.9399 & 0.9502 & 0.9619 \\
            & UniPC-$B_2(h)$      & 0.8843 & 0.8994 & 0.9189 & 0.9316 & 0.9419 & 0.9526 & 0.9658 \\
            \rowcolor{gray!20}
            & RBF-Solver $p{=}2$  & \textbf{0.8853} & \textbf{0.9004} & \textbf{0.9194} & \textbf{0.9321} & \textbf{0.9424} & \textbf{0.9531} & \textbf{0.9663} \\
            \rowcolor{gray!20}
            & RBF-Solver $p{=}3$  & 0.8843 & 0.8999 & 0.9189 & 0.9316 & 0.9409 & 0.9521 & 0.9648 \\
            \midrule
            \multicolumn{9}{l}{\textbf{Guidance Scale = 7.5}}\\
            \addlinespace[0.4ex]
            & DPM-Solver++        & 0.8735 & 0.8901 & 0.9097 & 0.9224 & 0.9316 & 0.9424 & 0.9546 \\
            & UniPC-$B_2(h)$      & 0.8730 & 0.8896 & 0.9082 & 0.9214 & 0.9312 & 0.9424 & \textbf{0.9561} \\
            \rowcolor{gray!20}
            & RBF-Solver $p{=}2$  & \textbf{0.8760} & \textbf{0.8931} & \textbf{0.9106} & \textbf{0.9233} & \textbf{0.9321} & \textbf{0.9429} & \textbf{0.9561} \\
            \rowcolor{gray!20}
            & RBF-Solver $p{=}3$  & 0.8735 & 0.8906 & 0.9097 & 0.9224 & 0.9316 & 0.9424 & 0.9551 \\
            \midrule
            \multicolumn{9}{l}{\textbf{Guidance Scale = 9.5}}\\
            \addlinespace[0.4ex]
            & DPM-Solver++        & 0.8594 & 0.8789 & 0.9014 & 0.9146 & 0.9233 & \textbf{0.9346} & \textbf{0.9468} \\
            & UniPC-$B_2(h)$      & 0.8521 & 0.8721 & 0.8965 & 0.9111 & 0.9209 & 0.9326 & 0.9463 \\
            \rowcolor{gray!20}
            & RBF-Solver $p{=}2$  & 0.8613 & \textbf{0.8828} & \textbf{0.9033} & \textbf{0.9150} & \textbf{0.9238} & \textbf{0.9346} & \textbf{0.9468} \\
            \rowcolor{gray!20}
            & RBF-Solver $p{=}3$  & \textbf{0.8618} & 0.8809 & 0.9019 & 0.9146 & \textbf{0.9238} & 0.9341 & \textbf{0.9468} \\
            \bottomrule
        \end{tabular}

        \vspace{1em} 

      
       \ContinuedFloat
        \caption*{(b) RMSE loss (↓).}
    
        \begin{tabular}{llccccccc}
            \toprule
            & & \multicolumn{7}{c}{\textbf{NFE}} \\
            \cmidrule(lr){3-9}
            & & \textbf{5} & \textbf{6} & \textbf{8} & \textbf{10} & \textbf{12} & \textbf{15} & \textbf{20} \\
            \midrule
            \multicolumn{9}{l}{\textbf{Guidance Scale = 1.5}}\\
            \addlinespace[0.4ex]
            & DPM-Solver++        & 0.2605 & 0.2252 & 0.1801 & 0.1505 & 0.1285 & 0.1044 & 0.0780 \\
            & UniPC-$B_2(h)$      & 0.2363 & 0.2030 & 0.1589 & 0.1278 & 0.1045 & 0.0799 & 0.0549 \\
            \rowcolor{gray!20}
            & RBF-Solver $p{=}2$  & 0.2240 & 0.1926 & 0.1506 & 0.1203 & 0.0976 & 0.0741 & 0.0521 \\
            \rowcolor{gray!20}
            & RBF-Solver $p{=}3$  & \textbf{0.2202} & \textbf{0.1905} & \textbf{0.1500} & \textbf{0.1174} & \textbf{0.0932} & \textbf{0.0720} & \textbf{0.0503} \\
            \midrule
            \multicolumn{9}{l}{\textbf{Guidance Scale = 3.5}}\\
            \addlinespace[0.4ex]
            & DPM-Solver++        & 0.3551 & 0.3287 & 0.2887 & 0.2567 & 0.2304 & 0.1986 & 0.1594 \\
            & UniPC-$B_2(h)$      & 0.3490 & 0.3241 & \textbf{0.2823} & \textbf{0.2460} & 0.2153 & 0.1780 & 0.1327 \\
            \rowcolor{gray!20}
            & RBF-Solver $p{=}2$  & \textbf{0.3474} & \textbf{0.3239} & 0.2829 & 0.2462 & \textbf{0.2145} & \textbf{0.1756} & \textbf{0.1284} \\
            \rowcolor{gray!20}
            & RBF-Solver $p{=}3$  & 0.3493 & 0.3276 & 0.2863 & 0.2488 & 0.2167 & 0.1769 & 0.1309 \\
            \midrule
            \multicolumn{9}{l}{\textbf{Guidance Scale = 5.5}}\\
            \addlinespace[0.4ex]
            & DPM-Solver++        & \textbf{0.4608} & \textbf{0.4363} & \textbf{0.3949} & \textbf{0.3600} & 0.3300 & 0.2931 & 0.2436 \\
            & UniPC-$B_2(h)$      & 0.4637 & 0.4405 & 0.3978 & 0.3601 & \textbf{0.3267} & \textbf{0.2847} & \textbf{0.2259} \\
            \rowcolor{gray!20}
            & RBF-Solver $p{=}2$  & 0.4659 & 0.4437 & 0.4018 & 0.3639 & 0.3302 & 0.2871 & 0.2267 \\
            \rowcolor{gray!20}
            & RBF-Solver $p{=}3$  & 0.4647 & 0.4411 & 0.3989 & 0.3631 & 0.3315 & 0.2896 & 0.2333 \\
            \midrule
            \multicolumn{9}{l}{\textbf{Guidance Scale = 7.5}}\\
            \addlinespace[0.4ex]
            & DPM-Solver++        & 0.5686 & 0.5410 & 0.4963 & \textbf{0.4576} & \textbf{0.4255} & \textbf{0.3841} & 0.3276 \\
            & UniPC-$B_2(h)$      & 0.5783 & 0.5517 & 0.5073 & 0.4671 & 0.4321 & 0.3874 & \textbf{0.3230} \\
            \rowcolor{gray!20}
            & RBF-Solver $p{=}2$  & 0.5664 & 0.5407 & 0.5002 & 0.4624 & 0.4310 & 0.3878 & 0.3257 \\
            \rowcolor{gray!20}
            & RBF-Solver $p{=}3$  & \textbf{0.5655} & \textbf{0.5373} & \textbf{0.4956} & 0.4592 & 0.4265 & 0.3869 & 0.3273 \\
            \midrule
            \multicolumn{9}{l}{\textbf{Guidance Scale = 9.5}}\\
            \addlinespace[0.4ex]
            & DPM-Solver++        & 0.6793 & 0.6471 & 0.5950 & 0.5525 & 0.5169 & \textbf{0.4717} & \textbf{0.4108} \\
            & UniPC-$B_2(h)$      & 0.6955 & 0.6659 & 0.6160 & 0.5715 & 0.5337 & 0.4850 & 0.4183 \\
            \rowcolor{gray!20}
            & RBF-Solver $p{=}2$  & 0.6666 & 0.6347 & 0.5859 & 0.5487 & 0.5193 & 0.4757 & 0.4152 \\
            \rowcolor{gray!20}
            & RBF-Solver $p{=}3$  & \textbf{0.6661} & \textbf{0.6327} & \textbf{0.5842} & \textbf{0.5462} & \textbf{0.5147} & 0.4724 & 0.4142 \\
            \bottomrule
        \end{tabular}

     \end{minipage}
    \end{adjustbox}

\end{table*}

\begin{table*}[t]
  \centering
  \caption{Qualitative comparison of different samplers on four shared generation scripts extracted from MS-COCO 2014. Each cell displays a 512×512 image generated using Stable Diffusion v1.4, with classifier-free guidance scale set to 7.5 and NFE$=$5.}
  \label{tab:qualitative_comparison}

  \adjustbox{max width=\textwidth}{
  \renewcommand{\arraystretch}{0.97} 
  \setlength{\tabcolsep}{2pt}

  \begin{tabular}{ccc}
    \toprule
    \textbf{DPM\textminus Solver++} &
    \textbf{UniPC} &
    \textbf{RBF\textminus Solver} \\
    \midrule
    \multicolumn{3}{c}{\textbf{“A baby laying on its back holding a toothbrush.”}}\\[2pt] 
    \includegraphics[width=.31\textwidth]{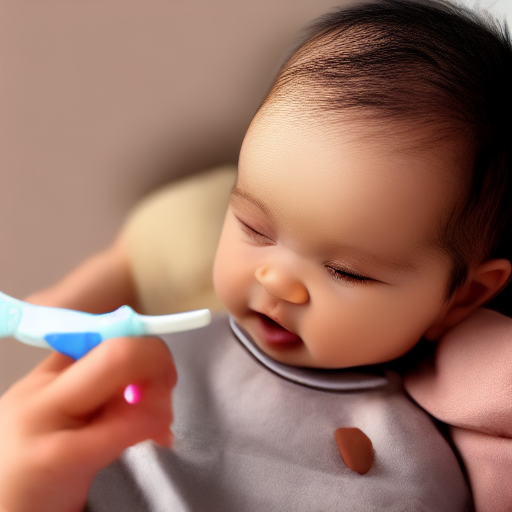} &
    \includegraphics[width=.31\textwidth]{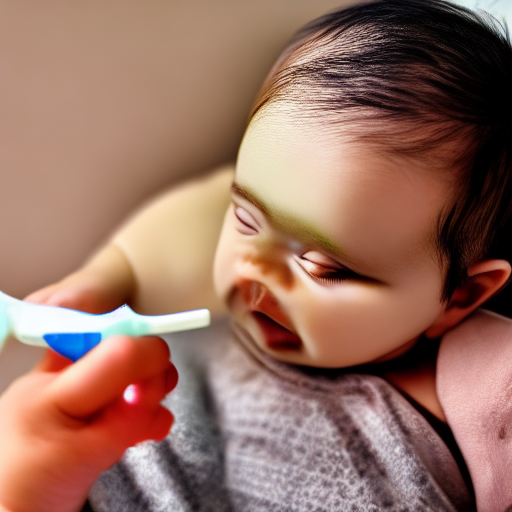} &
    \includegraphics[width=.31\textwidth]{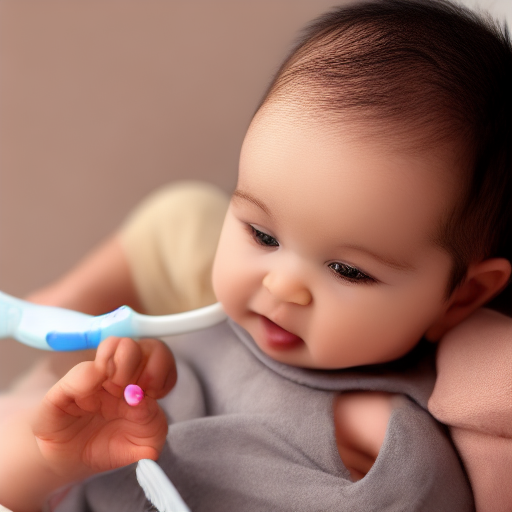} \\[3pt]  

    \multicolumn{3}{c}{\textbf{“A stuffed bear that is under the blanket.”}}\\[2pt]
    \includegraphics[width=.31\textwidth]{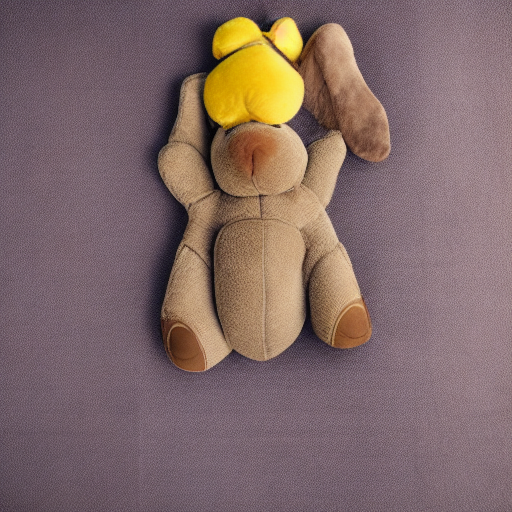} &
    \includegraphics[width=.31\textwidth]{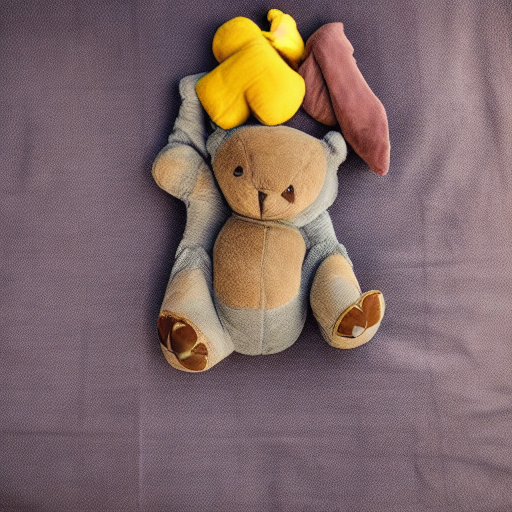} &
    \includegraphics[width=.31\textwidth]{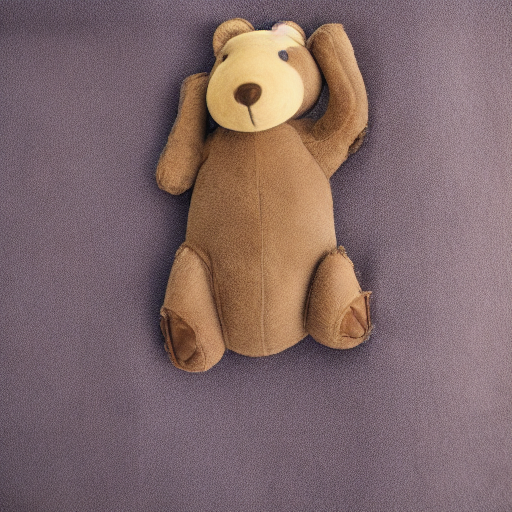} \\[3pt]

    \multicolumn{3}{c}{\textbf{“A small gold and black clock tower in a village.”}}\\[2pt]
    \includegraphics[width=.31\textwidth]{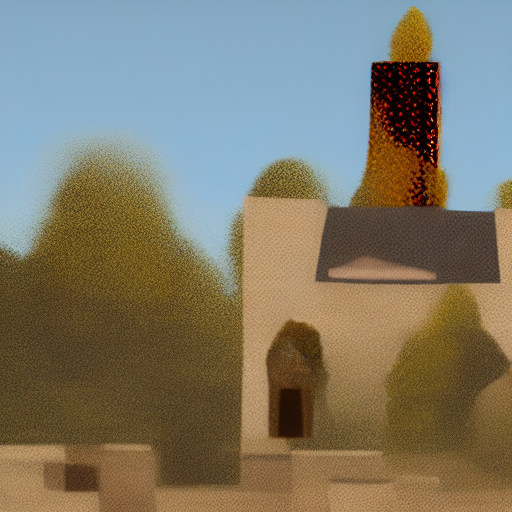} &
    \includegraphics[width=.31\textwidth]{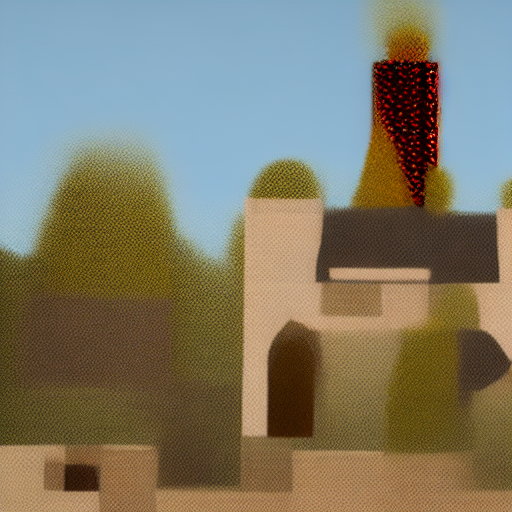} &
    \includegraphics[width=.31\textwidth]{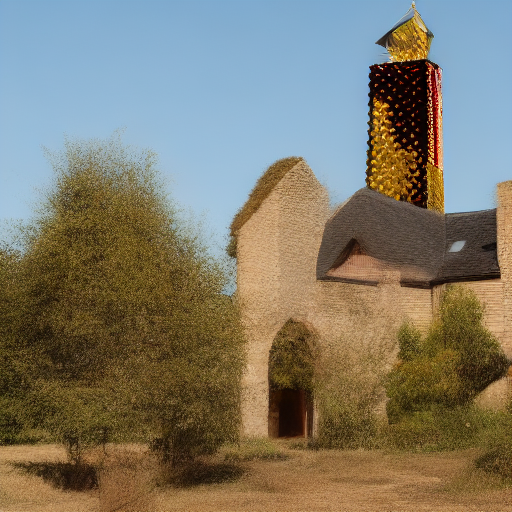} \\[3pt]

    \multicolumn{3}{c}{\textbf{“this is an image of a yorkie in a small bag.”}}\\[2pt]
    \includegraphics[width=.31\textwidth]{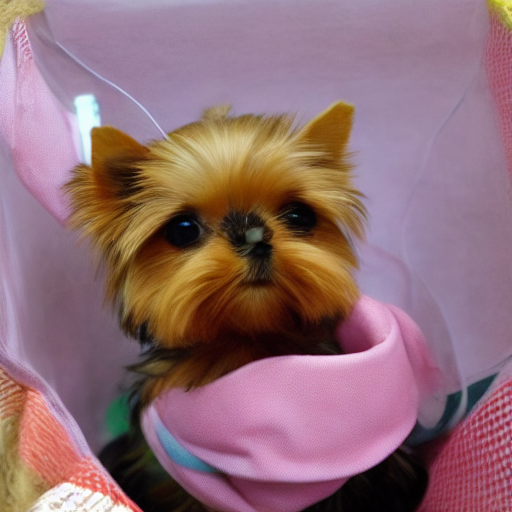} &
    \includegraphics[width=.31\textwidth]{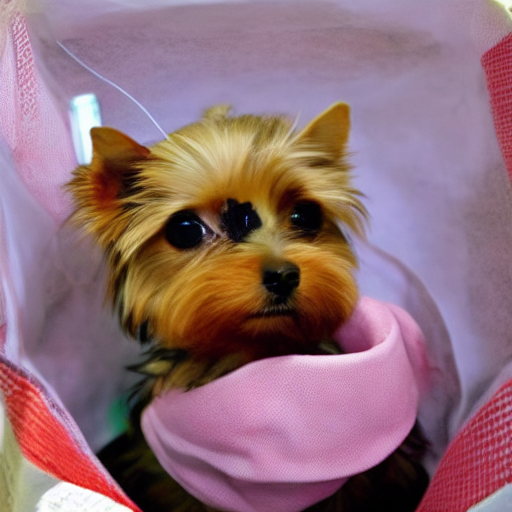} &
    \includegraphics[width=.31\textwidth]{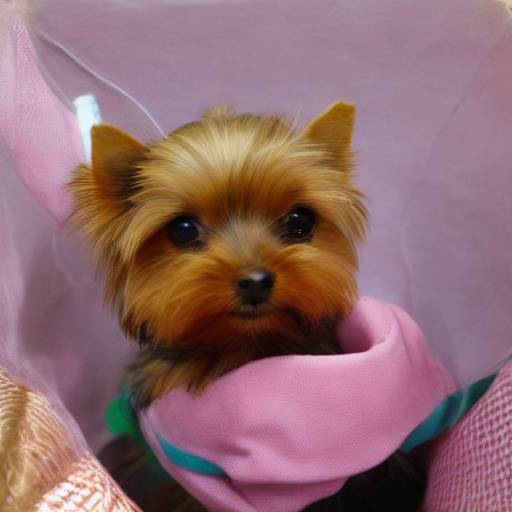} \\
    \bottomrule
  \end{tabular}}
\end{table*}

\clearpage
\section{Experiment Results for Auxiliary-Tuning Samplers}
\label{app:Experiment Results for Auxiliary-Tuning Samplers}
This section defines the class of auxiliary-tuning samplers—including our proposed RBF-Solver—compares these samplers with each other, and shows the experimental overview in Table~\ref{tab:summary_aux-tuning}.

\begin{table}[h]
  \setlength{\tabcolsep}{3pt}   
  \renewcommand{\arraystretch}{1.15}
  \centering
  \resizebox{\linewidth}{!}{%
  \begin{tabular}{llll}
    \toprule
    Dataset & Model & Compared Samplers & Section \\
    \midrule
    \multicolumn{4}{l}{\textbf{Unconditional}} \\[2pt]
    CIFAR-10 32$\times$32~\cite{Krizhevsky09learningmultiple} & Score-SDE~\cite{song2021score} & DPM-Solver-v3~\cite{dpm_solver_v3}, DC-Solver~\cite{dc_solver} & ~\ref{app:cifar10_scoresde_cali} \\
    CIFAR-10 32$\times$32~\cite{Krizhevsky09learningmultiple} & EDM~\cite{karras2022elucidating} & DPM-Solver-v3~\cite{dpm_solver_v3}, DC-Solver~\cite{dc_solver} & ~\ref{app:cifar10_edm_cali} \\
    CIFAR-10 32$\times$32~\cite{Krizhevsky09learningmultiple} & EDM~\cite{karras2022elucidating} & AMED-Solver, AMED-Plugin~\cite{amed_solver} & ~\ref{app:cifar10_edm_amed} \\
    FFHQ 64$\times$64~\cite{karras2019style} & EDM~\cite{karras2022elucidating} & AMED-Solver, AMED-Plugin~\cite{amed_solver} & ~\ref{app:ffhq_edm_amed} \\
    \midrule
    \multicolumn{4}{l}{\textbf{Conditional}} \\[2pt]
    ImageNet 256$\times$256~\cite{deng2009imagenet} & Guided-Diffusion~\cite{dhariwal2021diffusion} & DPM-Solver-v3~\cite{dpm_solver_v3}, DC-Solver~\cite{dc_solver} & ~\ref{app:imagenet256_cali} \\
    LAION-5B~\cite{schuhmann2022laion} & Stable-Diffusion v1.4~\cite{rombach2022high} & DPM-Solver-v3~\cite{dpm_solver_v3}, DC-Solver~\cite{dc_solver} & ~\ref{app:stable_cali} \\
    \bottomrule
  \end{tabular}
  }
  \caption{Summary of experimental settings for auxiliary-tuning samplers.}
  \label{tab:summary_aux-tuning}
\end{table}

\subsection{Comparison of Auxiliary-Tuning Samplers}
\label{app:Comparison of Auxiliary-Tuning Samplers}

Training-Free samplers~\citep{song2021score, lu2022dpm, lu2022dpmpp, zhao2023unipc, xue2024sa} treat diffusion sampling as solving the reverse diffusion ODE/SDE and therefore do not need any re-training of the neural network. Solvers based on numerical methods solve the probability–flow ODE~\cite{song2021score} with analytically derived high-order formulas and have global hyper-parameters, such as
the number of function evaluations~(NFE) and the time–step schedule.
The EDM~\cite{karras2022elucidating} further standardized the sampling framework and showed, through extensive hyper-parameter tuning,
that even these solvers with relatively low NFE can rival long-run stochastic samplers~\cite{ho2020denoising, song2021score} when their
global parameters are carefully tuned.

\begin{table}[b]
  \centering
  \setlength{\tabcolsep}{3pt} 
  \caption{Comparison of solvers that employ auxiliary-parameter tuning.
         \emph{Param.\ Count} is the total number of auxiliary-tuning parameters stored after the offline step; \emph{Dataset Size} is the amount of data each sampler uses for tuning; \emph{Additional Network} indicates whether a separate network is trained.}
  \label{tab:aux_tuning_sampler_comparison}
  \renewcommand{\arraystretch}{1.23}
  \begin{tabular}{lccccc}
    \toprule
    \textbf{Method}
      & \makecell{\textbf{Parameter} \\ \textbf{Type}}
      & \makecell{\textbf{Param.} \\ \textbf{Count}} 
      & \makecell{\textbf{Dataset} \\ \textbf{Size}} 
      & \makecell{\textbf{Offline} \\ \textbf{Tuning Time\textsuperscript{*}}}
      & \makecell{\textbf{Additional} \\ \textbf{Network}} \\ \midrule
    \makecell[l]{\textbf{DPM‐Solver-v3} \\ \cite{dpm_solver_v3}}
      & Empirical Model Stats.
      & 1.1M–295M
      & 1k–4k
      & $<$\,11 h / 8 GPUs
      & \xmark \\

    \makecell[l]{\textbf{AMED‐Solver} \\ \cite{amed_solver}}
      & Neural-Net Weights
      & $\approx$9k
      & 10k
      & $<$\,3 h / 1 GPU
      & \cmark \\

    \makecell[l]{\textbf{DC‐Solver} \\ \cite{dc_solver}}
      & Compensation Ratios
      & $\mathrm{NFE}$
      & 10 
      & $<$\,5 min / 1 GPU
      & \xmark \\
    \rowcolor{gray!20}
    \makecell[l]
    {\textbf{RBF‐Solver  \,\,\,\,\,\,\,\,}\\ \textbf{(Ours)}}
      & Shape Parameters
      & $2\,(\mathrm{NFE}\!-\!1)$ 
      & 128 
      & 1 min - 1 h / 1 GPU\textsuperscript{**}
      & \xmark \\
      \bottomrule
  \end{tabular}
  \vspace{0.4ex}
  \raggedright
  \textsuperscript{*}\;Tuning time varies with model and dataset; we report
  the longest case cited in each paper.
  See Appendix~D of DPM-Solver-v3~\cite{dpm_solver_v3}, Appendix~B of AMED-Solver~\cite{amed_solver},
  and Section 4.1 of DC-Solver~\cite{dc_solver} for details.\\
  \textsuperscript{**}\; These figures are obtained using CIFAR-10, ImageNet~$64\times64$, ImageNet~$128\times128$, and ImageNet~$256\times256$; see Appendix \ref{app:elapsed_time} for details.
\end{table}

Recent research has moved beyond purely plug-and-play methods and now
incorporates model-specific tuning to improve sample quality. 
We call any solver that introduces its own set of \emph{auxiliary
parameters} and performs an offline tuning procedure to determine them an
\emph{auxiliary-tuning sampler}.
During this tuning procedure, the auxiliary parameters are optimized for each pretrained model before sampling. Because the tuning procedure directly minimizes the local discretization error of the underlying ODE/SDE, these samplers preserve image quality even at low NFEs. Empirically, the tuning procedure requires only a handful of trajectories and thus consumes orders of magnitude less computation and data than distillation-based methods~\citep{Luhman2021knowledge, salimansprogressive, meng2023distillation, song2023consistency},
still offering substantial speed-quality gains. Moreover, this lightweight tuning preserves the training-free nature of solver methods, making deployment on existing checkpoints trivial. 

Among existing approaches, the state-of-the-art auxiliary-tuning samplers are
DPM-Solver-v3~\citep{dpm_solver_v3}, AMED-Solver~\citep{amed_solver}, and DC-Solver~\citep{dc_solver}. 
DPM-Solver-v3 unifies the noise and data-prediction models in a single closed-form and introduces high-dimensional tunable parameters to improve sample quality at small NFE.
AMED-Solver~\citep{amed_solver} learns a tiny network that predicts per-step geometric coefficients to lower truncation error by sampling toward the approximate mean direction. DC-Solver~\cite{dc_solver} introduces a compensation ratio $\rho_n$ per step to mitigate the misalignment of predictor-corrector diffusion sampler proposed by UniPC~\cite{zhao2023unipc}.

\paragraph{Parameter type \& parameter count}
We first clarify what is actually stored by each auxiliary-tuning sampler and how many parameters must be tuned before sampling.
DPM-Solver-v3 precomputes the empirical model statistics
(three $D$-dimensional vectors) at $N\!=\!120\!-\!1,\!200$ discrete noise levels, which translates to roughly 1.1 M to 295 M stored parameters (depending on model resolution). 
AMED-Solver employs auxiliary neural networks to predict two geometric coefficients at each sampling step. Apart from the pre-trained feature extractor and time embedder, the only learnable component is a two-layer fully connected module containing 8,950 parameters.
DC-Solver optimizes a single compensation ratio
\(\rho_n\) per step, so the parameter count equals \(\mathrm{NFE}\).  
Finally, RBF-Solver fits two shape parameters per step—one for the predictor and one for the corrector—while skipping the first-step prediction and the final-step correction, yielding $2\times(\mathrm{NFE}-1)$ radial-basis parameters in total. 

\paragraph{Additional network}
Among the four samplers, only AMED-Solver introduces an auxiliary network
(a 2-layer MLP with approximately 9k weights) and performs a forward pass through this network at every sampling step; these weights are trained during the tuning procedure.
In contrast, DPM-Solver-v3, DC-Solver, and RBF-Solver remain
network-free: their auxiliary-tuning parameters are stored in a
static look-up table and incur no extra neural network computation.

\paragraph{Dataset size \& offline tuning time}
Below we quote the number of data points and wall-clock times each sampler takes for parameter tuning from each original paper. 
According to Appendix D of \cite{dpm_solver_v3}, DPM-Solver-v3 uses 4,096 data points for unconditional CIFAR10~\cite{chrabaszcz2017downsampled} and 1,024 for all other settings. With this dataset size, EMS computation finishes in 12 min on 120 step LSUN-Bedroom~\cite{yu2015lsun} and up to \(\sim\)11 h on the
250-step Stable Diffusion v1.4 checkpoint~\cite{rombach2022high}, on eight NVIDIA A40 GPUs.
According to Appendix B of \cite{amed_solver}, AMED-Solver trains its 8,950-parameter MLP on 10k teacher trajectories; the procedure takes 2–8 min on CIFAR-10 and 1–3 h on LSUN-Bedroom, both on a single NVIDIA A100 GPU.
According to Section 4.1 of \cite{dc_solver}, DC-Solver fits one compensation ratio per step on just 10 ground-truth trajectories; the tuning procedure finishes in \(<\!5\) min on a single GPU excluding the time required to prepare those trajectories. RBF-Solver optimizes the shape parameters using 128 reference targets and completes tuning in 1 min. – 1 h on a single NVIDIA RTX 4090 GPU, including the time spent building the targets(see Appendix \ref{app:elapsed_time}).



\subsection{Unconditional Experiment Results for Auxiliary-Tuning Samplers}

\subsubsection{CIFAR-10 (Score-SDE) Results for Auxiliary-Tuning Samplers}
\label{app:cifar10_scoresde_cali}

\paragraph{Environment Setup}
\begin{itemize}
    \item \textbf{GPU}: Single NVIDIA RTX~4090
    \item \textbf{Model}: Score-SDE~\cite{song2021score}, \url{https://github.com/yang-song/score_sde}
    \item \textbf{Checkpoint}: \texttt{vp/cifar10\_ddpmpp\_deep\_continuous}
\end{itemize}

\paragraph{Sampler Settings}
\begin{itemize}
    \item \textbf{DPM-Solver-v3}: The EMS statistics files are obtained from the official repository
          (\url{https://github.com/thu-ml/DPM-Solver-v3/tree/main/codebases/score_sde}).
          All other hyperparameters—\texttt{eps}, \texttt{p\_pseudo}, \texttt{lower\_order\_final},
          \texttt{use\_corrector}, etc.—are kept exactly as specified in
          \url{https://github.com/thu-ml/DPM-Solver-v3/blob/main/codebases/score_sde/sample.sh}.
    \item \textbf{DC-Solver}: As proposed in \cite{dc_solver}, we search the dynamic-compensation ratios using 10 images sampled with DDIM~\cite{song2020denoising} at NFE$=$200. We adopted the \(B_1(h)\) variant recommended by \cite{zhao2023unipc} for unconditional generation.
    \item \textbf{RBF-Solver}: The shape parameters are optimized using a target set of 128 samples generated by running UniPC with NFE$=$200. The solver is also evaluated with order$=$4.
\end{itemize}

\paragraph{Experiment Results}
As shown in Table \ref{tab:fid_cifar10_score_sde_cali}, RBF-Solver achieves the best FID score on CIFAR-10 in the order$=$4 variant when the NFE $\geq$ 15.

\begin{table}[h]
    \centering
    \caption{Sample quality measured by FID $\downarrow$ on CIFAR-10 32$\times$32 dataset (Score-SDE), evaluated on 50k samples. Numbers in parentheses indicate the solver order.}
    \label{tab:fid_cifar10_score_sde_cali}
    \renewcommand{\arraystretch}{1.2}
    \setlength{\tabcolsep}{5pt}
    \begin{tabular}{lccccccccccc}
        \toprule
        \multirow{2}{*}{\textbf{Model}} & \multicolumn{11}{c}{\textbf{NFE}}\\
        \cmidrule(lr){2-12}
        & \textbf{5} & \textbf{6} & \textbf{8} & \textbf{10} & \textbf{12} & \textbf{15} & \textbf{20} & \textbf{25} & \textbf{30} & \textbf{35} & \textbf{40} \\
        \midrule
        DPM-Solver-v3 (3) & \textbf{12.76} & \textbf{7.39} & \textbf{3.93} & \textbf{3.39} & \textbf{3.23} & 2.89 & 2.69 & 2.62 & 2.58 & 2.58 & 2.57 \\
        DC-Solver (3)     & 48.27 & 22.65 & 6.46 & 4.64 & 5.51 & 4.39 & 3.86 & 2.68 & 2.59 & 2.52 & \textbf{2.48} \\
        \rowcolor{gray!20}
        RBF-Solver (3)    & 26.65 & 13.05 & 5.32 & 4.13 & 3.84 & 3.02 & 2.66 & 2.56 & 2.51 & 2.50 & 2.49 \\
        \rowcolor{gray!20}
        RBF-Solver (4)    & 20.17 & 12.45 & 5.71 & 4.26 & 3.90 & \textbf{2.87} & \textbf{2.60} & \textbf{2.54} & \textbf{2.49} & \textbf{2.49} & \textbf{2.48} \\
        \bottomrule
    \end{tabular}
\end{table}
\subsubsection{CIFAR-10 (EDM) Results for Auxiliary-Tuning Samplers}
\label{app:cifar10_edm_cali}

\paragraph{Environment Setup}
\begin{itemize}
    \item \textbf{GPU}: Single NVIDIA RTX~4090
    \item \textbf{Model}: EDM~\cite{karras2022elucidating}, \url{https://github.com/NVlabs/edm}
    \item \textbf{Checkpoint}: \texttt{cifar10‑32x32‑uncond‑vp}
\end{itemize}

\paragraph{Sampler Settings}
\begin{itemize}
    \item \textbf{DPM-Solver-v3}: The EMS statistics files are obtained from the official repository (\url{https://github.com/thu-ml/DPM-Solver-v3/tree/main/codebases/edm}). Following the reference implementation, we use \texttt{0.002\_80.0\_1200\_1024} when \(\text{NFE} < 10\); otherwise, we use \texttt{0.002\_80.0\_120\_4096}.
    \item \textbf{DC-Solver}: As proposed in \cite{dc_solver}, we search the dynamic-compensation ratios using 10 images sampled with DDIM~\cite{song2020denoising} at NFE$=$10. We adopt the \(B_1(h)\) variant recommended by \cite{zhao2023unipc} for unconditional generation.
    \item \textbf{RBF-Solver}: The shape parameters are optimized using a target set of 128 samples generated by running UniPC with NFE$=$200. The solver is also evaluated with order$=$4.
\end{itemize}

\paragraph{Experiment Results}
As shown in Table \ref{tab:fid_cifar10_edm_cali}, both the order$=$3 and order$=$4 variants of RBF-Solver consistently outperform the competing samplers for NFE ≥ 15, albeit by a small margin.

\begin{table}[h]
    \centering
    \caption{Sample quality measured by FID $\downarrow$ on CIFAR-10 32$\times$32 dataset (EDM), evaluated on 50k samples. Numbers in parentheses indicate the solver order.}
    \label{tab:fid_cifar10_edm_cali}
    \renewcommand{\arraystretch}{1.2}
    \setlength{\tabcolsep}{5pt}
    \begin{tabular}{lccccccccccc}
        \toprule
        \multirow{2}{*}{\textbf{Model}} & \multicolumn{11}{c}{\textbf{NFE}}\\
        \cmidrule(lr){2-12}
        & \textbf{5} & \textbf{6} & \textbf{8} & \textbf{10} & \textbf{12} & \textbf{15} & \textbf{20} & \textbf{25} & \textbf{30} & \textbf{35} & \textbf{40} \\
        \midrule
        DPM-Solver-v3 (3) & \textbf{12.19} & 8.56 & \textbf{3.50} & \textbf{2.51} & \textbf{2.23} & 2.10 & 2.02 & 2.00 & 2.00 & 1.99 & 1.99 \\
        DC-Solver (3)       & 29.69 & 12.52 & 5.21 & 2.88 & 3.15 & 2.37 & 2.07 & 2.16 & 2.15 & 2.11 & 2.17 \\
        \rowcolor{gray!20}
        RBF-Solver (3)    & 22.25 & 10.78 & 3.77 & 2.88 & 2.37 & 2.06 & \textbf{1.99} & \textbf{1.98} & \textbf{1.98} & \textbf{1.98} & \textbf{1.98} \\
        \rowcolor{gray!20}
        RBF-Solver (4)    & 14.91 & \textbf{7.67} & 4.88 & 3.71 & 2.50 & \textbf{2.05} & 2.00 & 1.99 & 1.99 & \textbf{1.98} & \textbf{1.98} \\
        \bottomrule
    \end{tabular}
\end{table}

\begin{table*}[p]
  \centering
  
  \caption{Qualitative comparison of different samplers on CIFAR-10 (EDM). Each image is randomly sampled and solver order$=$3. Columns correspond to samplers; rows indicate NFE values of 5, 10, and 15.}
  
  \label{tab:cifar10_edm_cali}
  \renewcommand{\arraystretch}{1.02}
  \setlength{\tabcolsep}{2pt}

  \begin{adjustbox}{max width=\textwidth}
  \begin{tabular}{@{}C C C@{}}
    \toprule
    \textbf{DPM-Solver-v3} & \textbf{DC-Solver} & \textbf{RBF-Solver} \\
    \midrule
    \multicolumn{3}{c}{\textbf{NFE = 5}} \\[2pt]
    \includegraphics[width=\linewidth]{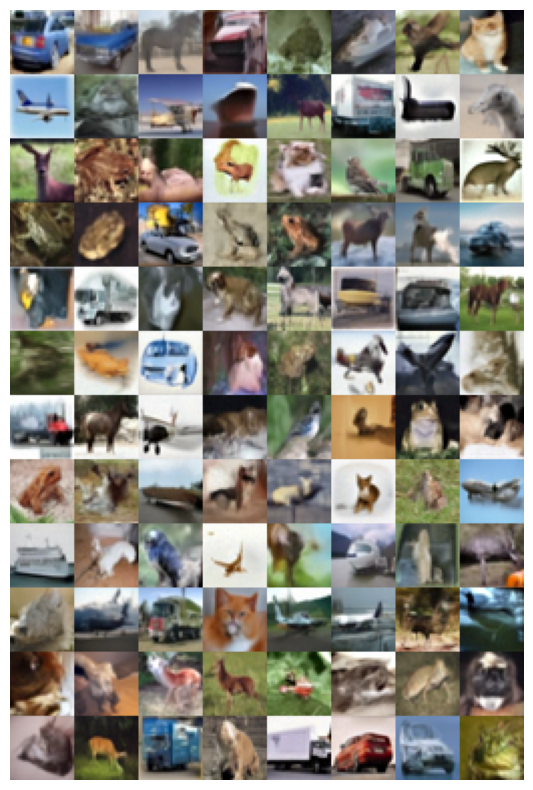} &
    \includegraphics[width=\linewidth]{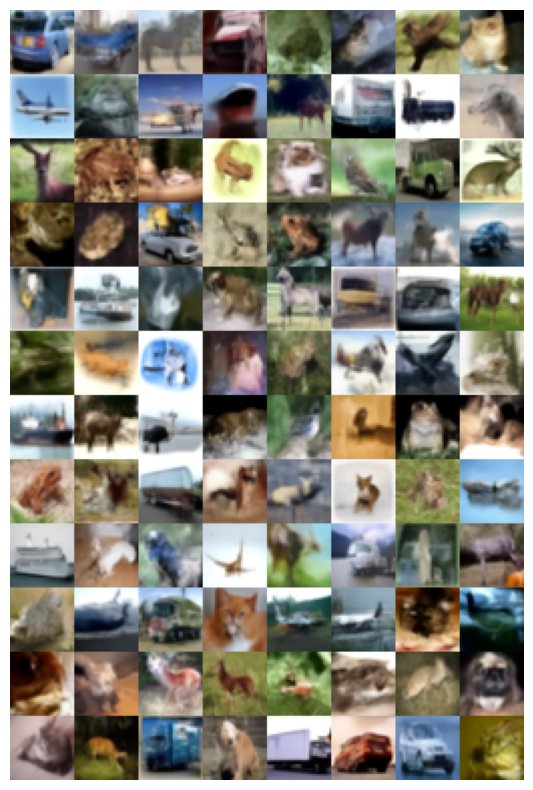} &
    \includegraphics[width=\linewidth]{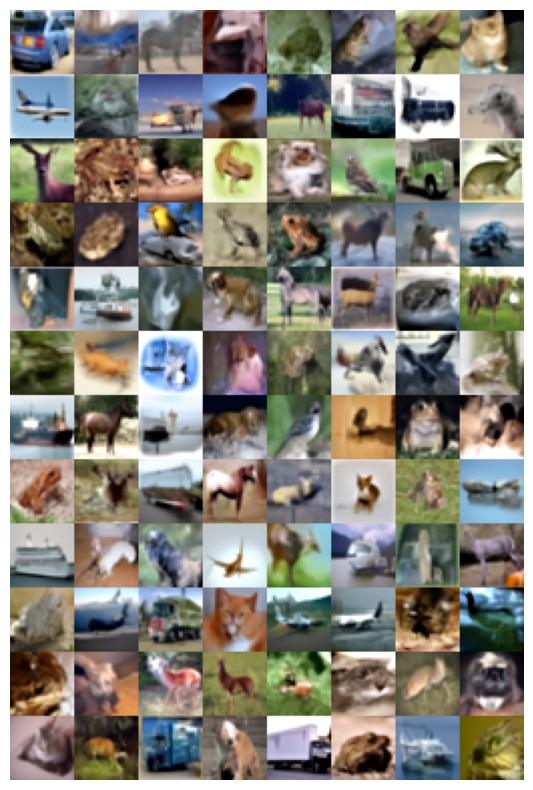} \\
    \midrule
    \multicolumn{3}{c}{\textbf{NFE = 10}} \\[2pt]
    \includegraphics[width=\linewidth]{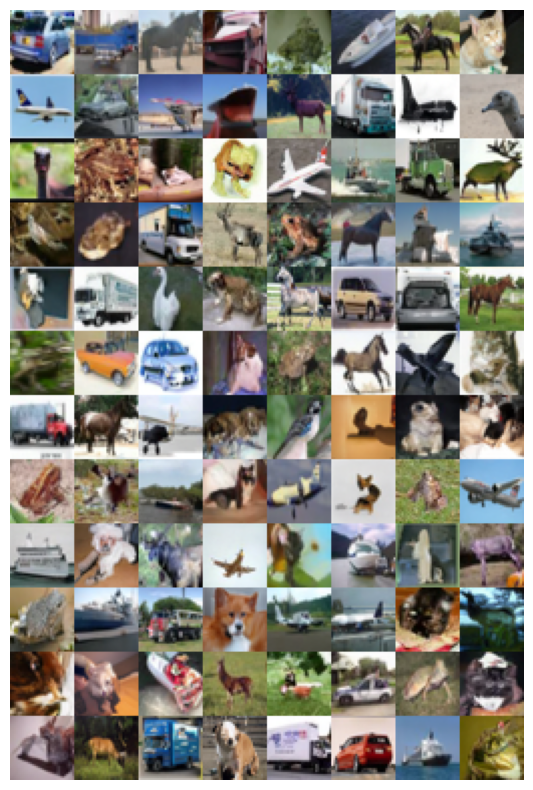} &
    \includegraphics[width=\linewidth]{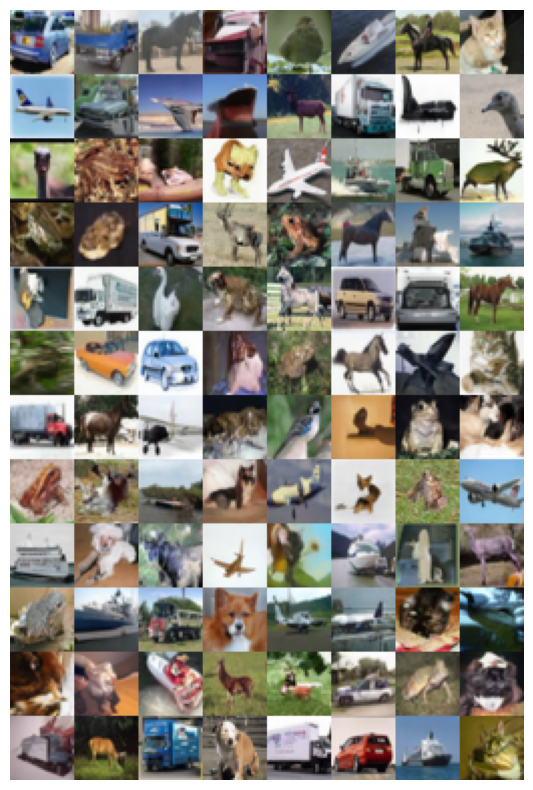} &
    \includegraphics[width=\linewidth]{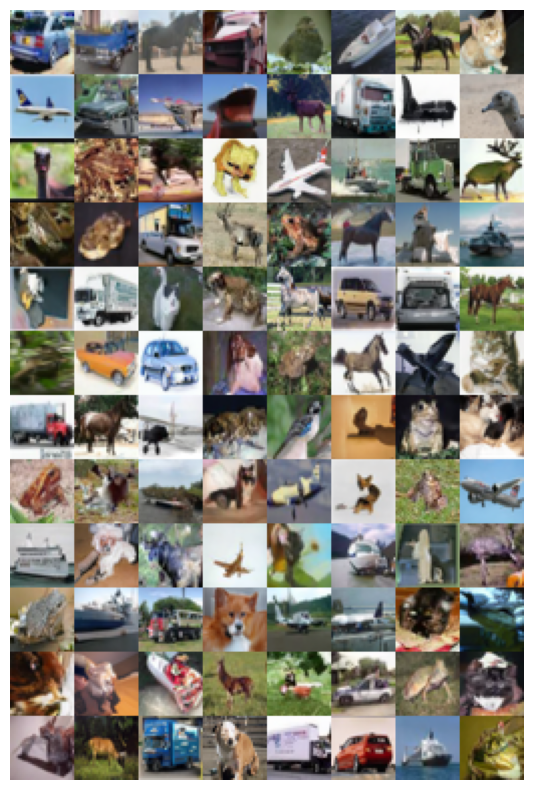} \\
    \midrule
    \multicolumn{3}{c}{\textbf{NFE = 15}} \\[2pt]
    \includegraphics[width=\linewidth]{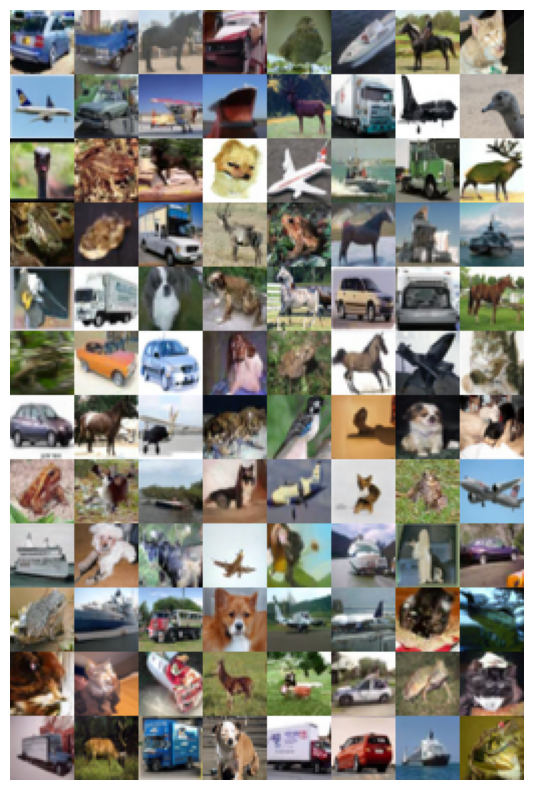} &
    \includegraphics[width=\linewidth]{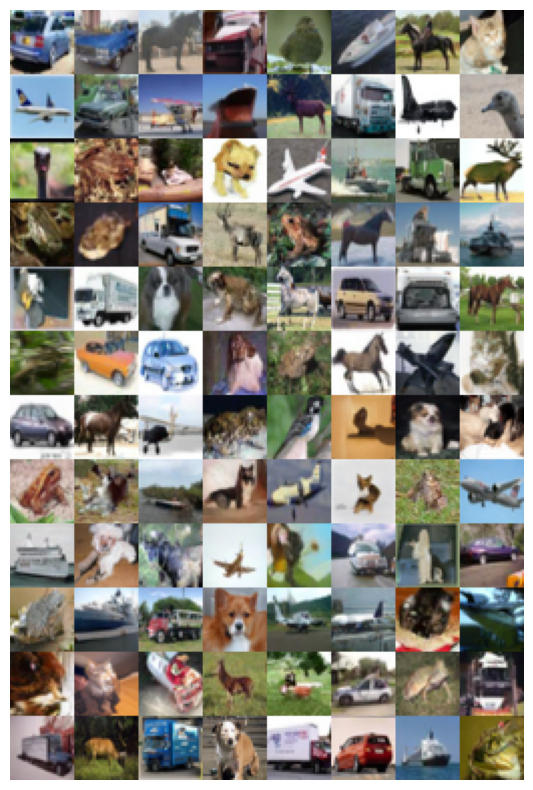} &
    \includegraphics[width=\linewidth]{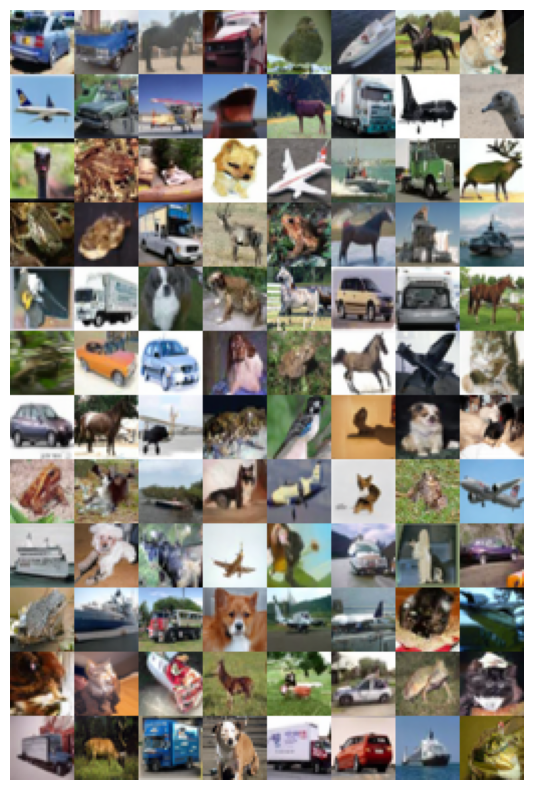} \\
    \bottomrule
  \end{tabular}
  \end{adjustbox}
\end{table*}

\subsubsection{CIFAR-10 (EDM) Results: Comparison with AMED-Solver and AMED-Plugin}
\label{app:cifar10_edm_amed}

\paragraph{Environment Setup}
\begin{itemize}
    \item \textbf{GPU}: Single NVIDIA RTX~3090 Ti
    \item \textbf{Model}: EDM~\cite{karras2022elucidating}, \url{https://github.com/NVlabs/edm}
    \item \textbf{Checkpoint}: \texttt{cifar10‑32x32‑uncond‑vp}
\end{itemize}

\paragraph{Sampler Settings}
\begin{itemize}
    \item \textbf{Common settings}: The multistep setting is applied to AMED-Plugin and RBF-Solver, and AMED-Solver is inherently singlestep. The solver order is set to 2 for AMED-Solver, 4 for AMED-Plugin, and both 3 and 4 for RBF-Solver. In both AMED-Solver and AMED-Plugin, geometric coefficients $(s_n,c_n)$ for NFE values of 5, 7, and 9 are adopted from the official repository (\url{https://github.com/zju-pi/diff-sampler/tree/main/amed-solver-main}). For other NFE values, we independently train the corresponding coefficients using the official training script on a dataset of 10k images. Following \cite{amed_solver}, we apply analytical first step (AFS) to both AMED-Solver and AMED-Plugin, and therefore report results only on odd NFEs.
    \item \textbf{AMED-Solver}: As proposed in \cite{amed_solver}, the teacher trajectories are generated using EDM Heun sampler~\cite{karras2022elucidating} with doubled NFE (i.e., one intermediate time step is inserted between each pair of adjacent sampling steps).
        \begin{itemize}
            \item NFE 5 coefficients: \texttt{cifar10-4-5-amed-heun-1-uni1.0=afs}
            \item NFE 7 coefficients: \texttt{cifar10-5-7-amed-heun-1-uni1.0=afs}
            \item NFE 9 coefficients : \texttt{cifar10-6-9-amed-heun-1-uni1.0=afs}
        \end{itemize}
    \item \textbf{AMED-Plugin}: Following the approach in \cite{amed_solver}, the student sampler is iPNDM~\cite{Zhang2023fast} (order 4), and the teacher trajectories are generated using the same solver, inserting two intermediate time steps between each pair of adjacent sampling steps.
        \begin{itemize}
            \item NFE 5 coefficients : \texttt{cifar10-4-5-ipndm-ipndm-2-poly7.0=afs}
            \item NFE 7 coefficients : \texttt{cifar10-5-7-ipndm-ipndm-2-poly7.0=afs}
            \item NFE 9 coefficients : \texttt{cifar10-6-9-ipndm-ipndm-2-poly7.0=afs}
        \end{itemize}

    \item \textbf{RBF-Solver}: The shape parameters are optimized using a target set of 128 samples generated by running UniPC with NFE$=$200. The solver is also evaluated with order$=$4.
\end{itemize}

\paragraph{Experiment Results}
As shown in Table \ref{tab:fid_amed_cifar10_cali}, RBF-Solver achieves the best FID score in both order$=$3 and order$=$4 when the NFE $\geq$ 25.

\begin{table}[h]
    \centering
    \caption{Sample quality measured by FID $\downarrow$ on CIFAR-10 32$\times$32 (EDM), evaluated on 50k samples. Numbers in parentheses indicate the solver order.} 
    \label{tab:fid_amed_cifar10_cali}
    \renewcommand{\arraystretch}{1.2}
    \setlength{\tabcolsep}{6pt}
    \begin{tabular}{lcccccccccc}
        \toprule
        \multirow{2}{*}{\textbf{Model}} & \multicolumn{10}{c}{\textbf{NFE}}\\
        \cmidrule(lr){2-11}
        & \textbf{5} & \textbf{7} & \textbf{9} & \textbf{11} & \textbf{15} & \textbf{21} & \textbf{25} & \textbf{31} & \textbf{35} & \textbf{41}\\
        \midrule
        AMED-Solver (2)      & 7.28 & 4.20 & 3.78 & 4.00 & 3.40 & 3.35 & 3.26 & 3.06 & 2.95 & 2.62 \\
        AMED-Plugin (4)   & \textbf{6.77} & \textbf{3.80} & \textbf{2.66} & \textbf{2.31} & \textbf{2.00} & \textbf{1.97} & \textbf{1.99} & 2.07 & 2.01 & 1.99 \\
        \addlinespace[0.5ex]
        \rowcolor{gray!20}
        RBF-Solver (3)        & 22.63 & 5.94 & 3.43 & 2.68 & 2.08 & 2.00 & \textbf{1.99} & \textbf{1.98} & \textbf{1.98} & \textbf{1.97} \\
        \rowcolor{gray!20}
        RBF-Solver (4)        & 15.01 & 5.29 & 5.06 & 3.01 & 2.09 & 2.01 & \textbf{1.99} & \textbf{1.98} & \textbf{1.98} & \textbf{1.97}  \\
        \bottomrule
    \end{tabular}
\end{table}

\subsubsection{FFHQ (EDM) Results: Comparison with AMED-Solver and AMED-Plugin}
\label{app:ffhq_edm_amed}

\paragraph{Environment Setup}
\begin{itemize}
    \item \textbf{GPU}: Single NVIDIA RTX~3090 Ti
    \item \textbf{Model}: EDM~\cite{karras2022elucidating}, \url{https://github.com/NVlabs/edm}
    \item \textbf{Checkpoint}: \texttt{edm-ffhq-64x64-uncond-vp}
\end{itemize}

\paragraph{Sampler Settings}
\begin{itemize}
    \item \textbf{Common settings}: The multistep setting is applied to AMED-Plugin and RBF-Solver, and AMED-Solver is inherently singlestep. The solver order is set to 2 for AMED-Solver, 4 for AMED-Plugin, and both 3 and 4 for RBF-Solver. In both AMED-Solver and AMED-Plugin, geometric coefficients $(s_n,c_n)$ for NFE values of 5, 7, and 9 are adopted from the official repository (\url{https://github.com/zju-pi/diff-sampler/tree/main/amed-solver-main}). For other NFE values, we independently train the corresponding coefficients using the official training script on a dataset of 10k images. Following \cite{amed_solver}, we apply analytical first step (AFS) to both AMED-Solver and AMED-Plugin, and therefore report results on odd NFEs.
    \item \textbf{AMED-Solver}: As proposed in \cite{amed_solver}, the teacher trajectories are generated using EDM Heun sampler~\cite{karras2022elucidating} with doubled NFE (i.e., one intermediate time step is inserted between each pair of adjacent sampling steps).
        \begin{itemize}
            \item NFE 5 coefficients: \texttt{ffhq-4-5-amed-heun-1-uni1.0=afs}
            \item NFE 7 coefficients: \texttt{ffhq-5-7-amed-heun-1-uni1.0=afs}
            \item NFE 9 coefficients : \texttt{ffhq-6-9-amed-heun-1-uni1.0=afs}
        \end{itemize}
    \item \textbf{AMED-Plugin}: Following the approach in \cite{amed_solver}, the student sampler is iPNDM~\cite{Zhang2023fast} (order 4), inserting two intermediate time steps between each pair of adjacent sampling steps.
        \begin{itemize}
            \item NFE 5 coefficients: \texttt{ffhq-4-5-ipndm-ipndm-2-poly7.0=afs}
            \item NFE 7 coefficients: \texttt{ffhq-5-7-ipndm-ipndm-2-poly7.0=afs}
            \item NFE 9 coefficients : \texttt{ffhq-6-9-ipndm-ipndm-2-poly7.0=afs}
        \end{itemize}
    \item \textbf{RBF-Solver}: The shape parameters are optimized using a target set of 128 samples generated by running UniPC with NFE$=$200. The solver is also evaluated with order$=$4.
\end{itemize}

\paragraph{Experiment Results}
As shown in Table~\ref{tab:fid_amed_ffhq_cali}, RBF-Solver achieves the best FID score with the order$=$3 at NFE$=$11, while the order$=$4 variant achieves the best sample quality when NFE $\geq$ 15.

\begin{table}[h]
    \centering
    \caption{Sample quality measured by FID $\downarrow$ on FFHQ 64$\times$64 (EDM), evaluated on 50k samples. Numbers in parentheses indicate the solver order.}
    \label{tab:fid_amed_ffhq_cali}
    \renewcommand{\arraystretch}{1.2}
    \setlength{\tabcolsep}{6pt}
    \begin{tabular}{lcccccccccc}
        \toprule
        \multirow{2}{*}{\textbf{Model}} & \multicolumn{10}{c}{\textbf{NFE}}\\
        \cmidrule(lr){2-11}
        & \textbf{5} & \textbf{7} & \textbf{9} & \textbf{11} & \textbf{15} & \textbf{21} & \textbf{25} & \textbf{31} & \textbf{35} & \textbf{41}\\
        \midrule
        AMED-Solver (2)      & 14.80 & 8.82 & 6.30 & 5.29 & 4.40 & 3.85 & 3.53 & 3.33 & 3.21 & 2.93 \\
        AMED-Plugin (4)   & \textbf{12.48} & \textbf{6.64} & \textbf{4.24} & 3.70 & 2.93 & 2.61 & 2.57 & 2.53 & 2.50 & 2.55 \\
        \addlinespace[0.5ex]
        \rowcolor{gray!20}
        RBF-Solver (3)         & 19.58 & 7.07 & 4.61 & \textbf{3.49} & 2.70 & 2.42 & 2.37 & 2.34 & 2.34 & 2.34\\
        \rowcolor{gray!20}
        RBF-Solver (4)        & 15.58 & 7.93 & 8.71 & 4.19 & \textbf{2.60} & \textbf{2.34} & \textbf{2.30} & \textbf{2.29} & \textbf{2.30} & \textbf{2.31} \\
        \bottomrule
    \end{tabular}
\end{table}

\subsection{Conditional Experiment Results for Auxiliary-Tuning Samplers}

\subsubsection{ImageNet 256$\times$256 Experiment Results for Auxiliary-Tuning Samplers}
\label{app:imagenet256_cali}

\paragraph{Environment Setup}
\begin{itemize}
    \item \textbf{GPUs}: 8-way NVIDIA RTX~4090
    \item \textbf{Model}: Guided-Diffusion~\cite{dhariwal2021diffusion},
    \url{https://github.com/openai/guided-diffusion}
    \item \textbf{Checkpoint}: \texttt{256x256\_classifier.pt, 256x256\_diffusion.pt}
\end{itemize}

\paragraph{Sampler Settings}
\begin{itemize}
    \item \textbf{DPM-Solver-v3}: The EMS statistics files are obtained from the official repository  
          (\url{https://github.com/thu-ml/DPM-Solver-v3/tree/main/codebases/guided-diffusion}).  
          Following Appendix.~J.2 of \cite{dpm_solver_v3}, the same EMS statistics are used for all guidance scale settings.
    \item \textbf{DC-Solver}: As proposed in \cite{dc_solver}, The dynamic-compensation ratios are searched using 10 images sampled with DDIM~\cite{song2020denoising} at NFE$=$1000. Due to memory issues, the search is conducted on an NVIDIA B200 GPU. We adopt the \(B_2(h)\) variant recommended by \cite{zhao2023unipc} for conditional generation.
    \item \textbf{RBF-Solver}: The shape parameters are optimized by averaging the results of 20 runs, each performed on a batch of 16 images randomly drawn from a 128-image target set generated with UniPC NFE$=$200. The solver is also evaluated with order$=$3.
\end{itemize}

\paragraph{Experiment Results}
As shown in Table~\ref{tab:fid_imagenet256_cali}, RBF-Solver exhibits improved FID scores with increasing guidance scale, achieving the best results across all NFE settings at a scale of 8.0.

\begin{table}[h]
    \centering
    \caption{Sample quality measured by FID $\downarrow$ on ImageNet 256$\times$256 (Guided-Diffusion), evaluated on 10k samples. Numbers in parentheses indicate the solver order.}
    \label{tab:fid_imagenet256_cali}
    \renewcommand{\arraystretch}{1.2}
    \setlength{\tabcolsep}{6pt}
    \setlength{\tabcolsep}{5.0pt}
    \resizebox{\linewidth}{!}{
    \begin{tabular}{llcccccccc}
        \toprule
        & & \multicolumn{8}{c}{\textbf{NFE}} \\
        \cmidrule(lr){3-10}
        & & \textbf{5} & \textbf{6} & \textbf{8} & \textbf{10} & \textbf{12} & \textbf{15} & \textbf{20} & \textbf{25} \\
        \midrule

        \multicolumn{10}{l}{\textbf{Guidance Scale = 2.0}} \\
        \addlinespace[0.5ex]
        & DPM-Solver-v3 (2)      & \textbf{14.75} & \textbf{10.99} & \textbf{8.79} & \textbf{8.05} & \textbf{7.74} & \textbf{7.61} & \textbf{7.35} & \textbf{7.23} \\
        & DC-Solver (3)          & 15.78 & 11.60 & 9.29 & 8.19 & 7.77 & \textbf{7.61} & 7.46 & 7.47 \\
        \rowcolor{gray!20}
        & RBF-Solver (2)         & 15.05 & 11.45 & 9.11 & 8.28 & 7.87 & 7.71 & 7.43 & 7.31 \\
        \rowcolor{gray!20}
        & RBF-Solver (3)         & 14.77 & 11.11 & 9.00 & 8.40 & 8.05 & 7.78 & 7.45 & 7.31 \\
        \midrule

        \multicolumn{10}{l}{\textbf{Guidance Scale = 4.0}} \\
        \addlinespace[0.5ex]
        & DPM-Solver-v3 (2)      & 26.37 & 16.35 & 10.31 & 9.04 & 8.48 & 8.23 & 8.06 & 8.03 \\
        & DC-Solver (3)          & 19.66 & 15.08 & 11.20 & 9.65 & \textbf{8.18} & \textbf{8.18} & \textbf{7.99} & 8.03 \\
        \rowcolor{gray!20}
        & RBF-Solver (2)         & \textbf{17.76} & 12.47 & 9.55 & 8.95 & 8.58 & 8.35 & 8.08 & \textbf{8.01} \\
        \rowcolor{gray!20}
        & RBF-Solver (3)         & 18.33 & \textbf{12.03} & \textbf{9.22} & \textbf{8.83} & 8.52 & 8.26 & 8.11 & 8.02 \\
        \midrule

        \multicolumn{10}{l}{\textbf{Guidance Scale = 6.0}} \\
        \addlinespace[0.5ex]
        & DPM-Solver-v3 (2)      & 53.52 & 37.72 & 20.35 & 12.90 & 10.44 & 9.26 & 8.98 & 8.82 \\
        & DC-Solver (3)          & 36.71 & 22.44 & 13.19 & 12.49 & 10.95 & \textbf{8.86} & 8.87 & \textbf{8.75} \\
        \rowcolor{gray!20}
        & RBF-Solver (2)         & 26.78 & 16.21 & 11.00 & 9.74 & 9.39 & 8.97 & 8.95 & 8.79 \\
        \rowcolor{gray!20}
        & RBF-Solver (3)         & \textbf{24.91} & \textbf{15.48} & \textbf{10.72} & \textbf{9.65} & \textbf{9.31} & 8.91 & \textbf{8.79} & 8.80 \\
        \midrule

        \multicolumn{10}{l}{\textbf{Guidance Scale = 8.0}} \\
        \addlinespace[0.5ex]
        & DPM-Solver-v3 (2)      & 79.14 & 64.93 & 41.99 & 26.08 & 17.48 & 11.63 & 9.99 & 9.64 \\
        & DC-Solver (3)          & 58.47 & 45.53 & 20.25 & 17.34 & 13.31 & 10.93 & 9.46 & 9.66 \\
        \rowcolor{gray!20}
        & RBF-Solver (2)         & 42.61 & \textbf{24.79} & \textbf{13.79} & 11.25 & 10.30 & 9.69 & 9.46 & \textbf{9.42} \\
        \rowcolor{gray!20}
        & RBF-Solver (3)         & \textbf{38.81} & \textbf{24.79} & 14.04 & \textbf{10.98} & \textbf{10.21} & \textbf{9.63} & \textbf{9.44} & 9.47 \\
        \bottomrule
    \end{tabular}}
\end{table}

\begin{table*}[p]
  \centering
\caption{Qualitative comparison of different samplers on ImageNet 256$\times$256. Each image is randomly sampled with guidance scale 8.0. DPM-Solver-v3 and RBF-Solver use order 2; DC-Solver uses order 3. Columns correspond to samplers; rows indicate NFE values of 5, 10, and 15.}
  \label{tab:imagenet256_scale4_samples}
  \renewcommand{\arraystretch}{1.02}
  \setlength{\tabcolsep}{2pt}

  \begin{adjustbox}{max width=\textwidth}
  \begin{tabular}{@{}C C C@{}}
    \toprule
    \textbf{DPM-Solver-v3} & \textbf{DC-Solver} & \textbf{RBF-Solver} \\
    \midrule
    \multicolumn{3}{c}{\textbf{NFE = 5}} \\[2pt]
    \includegraphics[width=\linewidth]{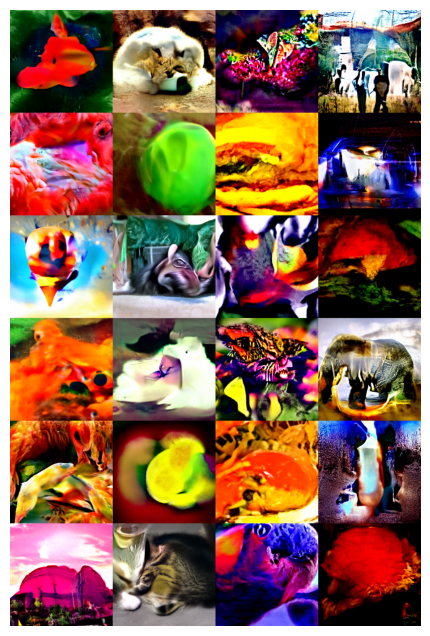} &
    \includegraphics[width=\linewidth]{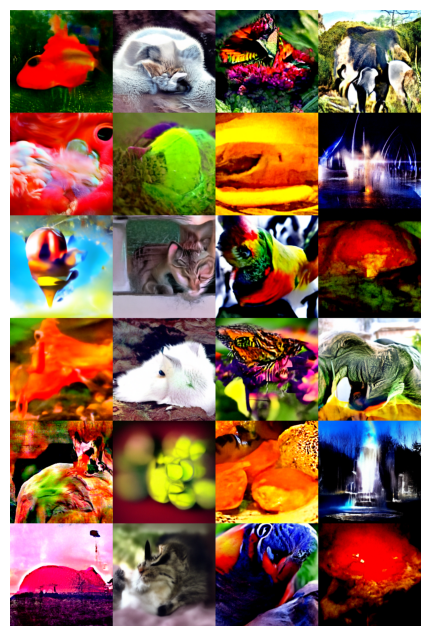} &
    \includegraphics[width=\linewidth]{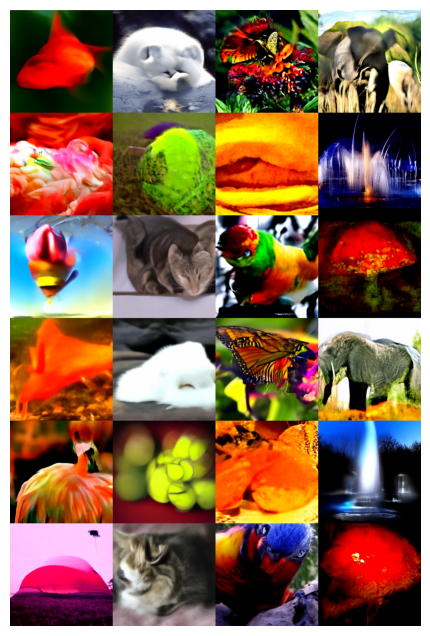} \\
    \midrule
    \multicolumn{3}{c}{\textbf{NFE = 10}} \\[2pt]
    \includegraphics[width=\linewidth]{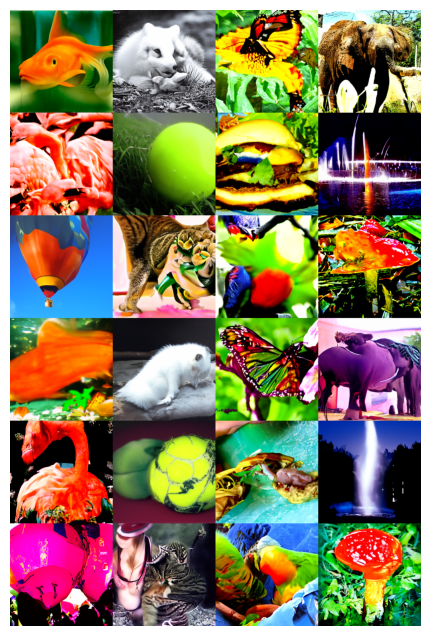} &
    \includegraphics[width=\linewidth]{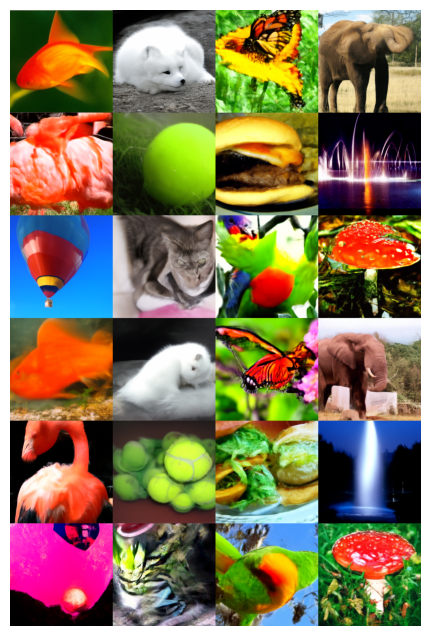} &
    \includegraphics[width=\linewidth]{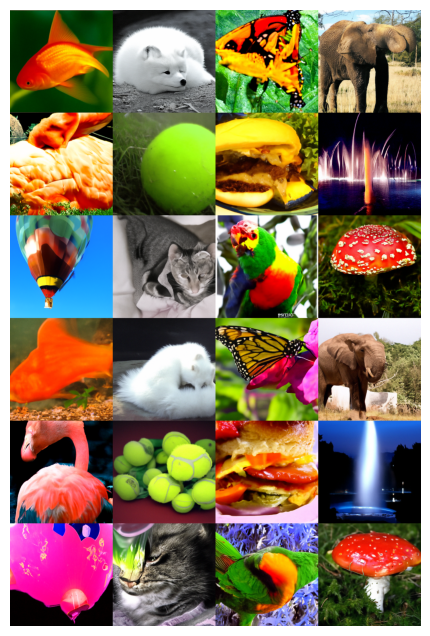} \\
    \midrule
    \multicolumn{3}{c}{\textbf{NFE = 15}} \\[2pt]
    \includegraphics[width=\linewidth]{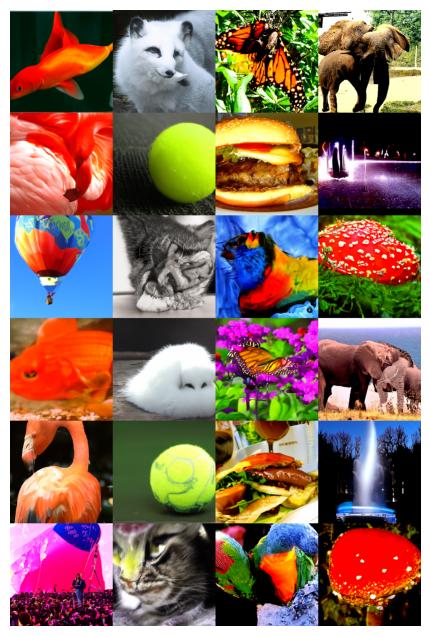} &
    \includegraphics[width=\linewidth]{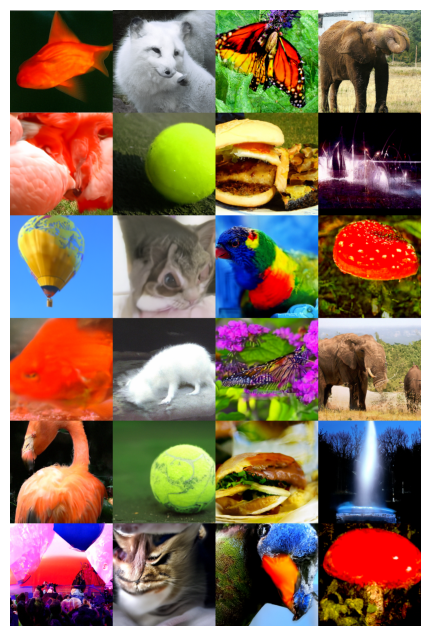} &
    \includegraphics[width=\linewidth]{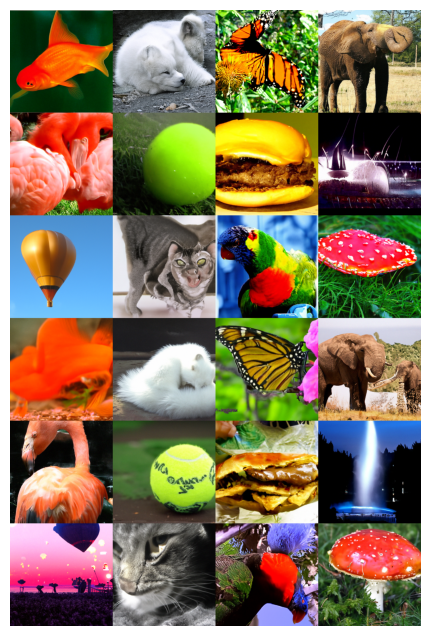} \\
    \bottomrule
  \end{tabular}
  \end{adjustbox}
\end{table*}

\subsubsection{Stable-Diffusion v1.4 Experiment Results for Auxiliary-Tuning Samplers}
\label{app:stable_cali}

\paragraph{Environment Setup}
\begin{itemize}
    \item \textbf{GPU}: Single NVIDIA RTX~4090
    \item \textbf{Model}: Stable-Diffusion~\cite{rombach2022high},
    \url{https://github.com/CompVis/stable-diffusion}
    \item \textbf{Checkpoint}: \texttt{sd-v1-4.ckpt}
\end{itemize}

\paragraph{Sampler Settings}
\begin{itemize}
    \item \textbf{DPM-Solver-v3}: The EMS statistics files are obtained from the official repository (\url{https://github.com/thu-ml/DPM-Solver-v3/tree/main/codebases/stable-diffusion}).
The EMS statistics are provided only for scales 1.5 and 7.5, and DPM-Solver-v3 are evaluated exclusively at these scales.
    \item \textbf{DC-Solver}: As proposed in \cite{dc_solver}, the dynamic-compensation ratios are searched using 10 images sampled with DDIM~\cite{song2020denoising} at NFE$=$1000. Due to memory issues, the search is conducted on an NVIDIA B200 GPU. We adopt the \(B_2(h)\) variant recommended by \cite{zhao2023unipc} for conditional generation.
    \item \textbf{RBF-Solver}: The shape parameters are optimized by averaging the results of 20 runs, each performed on a batch of 6 images randomly drawn from a 128-image target set generated with UniPC NFE$=$200. The solver is also evaluated with order$=$3.
\end{itemize}

\paragraph{Experiment Results}
As shown in Table \ref{tab:fid_stable_cali}, RBF-Solver yields strong performance in terms of cosine similarity and RMSE at the higher guidance scales of 7.5 and 9.5.

\begin{table*}[h]
    \centering
    \caption{CLIP embedding cosine similarity (↑) and RMSE loss (↓) with Stable Diffusion v1.4 for guidance scales 1.5, 3.5, 5.5, 7.5 and 9.5, evaluated on 10k samples. Bold numbers mark the best value per NFE in each metric.}
    \label{tab:fid_stable_cali}
    \renewcommand{\arraystretch}{0.92}
    \setlength{\tabcolsep}{6pt}


  \begin{adjustbox}{max height=\textheight}
      \begin{minipage}{\textwidth}
      \centering

    \ContinuedFloat
    \caption*{(a) CLIP cosine similarity (↑).}

  \begin{tabular}{llccccccc}
    \toprule
            & & \multicolumn{7}{c}{\textbf{NFE}} \\
    \cmidrule(lr){3-9}
            & & \textbf{5} & \textbf{6} & \textbf{8} & \textbf{10} & \textbf{12} & \textbf{15} & \textbf{20} \\
    \midrule
    \multicolumn{9}{l}{\textbf{Guidance Scale = 1.5}}\\[0.4ex]
      & DPM-Solver-v3 (2)  & \textbf{0.9116} & \textbf{0.9316} & \textbf{0.9517} & \textbf{0.9682} & 0.9873 & \textbf{0.9887} & 0.9938 \\
      & DC-Solver (3)      & 0.8979 & 0.9214 & 0.9497 & 0.9678 & \textbf{0.9878} & 0.9878 & \textbf{0.9941} \\
\rowcolor{gray!20}
      & RBF-Solver (2)     & 0.8887 & 0.9165 & 0.9453 & 0.9614 & 0.9717 & 0.9810 & 0.9888 \\
\rowcolor{gray!20}
      & RBF-Solver (3)     & 0.8926 & 0.9199 & 0.9468 & 0.9629 & 0.9731 & 0.9814 & 0.9893 \\
    \midrule
    \multicolumn{9}{l}{\textbf{Guidance Scale = 3.5}}\\[0.4ex]
      & DC-Solver (3)      & 0.8916 & 0.9087 & 0.9277 & \textbf{0.9463} & \textbf{0.9561} & \textbf{0.9673} & \textbf{0.9790} \\
\rowcolor{gray!20}
      & RBF-Solver (2)     & 0.8916 & 0.9097 & \textbf{0.9302} & 0.9448 & 0.9551 & 0.9663 & 0.9785 \\
\rowcolor{gray!20}
      & RBF-Solver (3)     & \textbf{0.8931} & \textbf{0.9106} & \textbf{0.9302} & 0.9443 & 0.9546 & 0.9663 & 0.9780 \\
    \midrule
    \multicolumn{9}{l}{\textbf{Guidance Scale = 5.5}}\\[0.4ex]
      & DC-Solver (3)      & \textbf{0.8877} & 0.8887 & 0.9175 & \textbf{0.9331} & 0.9419 & 0.9487 & 0.9644 \\
\rowcolor{gray!20}
      & RBF-Solver (2)     & 0.8853 & \textbf{0.9004} & \textbf{0.9194} & 0.9321 & \textbf{0.9424} & \textbf{0.9531} & \textbf{0.9663} \\
\rowcolor{gray!20}
      & RBF-Solver (3)     & 0.8843 & 0.8999 & 0.9189 & 0.9316 & 0.9409 & 0.9521 & 0.9648 \\
    \midrule
    \multicolumn{9}{l}{\textbf{Guidance Scale = 7.5}}\\[0.4ex]
      & DPM-Solver-v3 (2)  & 0.8638 & 0.8843 & 0.9082 & 0.9219 & \textbf{0.9321} & \textbf{0.9434} & \textbf{0.9561} \\
      & DC-Solver (3)      & 0.8721 & 0.8809 & 0.9004 & 0.9204 & 0.9238 & 0.9424 & 0.9497 \\
\rowcolor{gray!20}
      & RBF-Solver (2)     & \textbf{0.8760} & \textbf{0.8931} & \textbf{0.9106} & \textbf{0.9233} & \textbf{0.9321} & 0.9429 & \textbf{0.9561} \\
\rowcolor{gray!20}
      & RBF-Solver (3)     & 0.8735 & 0.8906 & 0.9097 & 0.9224 & 0.9316 & 0.9424 & 0.9551 \\
    \midrule
    \multicolumn{9}{l}{\textbf{Guidance Scale = 9.5}}\\[0.4ex]
      & DC-Solver (3)      & 0.8218 & 0.8540 & 0.8945 & 0.9062 & 0.9155 & 0.9268 & 0.9389 \\
\rowcolor{gray!20}
      & RBF-Solver (2)     & 0.8613 & \textbf{0.8828} & \textbf{0.9033} & \textbf{0.9150} & \textbf{0.9238} & \textbf{0.9346} & \textbf{0.9468} \\
\rowcolor{gray!20}
      & RBF-Solver (3)     & \textbf{0.8618} & 0.8809 & 0.9019 & 0.9146 & \textbf{0.9238} & 0.9341 & \textbf{0.9468} \\
    \bottomrule
  \end{tabular}

\vspace{1em}


   \ContinuedFloat
    \caption*{(b) RMSE loss (↓).}
  
  \begin{tabular}{llccccccc}
    \toprule
            & & \multicolumn{7}{c}{\textbf{NFE}} \\
    \cmidrule(lr){3-9}
            & & \textbf{5} & \textbf{6} & \textbf{8} & \textbf{10} & \textbf{12} & \textbf{15} & \textbf{20} \\
    \midrule
    \multicolumn{9}{l}{\textbf{Guidance Scale = 1.5}}\\[0.4ex]
      & DPM-Solver-v3 (2)  & \textbf{0.2019} & \textbf{0.1756} & \textbf{0.1344} & \textbf{0.1040} & \textbf{0.0806} & 0.0584 & 0.0381 \\
      & DC-Solver (3)      & 0.2180 & 0.1893 & 0.1441 & 0.1101 & 0.0810 & \textbf{0.0576} & \textbf{0.0356} \\
\rowcolor{gray!20}
      & RBF-Solver (2)     & 0.2240 & 0.1926 & 0.1506 & 0.1203 & 0.0976 & 0.0741 & 0.0521 \\
\rowcolor{gray!20}
      & RBF-Solver (3)     & 0.2202 & 0.1905 & 0.1500 & 0.1174 & 0.0932 & 0.0720 & 0.0503 \\
    \midrule
    \multicolumn{9}{l}{\textbf{Guidance Scale = 3.5}}\\[0.4ex]
      & DC-Solver (3)      & \textbf{0.3472} & \textbf{0.3220} & \textbf{0.2793} & \textbf{0.2426} & \textbf{0.2122} & \textbf{0.1748} & 0.1288 \\
\rowcolor{gray!20}
      & RBF-Solver (2)     & 0.3474 & 0.3239 & 0.2829 & 0.2462 & 0.2145 & 0.1756 & \textbf{0.1284} \\
\rowcolor{gray!20}
      & RBF-Solver (3)     & 0.3493 & 0.3276 & 0.2863 & 0.2488 & 0.2167 & 0.1769 & 0.1309 \\
    \midrule
    \multicolumn{9}{l}{\textbf{Guidance Scale = 5.5}}\\[0.4ex]
      & DC-Solver (3)      & 0.4663 & \textbf{0.4386} & \textbf{0.3963} & \textbf{0.3573} & \textbf{0.3294} & 0.3008 & 0.2360 \\
\rowcolor{gray!20}
      & RBF-Solver (2)     & 0.4659 & 0.4437 & 0.4018 & 0.3639 & 0.3302 & \textbf{0.2871} & \textbf{0.2267} \\
\rowcolor{gray!20}
      & RBF-Solver (3)     & \textbf{0.4647} & 0.4411 & 0.3989 & 0.3631 & 0.3315 & 0.2896 & 0.2333 \\
    \midrule
    \multicolumn{9}{l}{\textbf{Guidance Scale = 7.5}}\\[0.4ex]
      & DPM-Solver-v3 (2)  & 0.6252 & 0.5835 & 0.5191 & 0.4722 & 0.4307 & \textbf{0.3847} & \textbf{0.3229} \\
      & DC-Solver (3)      & 0.5840 & 0.5529 & 0.5159 & 0.4618 & 0.4388 & 0.3865 & 0.3539 \\
\rowcolor{gray!20}
      & RBF-Solver (2)     & 0.5664 & 0.5407 & 0.5002 & 0.4624 & 0.4310 & 0.3878 & 0.3257 \\
\rowcolor{gray!20}
      & RBF-Solver (3)     & \textbf{0.5655} & \textbf{0.5373} & \textbf{0.4956} & \textbf{0.4592} & \textbf{0.4265} & 0.3869 & 0.3273 \\
    \midrule
    \multicolumn{9}{l}{\textbf{Guidance Scale = 9.5}}\\[0.4ex]
      & DC-Solver (3)      & 0.7142 & 0.6697 & 0.6126 & 0.5850 & 0.5534 & 0.5130 & 0.4577 \\
\rowcolor{gray!20}
      & RBF-Solver (2)     & 0.6666 & 0.6347 & 0.5859 & 0.5487 & 0.5193 & 0.4757 & 0.4152 \\
\rowcolor{gray!20}
      & RBF-Solver (3)     & \textbf{0.6661} & \textbf{0.6327} & \textbf{0.5842} & \textbf{0.5462} & \textbf{0.5147} & \textbf{0.4724} & \textbf{0.4142} \\
    \bottomrule
  \end{tabular}

     \end{minipage}
\end{adjustbox}

\end{table*}

\end{document}